\documentclass{article} % For LaTeX2e
\PassOptionsToPackage{numbers, compress}{natbib}
\usepackage[preprint]{neurips_2025}

% Optional math commands from https://github.com/goodfeli/dlbook_notation.
%%%%% NEW MATH DEFINITIONS %%%%%

\usepackage{amsmath,amsfonts,bm}

% Mark sections of captions for referring to divisions of figures

% Highlight a newly defined term

% Figure reference, lower-case.

% Figure reference, capital. For start of sentence

% Section reference, lower-case.

% Section reference, capital.

% Reference to two sections.

% Reference to three sections.

% Reference to an equation, lower-case.
\def\eqref#1{equation~\ref{#1}}
% Reference to an equation, upper case

% A raw reference to an equation---avoid using if possible

% Reference to a chapter, lower-case.

% Reference to an equation, upper case.

% Reference to a range of chapters

% Reference to an algorithm, lower-case.

% Reference to an algorithm, upper case.

% Reference to a part, lower case

% Reference to a part, upper case

\def\1{\bm{1}}

% Random variables

% rm is already a command, just don't name any random variables m

% Random vectors

% Elements of random vectors

% Random matrices

% Elements of random matrices

% Vectors

% Elements of vectors

% Matrix

% Tensor
\DeclareMathAlphabet{\mathsfit}{\encodingdefault}{\sfdefault}{m}{sl}
\SetMathAlphabet{\mathsfit}{bold}{\encodingdefault}{\sfdefault}{bx}{n}

% Graph

% Sets

% Don't use a set called E, because this would be the same as our symbol
% for expectation.

% Entries of a matrix

% entries of a tensor
% Same font as tensor, without \bm wrapper

% The true underlying data generating distribution

% The empirical distribution defined by the training set

% The model distribution

% Stochastic autoencoder distributions

 % Laplace distribution

\newcommand{\E}{\mathbb{E}}

\newcommand{\R}{\mathbb{R}}

% Wolfram Mathworld says $L^2$ is for function spaces and $\ell^2$ is for vectors
% But then they seem to use $L^2$ for vectors throughout the site, and so does
% wikipedia.

 % See usage in notation.tex. Chosen to match Daphne's book.

% \DeclareMathOperator*{\argmax}{arg\,max}
% \DeclareMathOperator*{\argmin}{arg\,min}

% IMPORTED

\usepackage{multirow}
\usepackage[english]{babel}
\usepackage{amssymb,amsfonts}
\usepackage{float}
\usepackage{booktabs}
\usepackage{color}
\usepackage{listings}
\usepackage{rotating}
\usepackage{pifont}
\usepackage{stmaryrd}
\usepackage{array}
\usepackage{subcaption}
\usepackage{enumitem}

\usepackage{graphicx}
\usepackage{multicol}
\usepackage{framed}
\usepackage[
	operators,
	sets,
	landau,
	probability,
	notions,
	logic,
	ff,
	mm,
	advantage,
	events,
	complexity,
	asymptotics,
	keys,
	lambda]{cryptocode}
\usepackage{adjustbox}
\usepackage{varioref}
\usepackage{hyperref}
\usepackage{breakcites}
\usepackage{placeins}
\usepackage{orcidlink}
\usepackage{tcolorbox}

\usepackage{url} %
\usepackage{framed}
\usepackage{comment}
\usepackage[nameinlink,capitalize]{cleveref}

\usepackage{pgf, tikz}
\usetikzlibrary{arrows, automata, positioning}

\setlength{\parindent}{0em}
\floatstyle{boxed}
\setcounter{tocdepth}{2}

\usepackage{xcolor}
\definecolor{cadmiumorange}{rgb}{0.93, 0.53, 0.18}
\definecolor{emerald}{rgb}{0.31, 0.78, 0.47}
\definecolor{amaranth}{rgb}{0.9, 0.17, 0.31}
\definecolor{candypink}{rgb}{0.89, 0.44, 0.48}
\definecolor{caribbeangreen}{rgb}{0.0, 0.8, 0.6}
\definecolor{cornflowerblue}{rgb}{0.39, 0.58, 0.93}
\definecolor{limegreen}{rgb}{0.2, 0.8, 0.2}
\definecolor{mayablue}{rgb}{0.21,0.49,0.74}

\usepackage{utfsym}

\definecolor{ThemeOrange}{HTML}{FFA532}     % Base for takeaway background
\definecolor{ThemeGoldBrown}{HTML}{A98426}  % Frame for takeaway
\definecolor{ThemeNavy}{HTML}{0E2344}       % Primary link color
\definecolor{ThemeBlue}{HTML}{0073CA}       % URL link color
\definecolor{ThemePurple}{HTML}{8A05B7}    % Citation link color
\definecolor{ThemeGray}{HTML}{7f7f7f}       % File link color

\newcommand{\introtakeawaybox}[2][]{%
  \begin{tcolorbox}[%
    colback=ThemeOrange!10,
    colframe=ThemeGoldBrown!90,
    boxrule=0.5pt,
    arc=0mm,
    left=5pt,
    right=5pt,
    top=2pt,
    bottom=2pt,
    fontupper={\fontsize{9.5pt}{11.75pt}\selectfont}
  ]
    \IfNoValueTF{#1}{\textbf{Contributions:} #2}{\textbf{Contributions #1:} #2}%
  \end{tcolorbox}%
  \vspace*{-0.1cm}
}

\newcommand{\empiricaltakeawaybox}[2][]{%
  \begin{tcolorbox}[
    colback=ThemeOrange!10, %
    colframe=ThemeGoldBrown!90,
    float=t,
    boxrule=0.5pt,
    arc=0mm,
    left=5pt,
    right=5pt,
    top=2pt,
    bottom=2pt,
    fontupper={\fontsize{9.5pt}{11.75pt}\selectfont}
  ]
    \IfNoValueTF{#1}{\textbf{Empirical Takeaway:} #2}{\textbf{Empirical Takeaways #1:} #2} %
  \end{tcolorbox}%
  \vspace*{-0.1cm}
}

\newcommand{\takeawaybox}[2][]{%
  \begin{tcolorbox}[
    colback=ThemeOrange!10, %
    colframe=ThemeGoldBrown!90,
    boxrule=0.5pt,
    arc=0mm,
    left=5pt,
    right=5pt,
    top=2pt,
    bottom=2pt,
    fontupper={\fontsize{9.5pt}{11.75pt}\selectfont}
  ]
    \IfNoValueTF{#1}{\textbf{Takeaway:} #2}{\textbf{#1} #2} %
  \end{tcolorbox}%
  \vspace*{-0.1cm}
}
\hypersetup{
  linkcolor   = ThemeNavy,
  urlcolor    = ThemeBlue,
  citecolor   = ThemeBlue,
  filecolor   = ThemeGray,
  colorlinks  = true,
  breaklinks  = true,
  bookmarksopen = true
}

\usepackage{epigraph}

\newcommand{\ignore}[1]{}

\newcommand{\eqdef}{\stackrel{\mathrm{def}}{=}}

\def\<{\left\langle}
\def\>{\right\rangle}
\def\[{\left[}
\def\]{\right]}
\def\({\left(}
\def\){\right)}

% text abbrevs

% std math stuff

% \newcommand{\Tr}{\mathop{\bf Tr}}

\newcommand{\clip}{\textbf{clip}}

% probability stuff

% \newcommand{\Var}{\text{Var}}  

\usepackage{amsthm}
\newtheorem{thm}{Theorem}

\newtheorem{asp}{Assumption}
\newtheorem{lemma}[thm]{Lemma}

% \theoremstyle{definition}
% \newtheorem{definition}{Definition}[section]

%%%%% NEW MATH DEFINITIONS %%%%%

\usepackage{amsmath,amsfonts,bm}

% Mark sections of captions for referring to divisions of figures

% Figure reference, lower-case.

% Figure reference, capital. For start of sentence

% Section reference, lower-case.

% Section reference, capital.

% Reference to two sections.

% Reference to three sections.

% Reference to an equation, lower-case.
\def\eqref#1{equation~\ref{#1}}
% Reference to an equation, upper case

% A raw reference to an equation---avoid using if possible

% Reference to a chapter, lower-case.

% Reference to an equation, upper case.

% Reference to a range of chapters

% Reference to an algorithm, lower-case.

% Reference to an algorithm, upper case.

% Reference to a part, lower case

% Reference to a part, upper case

\def\1{\bm{1}}

% Random variables

% rm is already a command, just don't name any random variables m

% Random vectors

% Elements of random vectors

% Random matrices

% Elements of random matrices

% Vectors

% Elements of vectors

% Matrix

% Tensor
\DeclareMathAlphabet{\mathsfit}{\encodingdefault}{\sfdefault}{m}{sl}
\SetMathAlphabet{\mathsfit}{bold}{\encodingdefault}{\sfdefault}{bx}{n}

% Graph

% Sets

% Don't use a set called E, because this would be the same as our symbol
% for expectation.

% Entries of a matrix

% entries of a tensor
% Same font as tensor, without \bm wrapper

% \let\argmax\relax
% \DeclareMathOperator*{\argmax}{arg\,max}
% \let\argmin\relax
% \DeclareMathOperator*{\argmin}{arg\,min}

\renewcommand{\eqref}[1]{(\ref{#1})}

\usepackage{hyperref}
\usepackage{url}

% if you need to pass options to natbib, use, e.g.:

% before loading neurips_2025

% ready for submission
\usepackage[hang,flushmargin]{footmisc}
\usepackage{pgfplotstable}
\usepackage{xspace}
\usepackage{paracol}
\usepackage{bibunits}
\usepackage[toc, title]{appendix}
\usepackage{minitoc}

\usepackage[colorinlistoftodos,bordercolor=orange,backgroundcolor=orange!20,linecolor=orange,textsize=scriptsize]{todonotes}

\usepackage[utf8]{inputenc} % allow utf-8 input
\usepackage[T1]{fontenc}    % use 8-bit T1 fonts
\usepackage{hyperref}       % hyperlinks
\usepackage{url}            % simple URL typesetting
\usepackage{booktabs}       % professional-quality tables
\usepackage{amsfonts}       % blackboard math symbols
\usepackage{nicefrac}       % compact symbols for 1/2, etc.
\usepackage{microtype}      % microtypography
\definecolor{grey_plot}{HTML}{7f7f7f}
\definecolor{red_plot}{HTML}{EB2644}
% colors
\usepackage{algorithm}
\usepackage[noend]{algpseudocode}
\usepackage{setspace}

\algnewcommand{\IfThenElse}[3]{% \IfThenElse{<if>}{<then>}{<else>}
  \algorithmicif\ #1\ \algorithmicthen\ #2\ \algorithmicelse\ #3}

\newcommand{\method}{\texttt{MT-DAO}\xspace}
\newcommand{\methodadam}{\method-\texttt{Adam}\xspace}
\newcommand{\methodadopt}{\method-\texttt{ADOPT}\xspace}
\newcommand{\methodsgdm}{\method-\texttt{SGDM}\xspace}
\newcommand{\adam}{\texttt{Adam}\xspace}
\newcommand{\muon}{\texttt{Muon}\xspace}
\newcommand{\dion}{\texttt{Dion}\xspace}
\newcommand{\ddp}{\texttt{DDP}\xspace}

\newcommand{\localsgd}{\texttt{Local} \texttt{SGD}\xspace}

\newcommand{\qhm}{\texttt{QHM}\xspace}
\newcommand{\ema}{\texttt{EMA}\xspace}
\newcommand{\nesterov}{\texttt{Nesterov}\xspace}
\newcommand{\ademamix}{\texttt{AdEMAMix}\xspace}
\newcommand{\aggmo}{\texttt{AggMo}\xspace}
\newcommand{\desloc}{\texttt{DES}-\texttt{LOC}\xspace}
\newcommand{\deslocadam}{\texttt{DES}-\texttt{LOC}-\texttt{ADAM}\xspace}
\newcommand{\deslocadopt}{\texttt{DES}-\texttt{LOC}-\texttt{ADOPT}\xspace}

\newcommand{\localadopt}{\texttt{Local} \texttt{ADOPT}\xspace}
\newcommand{\localadam}{\texttt{Local} \texttt{Adam}\xspace}

\newcommand{\adopt}{\texttt{ADOPT}\xspace}
\newcommand{\fullmethod}{Multi-timescale Distributed Adaptive Optimizers\xspace}

\title{\method: \fullmethod with Local Updates}

\let\hypersetup\truehypersetup
\makeatletter
\newcommand{\myfnsymbol}[1]{%
  \expandafter\@myfnsymbol\csname c@#1\endcsname
}
% Mapping of how the \thanks symbols will be interpreted sequentially
\newcommand{\@myfnsymbol}[1]{%
  \ifcase #1
    % 0
  \or 1% 1
  \or 2% 2
  \or 3% 3
  \or 4% 4
  \or 5%
\or 6%
  \or \TextOrMath{\textasteriskcentered}{*}% 3
  \or \TextOrMath{\textasteriskcentered}{*}\TextOrMath{\textasteriskcentered}{*}% 4
  \or \TextOrMath{\textdagger}{\dagger}% 5
  \or \TextOrMath{\textasteriskcentered}{*},\TextOrMath{\textasteriskcentered}{*}\TextOrMath{\textasteriskcentered}{*}% 6
  \fi
}
% Just to make things explicit in the code what it means
\newcommand{\affiliationA}{\@myfnsymbol{1}}
\newcommand{\affiliationB}{\@myfnsymbol{2}}
\newcommand{\affiliationC}{\@myfnsymbol{3}}
\newcommand{\affiliationD}{\@myfnsymbol{4}}
\newcommand{\affiliationE}{\@myfnsymbol{5}}
\newcommand{\affiliationF}{\@myfnsymbol{6}}
\newcommand{\equalcontributor}{\@myfnsymbol{7}}
\newcommand{\biequalcontributor}{\@myfnsymbol{8}}
\newcommand{\correspondingA}{\@myfnsymbol{9}}

\author{
Alex Iacob\textsuperscript{\correspondingA,\affiliationA,\affiliationB}
\And  
Andrej Jovanović\textsuperscript{\equalcontributor,\affiliationA}
\And Mher Safaryan\textsuperscript{\equalcontributor,\affiliationC}
\And  
Meghdad Kurmanji\textsuperscript{\affiliationA}
\And  
Lorenzo Sani\textsuperscript{\affiliationA,\affiliationB}
\And  
Samuel Horváth\textsuperscript{\affiliationD}
\And 
William F. Shen\textsuperscript{\affiliationA}
\And 
Xinchi Qiu\textsuperscript{\affiliationA}
\And 
Nicholas D. Lane\textsuperscript{\affiliationA,\affiliationB}
}
\makeatother
\addtocontents{toc}{\protect\setcounter{tocdepth}{2}} 

\begin{document}
\doparttoc
\faketableofcontents

{
\begingroup
\begin{figure}[t]
\vspace{-2.25cm}
    \quad
    \begin{subfigure}{0.1275\textwidth}
        \includegraphics[width=\textwidth]{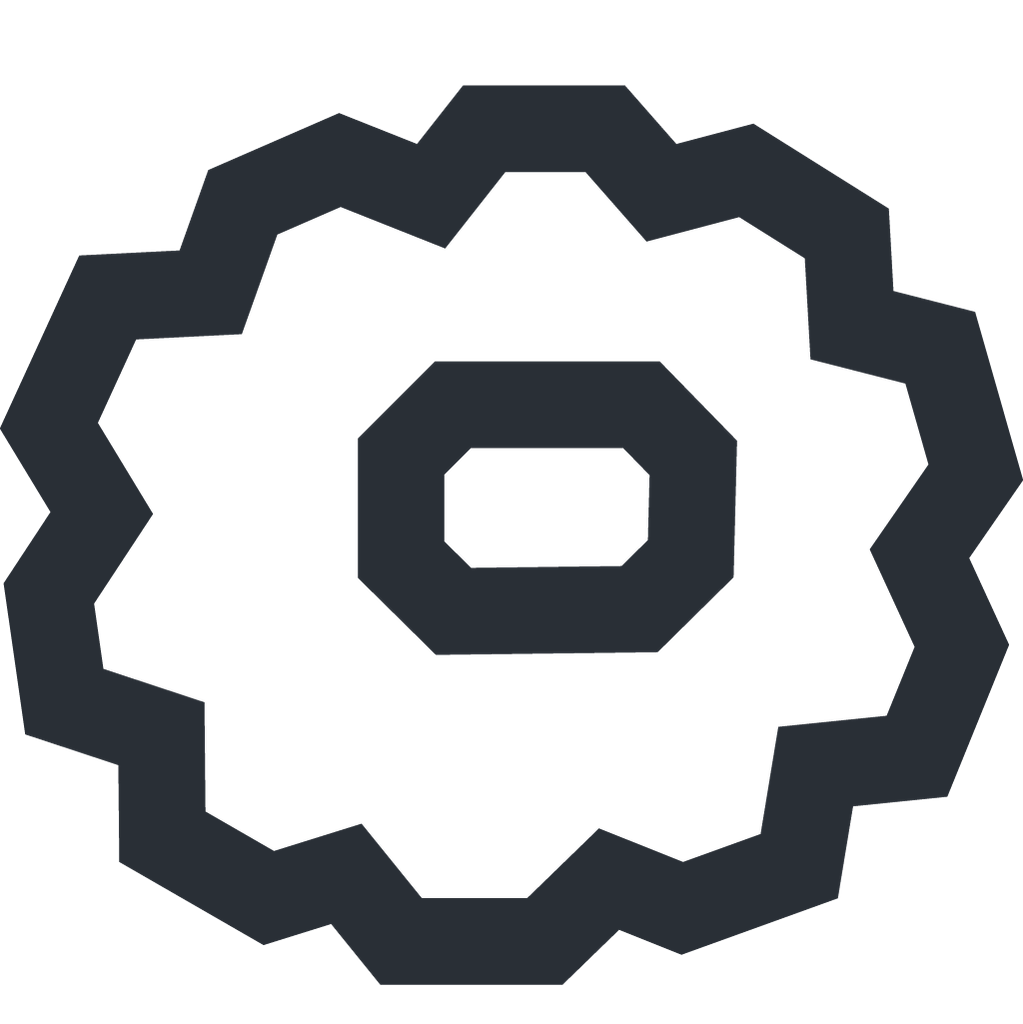}
    \end{subfigure}
    \hfill
    \begin{subfigure}{0.1\textwidth}
        \includegraphics[width=\textwidth]{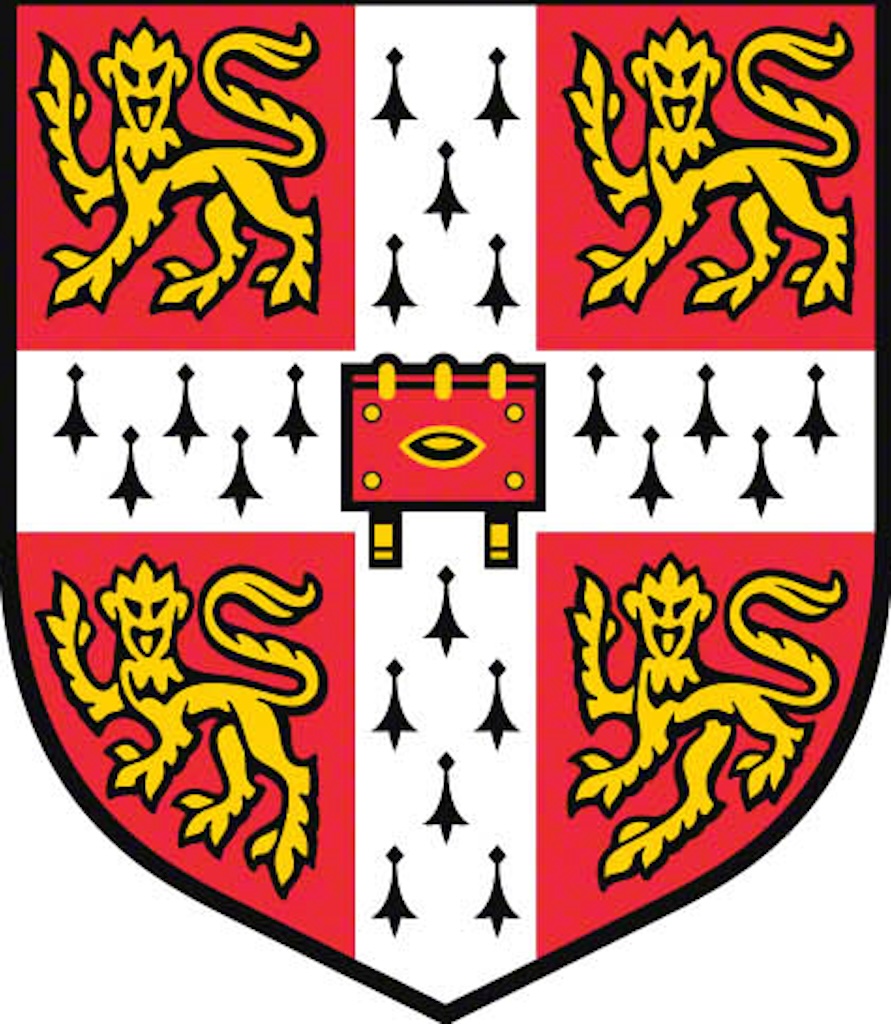}
    \end{subfigure}
    \hfill
    \begin{subfigure}{0.1275\textwidth}
        \includegraphics[width=\textwidth]{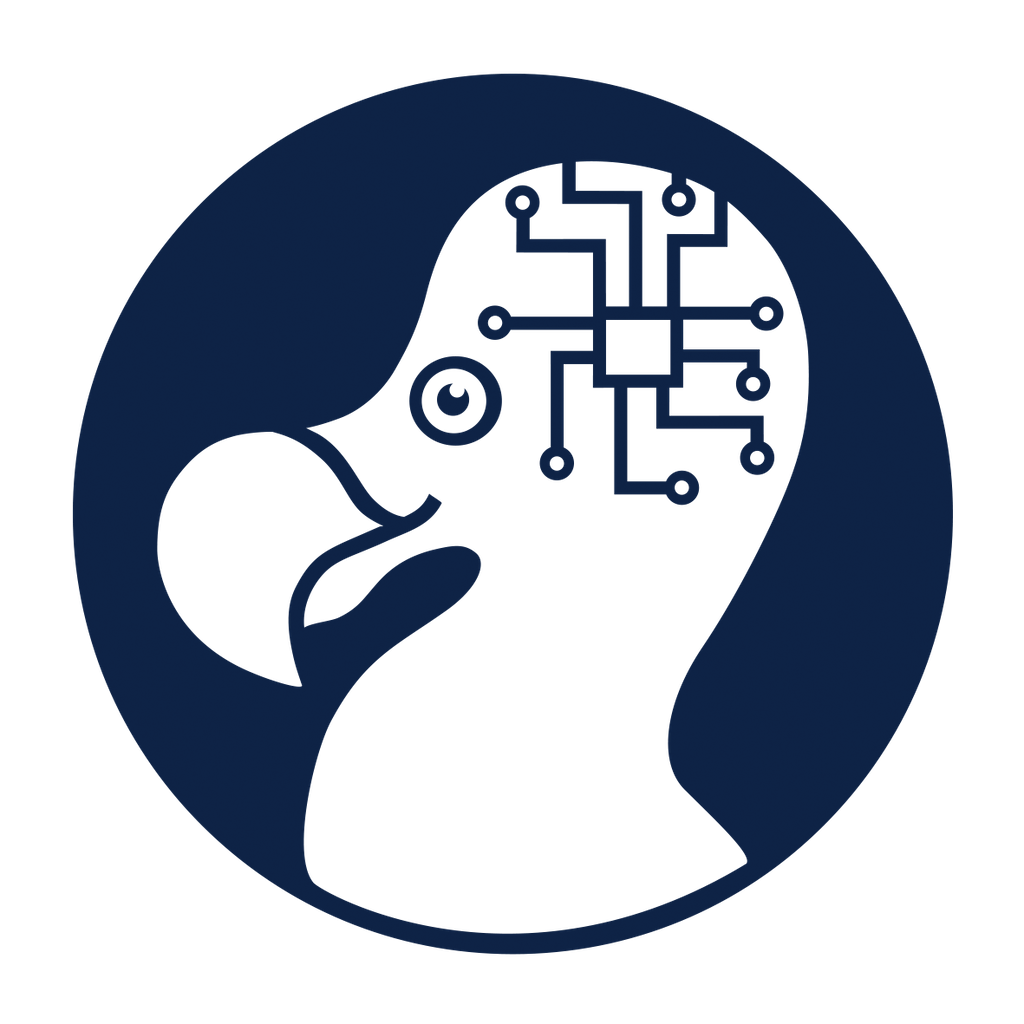}
    \end{subfigure}
    % \\
    % \vspace{-0.5cm}
    % \hrulefill
    \vspace{-0.75cm}
\end{figure}

\endgroup
}
\maketitle

\renewcommand{\thefootnote}{\myfnsymbol{footnote}}
\footnotetext{\textsuperscript{\textdagger}\href{mailto:aai30@cam.ac.uk}{\nolinkurl{aai30@cam.ac.uk}}; \textsuperscript{*}Equal contributions; \textsuperscript{1}University of Cambridge; \textsuperscript{2}Flower Labs; \textsuperscript{3}Institute of Science and Technology Austria;  \textsuperscript{4}Mohamed bin Zayed University of Artificial Intelligence}

\begin{abstract}

Training large models with distributed data parallelism (\ddp) requires frequent communication of gradients across workers, which can saturate bandwidth. \emph{Infrequent} communication strategies (e.g., Local SGD) reduce this overhead but, when applied to adaptive optimizers, often suffer a performance gap relative to \emph{fully synchronous} \ddp.
We trace this gap to a time-scale mismatch: the optimizer's fast-moving momentum, tuned for frequent updates, decays too quickly to smooth gradients over long intervals, leading to noise-dominated optimization. To address this, we propose \method, a family of optimizers that employs multiple slow- and fast-moving first momenta or the gradient to track update dynamics across different time scales, for which we provide the first convergence guarantees. 
Empirically, for language-model pre-training, this eliminates the performance gap with \ddp, outperforming infrequent-communication baselines in perplexity and reducing iso-token wall-clock time relative to \ddp by $6\text{--}27\%$ on Ethernet interconnects. At the $720$M scale, \method reaches a target perplexity in $24\%$ fewer steps and $35\%$ less time than the single-momentum \ddp baseline. \method enables effective cross-datacenter training and training over wide geographic areas.
\end{abstract}

\section{Introduction}
\label{sec:intro}

The scalability of training infrastructure is impeded by the communication required for Distributed Data Parallelism (\ddp). Infrequent parameter-averaging strategies like Local SGD~\citep{LocalSGD} reduce this overhead, yet extensions to adaptive optimizers~\citep{LocalAdam,DiLoCoScalingLaws} show a performance gap relative to \ddp~\citep{Photon,DiLoCoScalingLaws}. \citet{DiLoCoScalingLaws} finds that infrequent averaging, even with Nesterov momentum at round boundaries~\citep{FedOPT}, underperforms \ddp for models up to $2.4$B parameters and worker counts exceeding $2$.

We hypothesize this gap stems from a \emph{timescale mismatch}. Optimizers use fast-moving momenta (low $\beta_1\approx 0.9$) that smooth high-frequency noise under \ddp but \emph{decay too rapidly} between infrequent synchronizations. This decay prevents a stable shared trajectory, leading to our central question:

\begin{quote}
\centering
\emph{
Can a distributed adaptive optimizer with $\beta$'s suited for infrequent communication close the performance gap with \ddp while providing convergence guarantees?
}
\end{quote}

We propose \method, which brings multi-momentum optimizers~\citep{AggMo,AdEMAMix} to the distributed, infrequent-communication regime. \method resolves the mismatch by using slow-moving momenta (e.g., $\beta \approx 0.999$) to preserve trajectory information across synchronizations while remaining responsive via a fast momentum. In its simplest quasi-hyperbolic form~\citep{QHM}, \method uses the current gradient as the fast momentum, adds no memory or communication overhead, and requires only one additional hyperparameter. Crucially, unlike methods that use a momentum-based outer optimizer~\citep{FedOPT,DiLoCo} at synchronization boundaries, \method needs \textbf{no extra memory buffers} or multiple outer hyperparameters.

Empirically, slow momentum acts as a regularizer, improving update alignment by increasing cosine similarity between worker pseudo-gradients~\citep{FedOPT}. This stability lets \method improve perplexity over low-communication baselines. Furthermore, \method matches or exceeds its \ddp analogue at larger scales, \textbf{closing the perplexity gap} for models up to $720$M parameters.

\introtakeawaybox{
\begin{enumerate}[leftmargin=*]
\item \textbf{A Provably Convergent Multi-Timescale Framework.} We introduce \method, the first framework to integrate multi-momentum strategies into distributed settings, with convergence guarantees for heterogeneous momentum timescales and synchronization frequencies.

\item \textbf{Closing the Performance Gap Efficiently.} \method matches synchronous \ddp, outperforming baselines in perplexity, reducing wall-clock time by $6\text{--}27\%$; at $720$M it reaches a target perplexity in $24\%$ fewer steps and $35\%$ less time than a DDP baseline.

\item \textbf{Noise Suppression and Information Retention.} \method's slow momentum preserves mutual information across rounds and reduces inter-worker momentum variance.

\item \textbf{Resilience to Infrequent Communication.} \method lowers the rate of change of parameters and momenta, improving tolerance to low communication frequencies.

\item \textbf{Alignment of Worker Trajectories.} \method increases cosine similarity of local worker update trajectories, which reduces worker drift and aligns the overall model update.

\end{enumerate}
}

\section{\fullmethod~(\method)}

\begin{figure}[h] \centering \begin{subfigure}[b]{0.48\textwidth} \centering \includegraphics[width=\linewidth]{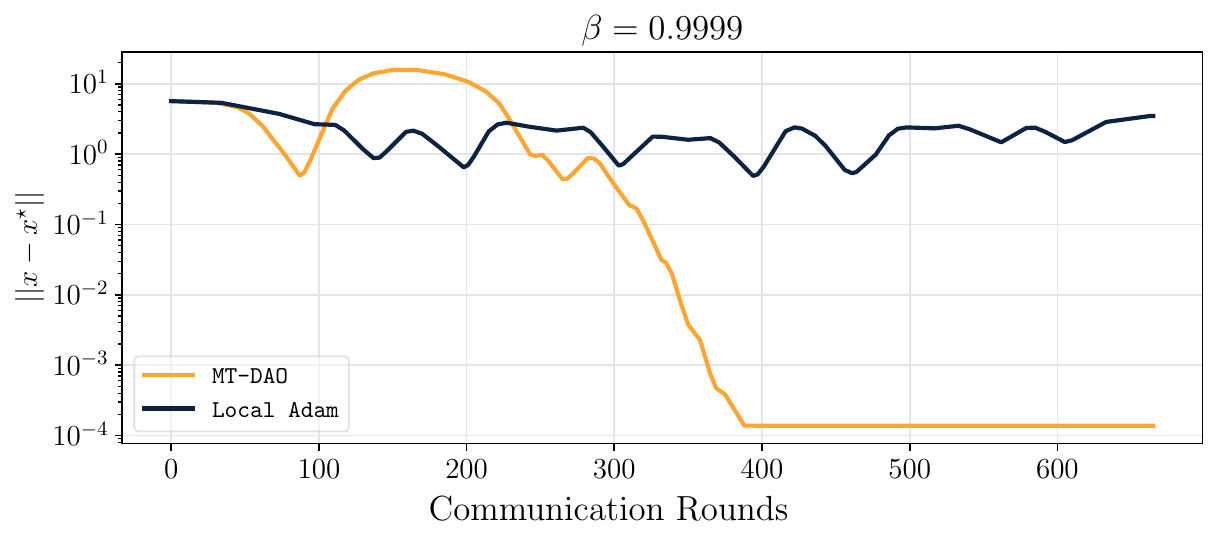} \caption{Distance to optimum vs. steps.} \label{fig:toy_dist_vs_steps_slow} \end{subfigure} \hfill %
\begin{subfigure}[b]{0.48\textwidth} \centering \includegraphics[width=\linewidth]{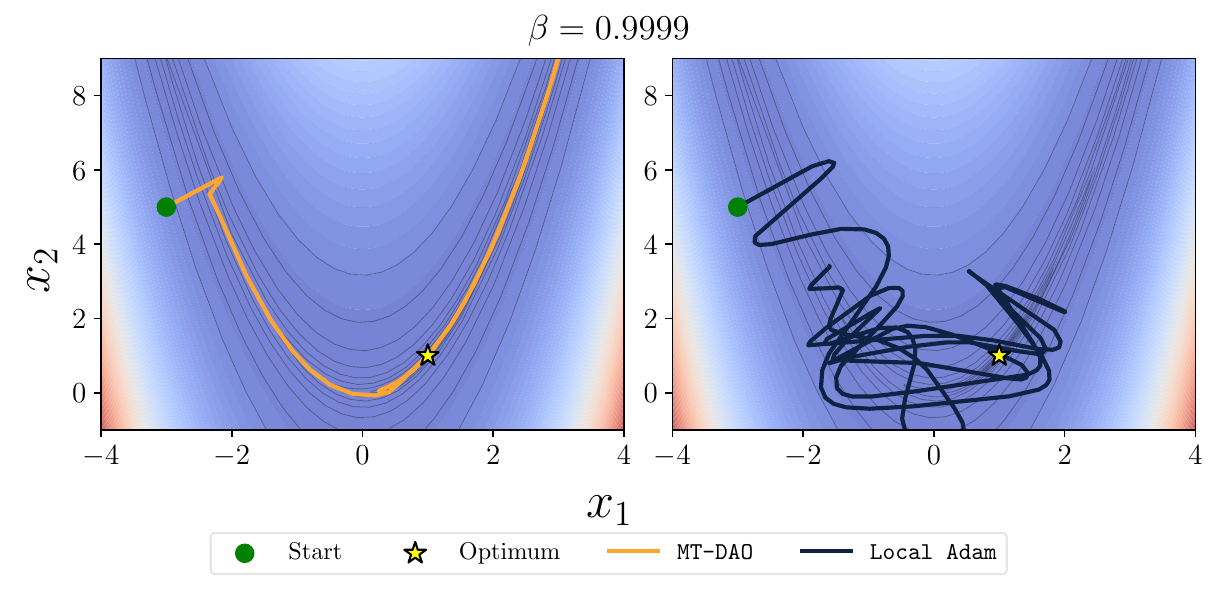} \caption{Contour plot of trajectories.} \label{fig:toy_contour_slow} \end{subfigure} \caption{To highlight the stability benefit of \method, we illustrate its performance on a toy non-convex problem. Crucially, under a high momentum decay of $\beta=0.9999$, prior stateful methods like \texttt{Local Adam}~\citep{LocalAdam} become unstable and fail to converge, whereas \texttt{MT-DAO} maintains its rapid and stable convergence. We optimize the non-convex Rosenbrock function $f(x_1, x_2) = (1 - x_1)^2 + 100(x_2 - x_1^2)^2$ with $M=256$ workers and IID Gaussian noise ($\sigma=2$).} \label{fig:toy_example_method_slow} \end{figure}

Our analysis begins by characterizing the conflict between optimizer momentum timescales and the long communication intervals of infrequent-communication training. We consider the standard setting. Let $M$ be the total number of workers and $K$ be the number of local updates performed by each worker per round. The goal is to minimize a global objective $f(x) := \frac{1}{M}\sum_{m=1}^M f_m(x)$ over the model parameters $x$, where each $f_m(x)$ is the local objective $\mathbb{E}_{\xi\sim\mathcal{D}_m} [F_m(x;\xi)]$ for a data sample $\xi$ drawn from data distribution $\mathcal{D}_m$. We posit that the performance degradation in this regime stems from a fundamental mismatch between the optimizer's memory and the communication period.

\subsection{Timescale Mismatch}
The first momentum in adaptive optimizers is an Exponential Moving Average (\texttt{EMA}). Let $u_t$ be the momentum, $\beta$ be the momentum decay factor, and $g_t$ be the gradient at step $t$. The momentum is then given by $u_t = \beta u_{t-1} + (1-\beta)g_t$. Its effective memory is quantified by its half-life, $\tau_{0.5}(\beta)=\frac{\ln 0.5}{\ln\beta}$, the number of steps until the momentum decays by $50\%$~\citep{AdEMAMix}. A typical $\beta_1=0.9$ yields $\tau_{0.5} \approx 6.6$ steps, suitable for frequent communication. A conflict arises when $K \gg \tau_{0.5}$, common in communication-efficient training ($K \in [32, 512]$). Unrolling the momentum update over $K$ local steps from a synchronized state $u_t$ gives: $ u_{t+K} = \beta^K u_t + (1-\beta)\sum_{k=0}^{K-1} \beta^k g_{t+K-k}$. The influence of the global state $u_t$ on the final local state $u_{t+K}$ decays with $\beta^K$. For $\beta_1=0.9$ and $K=32$, this factor is negligible ($\approx 0.03$). In such a setting, the worker becomes reliant on high-variance, potentially biased local gradients. For example, if noise is independent across workers, the variance of the final local momentum is $\text{Var}(u_{t+K}) = \frac{1-\beta}{1+\beta}(1-\beta^{2K})\sigma_m^2$~(see \cref{app:sec:small_derivations}), with $\sigma_m^2$ being the gradient variance of worker $m$. As $\beta \to 1$ the factor $\frac{1-\beta}{1+\beta}$ suppresses variance. For $\beta \to 0$, variance approaches $\sigma_m^2$, exposing local updates to noise-induced instability.

An alternative interpretation of this memory decay is offered by information theory, which quantifies the preserved signal between the initial global momentum $U_t$ and the final local momentum $U_{t+K}$ via their mutual information, $I(U_{t+K}; U_t)$. By modeling the local updates as a linear process $U_{t+K} = \beta^K U_t + L$, where $L$ is the accumulated local gradient noise, a closed-form expression can be derived when assuming Gaussian distributions for the states and noise with covariances $\Sigma_{U_t}$ and $\Sigma_L$ respectively. The mutual information is $I(U_{t+K}; U_t) = \frac{1}{2}\log \det(I + \beta^{2K} \Sigma_{U_t} \Sigma_{L}^{-1})$~(see \cref{app:sec:small_derivations}). As $\beta^K \to 0$, mutual information vanishes, implying statistical independence with respect to the momentum at the start of the interval. As $\beta^K\to1.0$, the initial signal is preserved.

\subsection{The Challenge of High-$\beta$ Optimizers}

Although both arguments above encourage the use of large $\beta$\ values as a solution to this timescale mismatch problem, previous work has shown that high-momentum optimizers are often unfeasible in practice~\citep{AggMo}. Without modification, they are insufficiently responsive to changes in the loss landscape and are prone to oscillations~(see the top of \cref{fig:quadratic_comparison_qhm_vs_mom}), yielding subpar performance. 

\begin{figure}[H]
    \centering
    \includegraphics[width=1.0\linewidth]{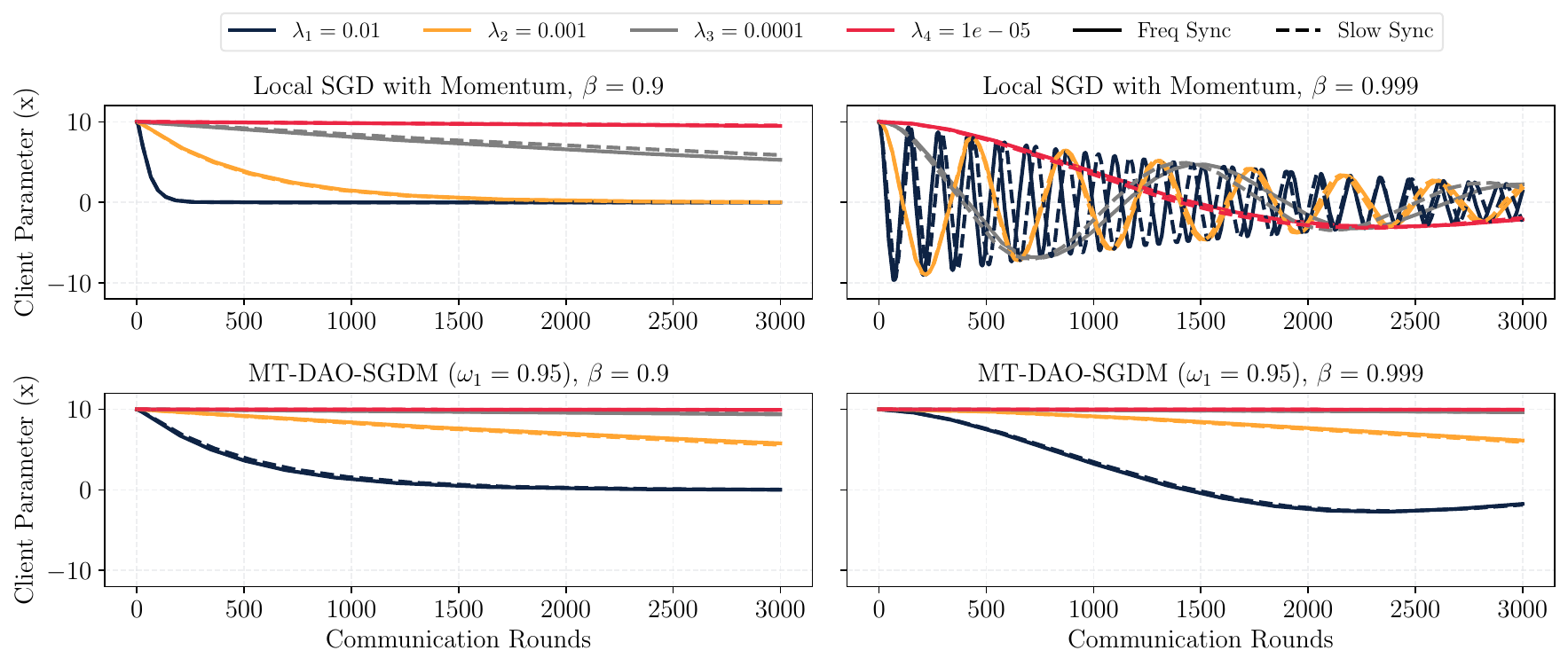}
    \caption{Comparison of Local SGDM with standard momentum \textbf{(top)} and \methodsgdm ($N=1$ momentum, $\omega_1=0.95$) \textbf{(bottom)} for the function $f(x;\lambda)=\frac{1}{2}{\lambda x^2}$ with $x \in \mathbb{R}$ for various parameters controlling the rate of change $\lambda$ and and sync frequencies (frequent: solid, infrequent/slow: dashed). While both optimizers are stable at low momentum ($\beta=0.9$), at high momentum ($\beta=0.999$) Local SGD with standard momentum becomes unstable for high $\lambda$ while \methodsgdm remains stable.}
    \label{fig:quadratic_comparison_qhm_vs_mom}
\end{figure}

This instability motivates using multi-momentum methods~\citep{AggMo,QHM,AdEMAMix}. Such methods compose the optimizer update as a linear combination of slow and fast-moving first momenta, or the gradient in the case of Quasi-hyperbolic methods~\citep{QHM}. This avoids common pitfalls of high-momentum methods by responding to changes in the loss landscape via the fast momentum/gradient. Recent works~\citep{AdEMAMix,BenchmarkingOptimizersLLM} have shown that such optimizers can provide \texttt{SOTA} results, outperforming popular optimizers such as \adam~\citep{AdamW}, \muon~\citep{Muon}, and \dion~\citep{Dion}.

\subsection{The \method Method and Algorithm}\label{subsec:method_algorithm}

\begin{algorithm}[h]
\caption{\methodadam, local bias correction omitted to save space.}
\label{alg:method_adam}
\footnotesize
% \begin{onehalfspace}
\begin{algorithmic}[1]
 \Require \textbf{Model tensors, hyper-parameters} \\
        \quad $x_0 \in \mathbb{R}^{d}$, $\{\bar{u}^{j}_{-1}\}_{j=1}^{N} \in (\mathbb{R}^d)^N$, $\bar{v}_{-1} \in \mathbb{R}^d$ — initial params, $N$ first momenta, second momentum\\
        \quad $\{\beta_{1,j}\}_{j=1}^N, \beta_2 \in [0,1)$ — decay rates for each momentum state \\
        \quad \textcolor{purple}{$\{\omega_j\}_{j=1}^N \in [0,1]$} — convex combination coefficients for first momenta, $\sum_{j=1}^N w_j \leq 1.0$ \\
        \quad $\rho \in \mathbb{R}_{+}$, $\{\eta_t\}_{t=0}^{T-1}$ — clipping radius, learning-rate schedule\\
        \quad $T,M \in \mathbb{N}_{+}$ — total optimization steps and number of workers\\
        \quad $K_x, \{K_j\}_{j=1}^{N}, K_v \in (\mathbb{N}_{+})^{N+2}$ — communication periods for parameters and states \\
        \quad $\texttt{OuterOpt}:\mathbb{R}^d\to\mathbb{R}^d$ — update params using an outer optimizer, averaging by default 
    
 \Ensure $x_T,\;\{u_{T-1}^{j}\}_{j=1}^{N},\;v_{T-1}$

 \State \textbf{for each worker} $m$: initialize $x_0^m, \{u_{-1}^{j,m}\}, v_{-1}^m$
 \For{$t = 0,\dots,T-1$}
    \ForAll{workers $m=0,\dots,M-1$ \textbf{in parallel}}
        \State $\hat{g}_t^m \gets \clip(\nabla F(x_t^m;\xi_t^m),\rho)$ \hfill\textcolor{gray}{\scriptsize clipped stochastic gradient}

        \For{$j = 1$ \textbf{to} $N$} \hfill\textcolor{gray}{\scriptsize update N first momenta}
            \State \textcolor{purple}{$u_t^{j,m}  \gets \beta_{1,j} \bar{u}_{t-1}^{j} + (1-\beta_{1,j}) \hat{g}_t^m$ }
            \State \textcolor{purple}{$\bar{u}_{t-1}^{j}  \gets$ \textbf{if} $(t \bmod K_j = 0)$ \textbf{then} $\mathbb{E}_m[u_{t-1}^{j,m}]$ \textbf{else} $u_{t-1}^{j,m}$ \hfill\textcolor{purple}{\scriptsize sync $u^j$ every $K_j$ steps} }
          
        \EndFor
        \State $v^m_t \gets \beta_2\bar{v}_{t-1} + (1-\beta_2)(g^m_t)^2$
        \State $\bar{v}_{t-1} \gets$  \textbf{if} $(t \bmod K_v = 0)$ \textbf{then} $\mathbb{E}_m[v_{t-1}^{m}]$ \textbf{else} $v_{t-1}^{m}$ \hfill\textcolor{gray}{\scriptsize sync $v$ every $K_v$ steps}

        \State \textcolor{purple}{$\Delta_t^m \gets \frac{1}{\sqrt{v_t^m}+\epsilon} \left[ (1 - \sum_{j=1}^N \omega_j)  \hat{g}_t^m + \sum_{j=1}^N \omega_j u_t^{j,m} \right]$} \hfill\textcolor{purple}{\scriptsize form combined update direction}
        \State $x_{t+1}^m \gets \bar{x}_t - \eta_t \Delta_t^m$
        \State $\bar{x}_t \gets$  \textbf{if} $(t \bmod K_x = 0) $ \textbf{then} \texttt{OuterOpt}($\mathbb{E}_m[x_t^{m}]$ ) \textbf{else} $x_t^{m}$ \hfill\textcolor{gray}{\scriptsize sync $x$ every $K_x$ steps}
  
    \EndFor
 \EndFor
\end{algorithmic}
% \end{onehalfspace}
\end{algorithm}
Based on this analysis, we formalize the \method framework in Algorithm \ref{alg:method_adam} for \adam with a variant for \adopt in \cref{alg:method_adopt} and one for SGD with Momentum~(\texttt{SGDM}) presented in \cref{alg:method_sgdm_non_prob}. It accommodates adaptive optimizers with $N$ first-order momenta $\{u^{j}\}$ and a single second-order momentum $v$. The parameter update is driven by a convex combination with hyper-parameters $\{\omega_j\}$ of these $N$ preconditioned momenta and the preconditioned current gradient, which receives the remaining weight $1 - \sum_{j=1}^N \omega_j$. We highlight these additions in \textcolor{purple}{purple}. This inclusion of the current gradient term effectively implements the Quasi-hyperbolic Momentum~(\qhm) structure within this generalized multi-momentum framework. The $\texttt{OuterOpt}$ procedure represents arbitrary parameter optimizers such as Federated Averaging~\citep{fedavg}, \nesterov Momentum~\citep{FedMOM}, or $\texttt{FedOPT}$~\citep{FedOPT}. Unless stated otherwise, our analysis and arguments refer to using averaging to align with previous converge analyses~\citep{LocalAdam,DES-LOC}.  \methodadam reduces communication costs by $(\frac{1}{K_x}+\sum_{j=1}^N \frac{1}{K_j} + \frac{1}{K_v})^{-1}$ over \ddp.

This generalized framework recovers previous distributed adaptive optimizers~\citep{LocalSGD,DiLoCo,LocalAdam,DES-LOC}. It also introduces \textbf{the first-ever formulations for provably convergent distributed variants of multi-momentum optimizers}~\citep{AggMo,QHM,AdEMAMix}. \Cref{fig:quadratic_comparison_qhm_vs_mom}~(bottom) shows an example of \methodsgdm converging for both high and low $\beta_1$ with a quasi-hyperbolic formulation while the \localsgd with momentum averaging method fails for high $\beta_1$. To highlight the stability of \methodadam, \Cref{fig:toy_example_method_slow} illustrates its convergence on a common toy non-convex problem~\citep{AdEMAMix} under high momentum ($\beta=0.9999$), a setting where prior provably convergent methods like \localadam~\citep{LocalAdam} become unstable and do not reach the optimum.

\section{Convergence Guarantees for \method}\label{sec:theory}

This section provides a theoretical convergence analysis for the proposed \method approach using the SGDM optimizer. The analysis, detailed in \cref{app:proof-sgdm}, relies on the following standard assumptions.

\begin{asp}[Lower bound  and smoothness]\label{ass:smooth}
    The overall loss function $f\colon\R^d\to\R$ is lower bounded by some $f^{*} \in \mathbb{R}$ and all local loss functions $f_m$ are $L$-smooth:
    $$\|\nabla f_m(x) - \nabla f_m(y)\| \leq L \|x-y\|, \quad \text{for any } x,y\in\R^d.$$ 
\end{asp}

\begin{asp}[Unbiased noise with bounded stochastic variance]\label{ass:boundgrad}
    The stochastic gradient $g^m$ of local loss function $f_m$ computed by machine $m$ is unbiased and the noise has bounded variance:
    $$\E[g^m] = \nabla f_m(x), \quad \E[\|g^m - \nabla f_m(x)\|^2] \le \sigma^2, \quad \text{for any } x\in\R^d.$$
\end{asp}

\begin{asp}[Bounded heterogeneity]\label{ass:het}
    For any $x\in\R^d$, the heterogeneity is bounded by
    $$\textstyle\frac{1}{M}\sum_{m=1}^M\|\nabla f_m(x) \|^2 \le G^2 + B^2\|\nabla f(x)\|^2.$$
\end{asp}

These are standard assumptions in smooth non-convex optimization \citep{Yu2019,pmlr-v119-karimireddy20a,wang2021fieldguidefederatedoptimization,Yuan2022}, covering homogeneous data as a special case ($G^2=0, B^2=1$). For analytical tractability, we model periodic synchronization every $K$ steps as a probabilistic event. Model parameters are averaged with probability $p_x = 1/K_x$, the $j$-th momentum is averaged with probability $p_j = 1/K_j$. The gradient is treated as a momentum with $\beta=0$.

\begin{thm}
Let Assumptions \ref{ass:smooth}, \ref{ass:boundgrad} and \ref{ass:het} hold. Then, choosing the step size $\eta = \min(\eta_0, \frac{1}{\sqrt{T}})$ where $\eta_0 \eqdef 1/( 4L \max(\beta_{\omega}, 6\sqrt{\psi\max(1,B^2-1)}) )$ with constants
\begin{equation}\label{eta-psi}
\textstyle
\beta_{\omega} \eqdef \sum_{j=1}^N \frac{\omega_j \beta_j}{1-\beta_j},
\quad\text{and}\quad
% \psi \eqdef \frac{4(1-p_x)}{p_x^2} \left(1 + \sum_{j=1}^N \frac{\omega_j\beta_jp_j}{1-\beta_j + \beta_j p_j} \right),
\psi \eqdef
\frac{4(1-p_x)}{p_x^2} \sum_{j=1}^N \omega_j\frac{(1-\beta_j)(1-p_j)}{1-(1-p_j)\beta_j}
\end{equation}
the average iterates $x_t = \E_m[x_t^m]$ of \methodsgdm converge with the following rate:
\begin{equation}\label{rate-sgdm}
\textstyle
\frac{1}{T}\sum_{t=0}^{T-1}\E{\|\nabla f(x_t)\|^2}
\le \frac{4}{\sqrt{T}}\left(f(x_0) - f^*
+ \frac{L\sigma^2 }{2M} \right)
+ \mathcal{O}\left(\frac{1+\beta_{\omega}^2 + \psi}{T}\right).
\end{equation}
\end{thm}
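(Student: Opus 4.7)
The proof follows a virtual-iterate descent-lemma approach, specialized to the multi-momentum, multi-sync-frequency setting. Let $x_t := \mathbb{E}_m[x_t^m]$ and $u_t^j := \mathbb{E}_m[u_t^{j,m}]$; by linearity of expectation over the probabilistic sync, both satisfy clean deterministic recursions
\begin{equation*}
x_{t+1} = x_t - \eta \bar{\Delta}_t,
\qquad
u_t^j = \beta_j u_{t-1}^j + (1-\beta_j) \mathbb{E}_m[g_t^m],
\end{equation*}
with $\bar{\Delta}_t = (1-\sum_j\omega_j)\mathbb{E}_m g_t^m + \sum_j \omega_j u_t^j$, \emph{regardless of whether an actual average occurs at step $t$}. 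Applying $L$-smoothness and taking expectations yields
\begin{equation*}
\mathbb{E} f(x_{t+1}) \le \mathbb{E} f(x_t) - \eta \mathbb{E}\langle \nabla f(x_t), \bar{\Delta}_t\rangle + \tfrac{L\eta^2}{2}\mathbb{E}\|\bar{\Delta}_t\|^2.
\end{equation*}
I would decompose $\bar{\Delta}_t$ into (i) signal $\nabla f(x_t)$, (ii) a conditionally zero-mean noise term, (iii) a \emph{momentum-lag} $\sum_j \omega_j(u_t^j - \nabla f(x_t))$ from EMA memory of past gradients, and (iv) a \emph{consensus-drift} contribution reflecting $\nabla f_m(x_t^m) \ne \nabla f_m(x_t)$. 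Standard Young's and Cauchy--Schwarz manipulations then extract leading $-\eta\|\nabla f(x_t)\|^2$ progress at the cost of bounding (iii) and (iv).

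\textbf{Consensus and momentum drift recursions.} The key technical step bounds the per-worker deviations $D_t^x := \mathbb{E}[\tfrac1M \sum_m \|x_t^m - x_t\|^2]$ and $D_t^j := \mathbb{E}[\tfrac1M\sum_m \|u_t^{j,m} - u_t^j\|^2]$. The probabilistic sync model makes these tractable: with probability $p_x$ the parameter deviation resets to zero, and otherwise grows by $\eta^2$ times a squared-update magnitude, giving $D_t^x \le (1-p_x)(D_{t-1}^x + \eta^2 \cdot \text{updates})$; solving this geometric recursion produces the characteristic $\tfrac{1-p_x}{p_x^2}$ prefactor in $\psi$. Analogously, each momentum deviation satisfies $D_t^j \le (1-p_j)(\beta_j^2 D_{t-1}^j + (1-\beta_j)^2 \cdot \text{noise+hetero})$, whose stationary bound is proportional to $\tfrac{(1-\beta_j)^2}{1-(1-p_j)\beta_j^2}$. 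Substituting into the squared-update bound and tracking how momentum drift feeds back into parameter drift yields the coupled summand $\omega_j \tfrac{(1-\beta_j)(1-p_j)}{1-(1-p_j)\beta_j}$ inside $\psi$.

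\textbf{Momentum lag and assembly.} For the lag $\|u_t^j - \nabla f(x_t)\|^2$, I would unroll $u_t^j$ as a weighted sum of past $\nabla f(x_s) + \text{noise}$ and use $L$-smoothness across $s \le t$ together with $\|x_t - x_s\| \le \eta\sum_{r=s}^{t-1}\|\bar{\Delta}_r\|$ to obtain a bound proportional to $L^2\eta^2 \tfrac{\beta_j^2}{(1-\beta_j)^2}$ times gradient-squared norms and noise variance; weighting by $\omega_j$ and summing produces the $\beta_\omega^2$ factor and justifies its appearance in $\eta_0$. Combining the noise contribution to $\|\bar{\Delta}_t\|^2$ (which averages across $M$ workers) delivers the $\tfrac{L\eta\sigma^2}{M}$ noise floor. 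Telescoping over $t=0,\dots,T-1$, choosing $\eta \le \eta_0$ small enough that the $\beta_\omega$ and $\psi$ contributions are dominated by the negative $\|\nabla f(x_t)\|^2$ progress (with $\max(1,B^2-1)$ absorbing the heterogeneity amplification via Assumption~\ref{ass:het}), and then setting $\eta = \min(\eta_0, 1/\sqrt{T})$ produces (\ref{rate-sgdm}).

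\textbf{Main obstacle.} The hardest part is the coupled system: parameter drift depends on momentum drift, which in turn depends on parameter drift through gradients evaluated at local iterates. Disentangling these $N+1$ recursions simultaneously, while preserving the \emph{exact} (not merely loose) coefficients that appear in $\psi$ and $\beta_\omega$, is the delicate calculation. The precise threshold $\eta_0 = 1/(4L\max(\beta_\omega, 6\sqrt{\psi\max(1,B^2-1)}))$ is exactly what is needed so that heterogeneity-amplified consensus drift is dominated by descent progress; arriving at this sharp form rather than a loose $\eta\lesssim 1/(L\cdot\mathrm{poly}(K,N,B))$ is the main bookkeeping challenge.
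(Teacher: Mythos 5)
Your high-level plan is sound and touches the right ingredients ($\beta_\omega$, $\psi$, the $\sigma^2/M$ noise floor, the step-size threshold), but it takes a genuinely different route from the paper and contains a couple of imprecisions that would need to be repaired to land on the stated constants.

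\textbf{Different decomposition.} You propose applying the descent lemma directly to $f(x_{t+1})$ and $f(x_t)$ (with $x_t := \E_m[x_t^m]$) and then dealing explicitly with a ``momentum-lag'' term $\sum_j \omega_j(u_t^j - \nabla f(x_t))$ by unrolling the EMA and invoking $\|x_t - x_s\|\le \eta\sum_r \|\bar\Delta_r\|$. The paper instead introduces the auxiliary \emph{virtual iterate} $z_t^j := \frac{1}{1-\beta_j}x_t^j - \frac{\beta_j}{1-\beta_j}x_{t-1}^j$ and $z_t := \E_j[z_t^j]$, which satisfies the exact identity $z_{t+1}-z_t = -\eta\, g_t$ with $g_t = \E_m[g_t^m]$; smoothness is then applied to $f(z_t)$, not $f(x_t)$. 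This trick entirely eliminates the momentum-lag term you would otherwise have to control by bounding gradient staleness across history: the momentum effect is packaged into the single drift quantity $\|z_t - x_t\|^2 \propto \eta^2\beta_\omega^2$ (Lemma~\ref{lem:z-x}), which is bounded by unrolling the EMA of \emph{averaged gradients} alone, without any $L\|x_t - x_\tau\|$ staleness argument. Your approach can in principle work, but it requires a more delicate bookkeeping (gradient staleness weighted by $\beta_j^{t-\tau}$) and typically produces looser constants; this is exactly the mess the virtual iterate is designed to avoid.

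\textbf{A concrete error in the sketched recursion.} For the momentum deviation you write $D_t^j \le (1-p_j)(\beta_j^2 D_{t-1}^j + (1-\beta_j)^2\cdot\text{noise+hetero})$, i.e.\ expanding the square with coefficients $\beta_j^2, (1-\beta_j)^2$ and dropping the cross term. The cross term does not vanish here (the gradient difference $g_{t-1}-g_{t-1}^m$ is correlated with $u_{t-1}^j - u_{t-1}^{j,m}$ through the local iterate), so this step needs either Young's inequality or, as the paper does, convexity of $\|\cdot\|^2$: $\|\beta_j a + (1-\beta_j)b\|^2 \le \beta_j\|a\|^2 + (1-\beta_j)\|b\|^2$, giving \emph{linear} coefficients. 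Your version yields a stationary bound $\propto \frac{(1-\beta_j)^2}{1-(1-p_j)\beta_j^2}$, which is not the $\frac{(1-\beta_j)(1-p_j)}{1-(1-p_j)\beta_j}$ appearing in $\psi$. Similarly, for the parameter-drift recursion the paper applies Young's inequality with a free parameter $s>0$ and then optimizes $s^* = \frac{1}{\sqrt{1-p_x}}-1$, which is what produces the sharp prefactor $\frac{4(1-p_x)}{p_x^2}$ in $\psi$; a naive factor-of-$2$ split would make the geometric recursion diverge for $p_x < 1/2$. You correctly guess the final summand of $\psi$, but the steps you sketch do not yet yield it. Lastly, note that in Algorithm~\ref{alg:method_sgdm_prob} the combination is $\Delta_t^m=\sum_j\omega_j u_t^{j,m}$ with $\sum_j\omega_j=1$ and the raw gradient enters only as $\beta_1=0$; your decomposition $\bar\Delta_t=(1-\sum_j\omega_j)\E_m g_t^m + \sum_j\omega_j u_t^j$ matches the non-probabilistic Algorithm~\ref{alg:method_sgdm_non_prob} rather than the one analyzed.
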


The derived bound in \eqref{rate-sgdm} achieves the optimal $\mathcal{O}(1/\sqrt{T})$ asymptotic rate for smooth non-convex stochastic optimization \citep{arjevani2023lowerbound}. Distributed factors, such as client drift and data heterogeneity, are contained within the step-size constraint and the higher-order $\mathcal{O}(1/T)$ term, thus not affecting the asymptotic rate. The step size $\eta$ is constrained by $\beta_{\omega}$ and $\psi$. The dependence $\psi = \mathcal{O}(1/p_x^2)$ shows that model synchronization frequency $p_x$ is critical. The impact of momentum synchronization is nuanced: reducing a momentum's sync frequency $p_j$ increases its contribution to $\psi$, but this is modulated by its decay rate $\beta_j$. This implies "slower" momenta (larger $\beta_j$) are more robust to infrequent synchronization. \textbf{This analysis reveals a trade-off: large $\beta_j$ values constrain the step size via $\beta_{\omega}$ but reduce the communication penalty in $\psi$}. Furthermore, synchronizing only the model always (i.e., $p_x=1, p_j=0$) is algorithmically equivalent to synchronizing only the momenta always (i.e., $p_x=0, p_j=1$). In the boundary case where only model parameters are synced ($p_x=1, p_j=0$), $\psi=0$ and the rate recovers that of mini-batch SGD \citep{Liu2020}.

\section{Experimental Design}
\label{sec:exp_design}

Building on our analysis, our experimental design answers the following research questions:
\begin{itemize}[noitemsep,topsep=0pt,parsep=2pt,partopsep=0pt]
    \item[\textbf{RQ1}] Does \method reduce momentum noise and preserve mutual information, as predicted?
    \item[\textbf{RQ2}] Does \method better preserve task performance when decreasing communication frequency?
    \item[\textbf{RQ3}] How does \method perform against \ddp and prior communication-efficient optimizers?
    % \item[\textbf{RQ4}]  What reduction in communication costs can \method offer over prior stateful methods?
    \item[\textbf{RQ4}] How does slow momentum affect local optimization trajectories between synchronizations?
\end{itemize}

\subsection{Setup}\label{sec:exp_setup}

\textbf{Models and Data.}
We use peri-norm~\citep{PeriLayerNorm} \texttt{GPT}-style transformer models of $16$M, $125$M, and $720$M parameters (\cref{tab:model_architectures}). The $16$M model is used for hyperparameter sweeps and qualitative investigations, while the $125$M and $720$M models are used for scaling experiments and baseline comparisons. All models are trained with a sequence length of $2048$ on the \texttt{SmolLM2} mixture~\citep{SmolLM2}. We evaluate all models ($16$M, $125$M, $720$M) using validation perplexity on a held-out $10\%$ portion of the training mixture. For further details, please see \cref{app:exp_details}.

\textbf{Optimizers and Tuning Methodology.}
We use the \adopt optimizer, a variant of \adam whose convergence rate is independent of the second-momentum decay rate $\beta_2$ and preserves performance~\citep{ADOPT}; we fix $\beta_2=0.9999$ to isolate the first momentum dynamics (governed by $\beta_1$ and $\omega$). We use the \texttt{CompleteP} parameterization for one-shot transfer of the learning rate (LR) from small to large models~\citep{CompleteP}. For each combination of convex coefficients ($\omega$'s) and momentum decays ($\beta$'s), we tune the learning rate on the $16$M model and transfer the optimal hyperparameters to larger models directly without ever re-tuning ($\omega$'s) and ($\beta$'s). To establish strong \ddp baselines, we tune $\omega,\beta_1$ parameters in the \ddp setting and reuse them  for \method. For complete details see \cref{app:subsec:hyperparameter_sweeping_procedure}. We always use quasi-hyperbolic \method~($N=1$) which does not require additional memory and reduces comms costs by $(\frac{1}{K_x}+\frac{1}{K_1}+\frac{1}{K_v})^{-1}$ over \ddp.
% we generally find that $\omega_1 \in [0.9, 0.99]$ and $\beta_1 = 0.999$ give superior results to the standard optimizer, with \citet{AdEMAMix} showing that using even longer training durations can benefit from even higher $\beta$ values.

\textbf{Baselines.}
We compare \method against: the base optimizer (\adopt) with \ddp and \ddp analogues to \method such as Quasi-hyperbolic Momentum~(\qhm)~\citep{QHM}. For communication-efficient baselines we use the provably convergent and stateful \localadam~\citep{LocalAdam} approach. We also compare against using Nesterov momentum as the outer optimizer~\citep{DiLoCoScalingLaws}. We evaluate ML performance for communication-efficient methods under the same, fixed synchronization frequency. Unless otherwise stated we use $K=K_x=K_1=K_v=32$ steps, based on prior work finding a practical balance of performance efficiency~\citep{DiLoCoScalingLaws}. We split the dataset in an $\texttt{IID}$ fashion across $4$ workers using $1$ H100 per worker.

\textbf{Other Metrics} We analyze flattened models/momenta $s_t \in \mathbb{R}^d$ using several metrics. The \textbf{relative change} over $K$ steps is measured as $\|s_{t+K}-s_t\|_2/\|s_t\|_2$. To quantify the dispersion among $M$ worker vectors, we compute the \textbf{cross-worker variance}, defined as $\frac{1}{M} \sum_{m=1}^M \|s_m - \bar{s}\|_2^2$. The statistical dependency between two random vectors at different timesteps is captured by their \textbf{mutual information}, $I(U_{t+K}; U_t)$. Finally, we measure alignment between vectors using \textbf{cosine similarity}.

\section{Evaluation}\label{sec:evaluation}

This section empirically validates \method, showing its slow momentum preserves information and aligns workers (\cref{sec:eval:momentum_noise_mutual_information,sec:eval:alignment}), which improves stability under infrequent communication (\cref{sec:eval:resilience}) and allows it to close the performance gap with \ddp at scale (\cref{sec:eval:baseline_comparison}).
\subsection{\method Reduces Momentum Noise and Preserves Mutual Information~(\textbf{RQ1})}\label{sec:eval:momentum_noise_mutual_information}

\begin{figure}[]
    \centering
    %\subfloat[Mutual Information]
    {\includegraphics[width=0.49\columnwidth]{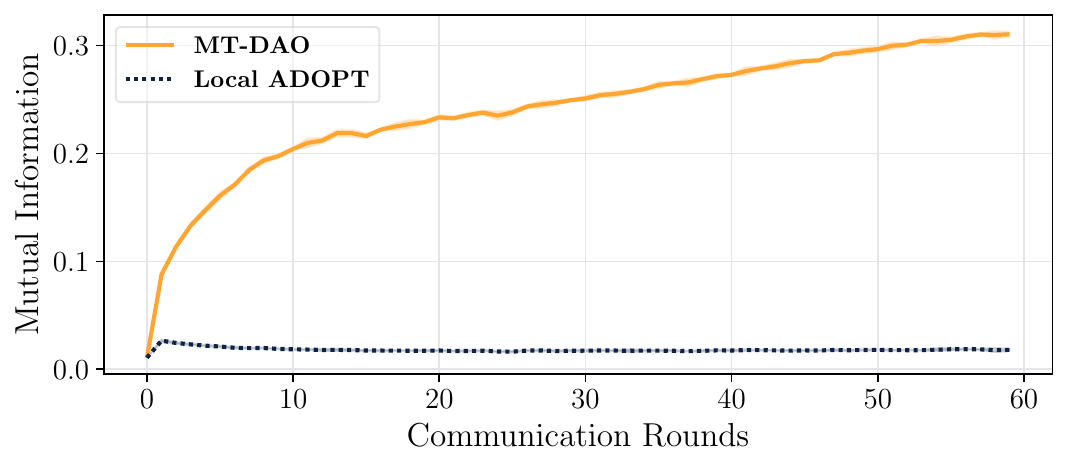}} \hfill
    %\subfloat[Inter-Worker Variance]
    {\includegraphics[width=0.49\columnwidth]{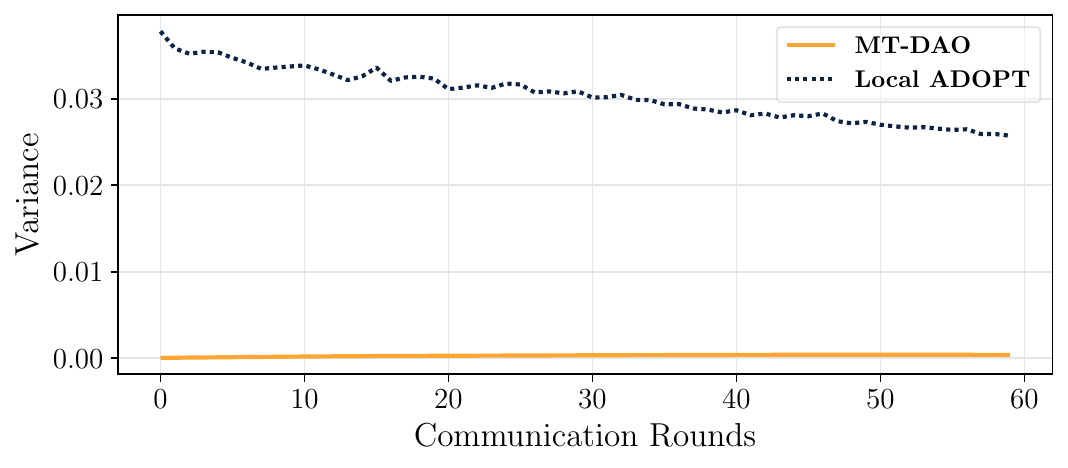}}
    \caption{
A comparison of \method ($\beta_1=0.999$) versus a \localadopt baseline ($\beta_1=0.95$) with a communication frequency of $K=32$. For each communication round, we plot metrics computed between the momentum at the start ($t$) and end ($t+K$) of the round. \method's slow momentum \textbf{preserves mutual information}, $I(U_{t}; U_{t+K})$, across rounds while the baseline's momentum decays losing the global optimization direction (left). Furthermore, \method \textbf{reduces inter-worker momentum variance}, $\text{Var}(u_{t+K})$, indicating greater stability against local noise (right).
}
    \label{fig:mi_and_variance}
\end{figure}

We now empirically validate the motivation of \method. Our results in \cref{fig:mi_and_variance} demonstrate that the slow-momentum of \method both preserves information about the global optimization direction across communication rounds and suppresses the variance induced by local updates.

\takeawaybox[Slow Momentum Is Preserved:]{%
The slow momentum in \method preserves its direction across communication rounds. This directional memory also reduces the influence of local gradient noise, leading to lower momentum variance across workers and \textbf{a more stable optimization path}.
}

\subsection{\method Is Resilient to Infrequent Communication~(\textbf{RQ2})}\label{sec:eval:resilience}

We now investigate if \method provides greater resilience against infrequent synchronization, as predicted by our analysis in \cref{sec:theory} showing that reducing the communication frequency of momenta with higher $\beta$ has a diminshed impact on the step size.

\begin{table}[H]
\caption{Demonstration of how parameter synchronization period ($K_x$) affects final perplexity for two \method configurations
%, ($\beta_1=0.99, \omega_1=0.95$) and ($\beta_1=0.995, \omega_1=0.98$), 
with momentum periods $K_1 = K_v=16$ for our $16$M models. Values show the percentage increase in validation perplexity over the $K_x=16$ baseline. Higher $\beta$ leads to \textbf{less performance degradation as $K_x$ increases}.
% This holds even in this short $4608$-step run, a regime where high-$\beta$ optimizers are typically disadvantaged~\citep{AdEMAMix}. }
}\label{tab:sync_freq_perf} 
\centering
\resizebox{0.6\textwidth}{!}{%
\begin{tabular}{@{}lccccccc@{}}\toprule
$\mathbf{K_x}$ & $\mathbf{16}$ & $\mathbf{32}$ & $\mathbf{64}$ & $\mathbf{128}$ & $\mathbf{256}$ & $\mathbf{512}$ & $\mathbf{1024}$ \\
\midrule
$\mathbf{\beta_1 = 0.99}$ & $ (37.72) $ & $ +1.7\% $ & $ +3.0\% $ & $ +3.9\% $ & $ +5.1\% $ & $ +5.6\% $ & $ +6.2\% $ \\
$\mathbf{\beta_1 = 0.995}$ & $ (37.65) $ & $ +1.0\% $ & $ +1.6\% $ & $ +3.2\% $ & $ +2.8\% $ & $ +3.4\% $ & $ +3.7\% $ \\
\bottomrule\end{tabular}
}
\end{table}

\Cref{tab:sync_freq_perf} shows that a \method configuration with a higher momentum decay $\beta_1$ suffers less performance degradation as the parameter synchronization period $K_x$ increases. This improved resilience can be explained by the reduced rate of change in the model parameters, which is quantified in \Cref{fig:heatmap_qh}. The underlying principle is that the global averaging step is most effective when the worker models have diverged minimally. Let $x_t$ be the synchronized model at the start of a round. After $K_x$ local steps, worker $m$ arrives at state $x_{t+K_x}^m$. \Cref{fig:heatmap_qh} shows that high $(\beta_1, \omega_1)$ values reduces the local model change, $\mathbb{E}_m[\|x_{t+K_x}^m - x_t\|_2]$. A smaller per-worker model change bounds the variance across the set of workers ($\{x_{t+K_x}^m\}_{m=1}^M$), mitigating local drift. This enhances convergence robustness because workers compute gradients on models that are closer to the global mean, making their local updates more relevant to the central objective~\citep{FedProx}.

\begin{figure}[]
    \centering
    %\subfloat[Mean relative change parameters.]
    {\includegraphics[width=0.49\columnwidth]{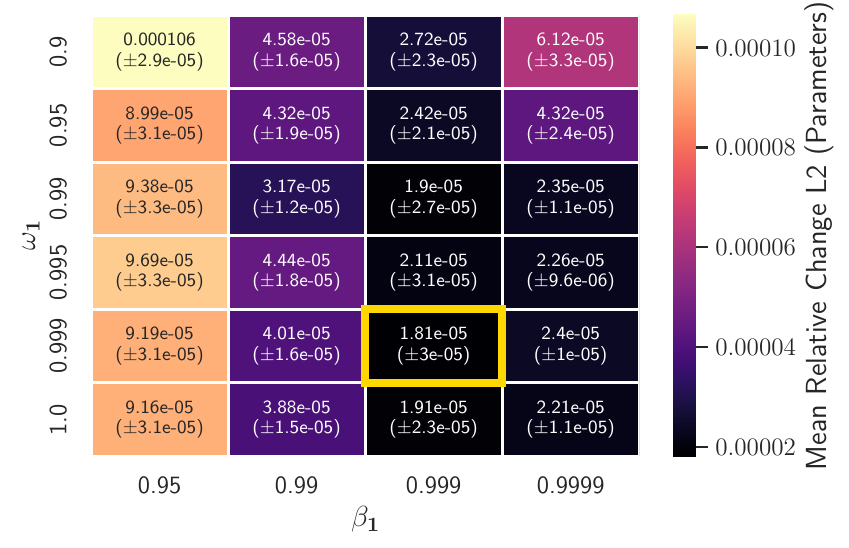}} \hfill
    %\subfloat[Mean relative change momentum.]
    {\includegraphics[width=0.49\columnwidth]{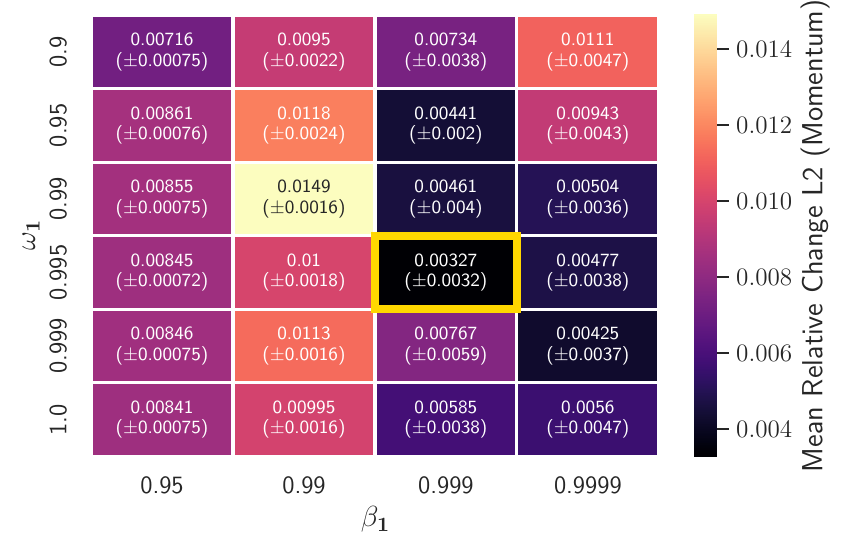}}
    \caption{Mean relative L2 change and standard deviation across communication rounds of (left) model parameters and (right) the first momentum state, as a function of momentum decay ($\beta_1$) and weight ($\omega_1$). In both cases, \method shows a significantly \textbf{reduced} relative rate of change with high $(\beta_1, \omega_1)$ (minimum in gold), which \textbf{reduces worker drift and thus makes parameter averaging more effective}. Each point on the grid corresponds to a configuration evaluated with its own independently tuned learning rate. \localadopt corresponds to ($\beta_1=0.95,\omega_1=1.0)$. 
    }
    \label{fig:heatmap_qh}
\end{figure}

\takeawaybox[Slow Momentum as Anchor:]{%
Long-term momentum (high $\beta_1$ and $\omega$) reduces the rate of change of parameters and optimizer states. This stability ensures worker models diverge less prior to synchronization, \textbf{which reduces the performance impact of infrequent synchronization}.
}

\subsection{\method Outperforms Prior Low-comms Optimizers and Matches \ddp~(\textbf{RQ3})}\label{sec:eval:baseline_comparison}

We evaluate \method on $16$M, $125$M, and $720$M parameter language models against other baselines.
%\texttt{ADOPT-DDP}~\citep{ADOPT}, \texttt{QHADOPT-DDP}~\citep{QHM} (the \ddp analogue of \method, $N{=}1$), \texttt{Local ADOPT}~\citep{LocalAdam}, and an outer \texttt{Nesterov} variant~\citep{DiLoCoScalingLaws}. %To jointly assess optimization and communication efficiency, 
We report validation perplexity as a function of both training tokens and wall-clock time. 
Timings are measured on $4$ cloud H100s connected via $50$--$100$~Gbit/s Ethernet, including constant implementation overheads, and accounting for communication–computation overlap in \ddp. These measurements are specific to this hardware; \cref{app:sec:wall_time_model} provides a bandwidth model that compares communication-efficient methods to \ddp across a wider range of interconnects. When reporting time-to-target perplexity, we give improvements in both wall-clock time and training tokens. 

\begin{figure}[]
    \centering
    %\subfloat[$16$M Time]
    {\includegraphics[width=0.32\columnwidth]{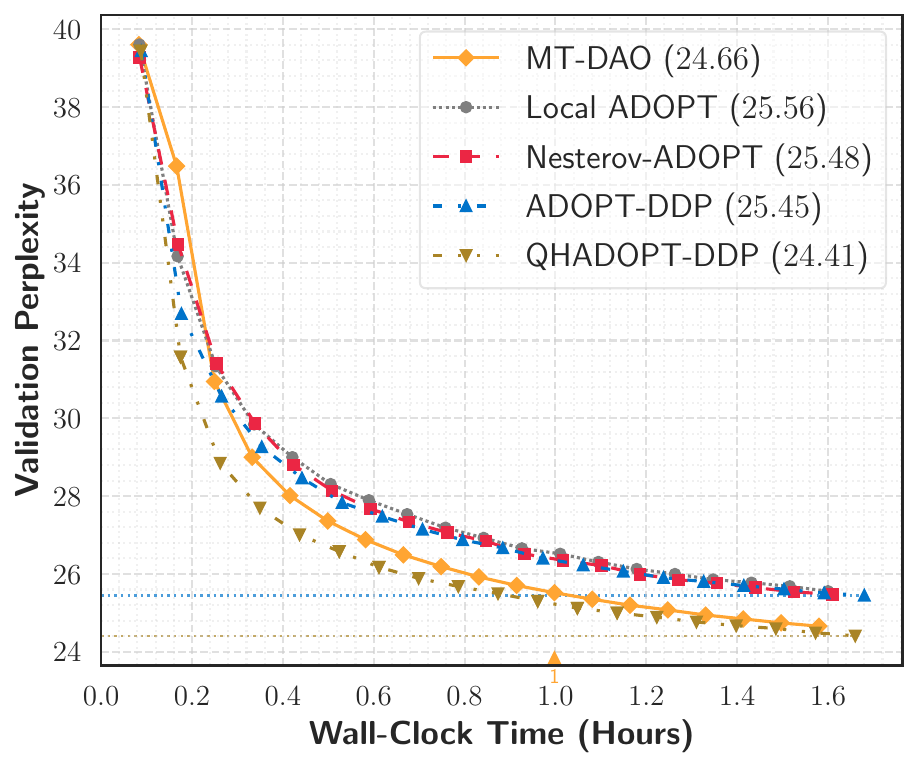}} \hfill
    %\subfloat[$125$M Time]
    {\includegraphics[width=0.32\columnwidth]{{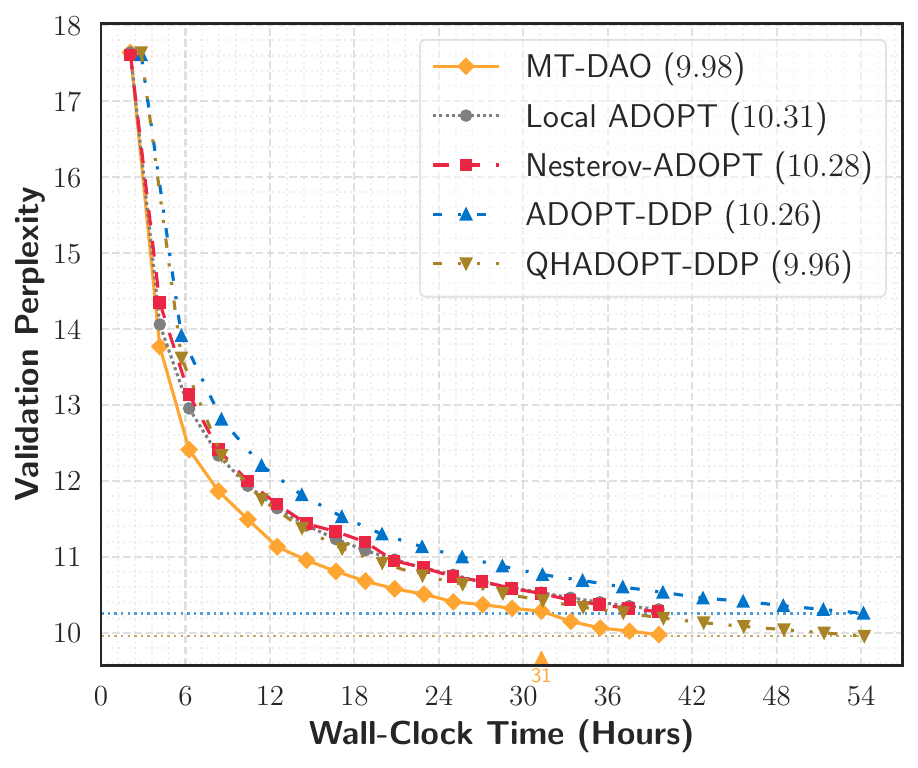}}} \hfill
   %\subfloat[$720$M Time] 
   {\includegraphics[width=0.32\columnwidth]{{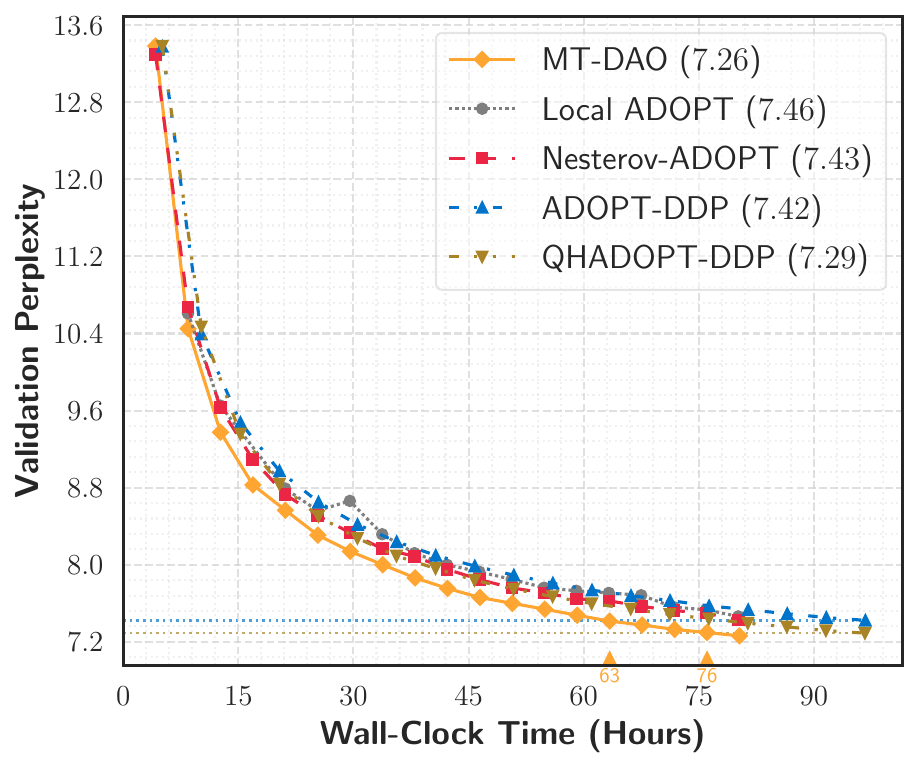}}} \hfill
   \subfloat[$16$M]{\includegraphics[width=0.32\columnwidth]{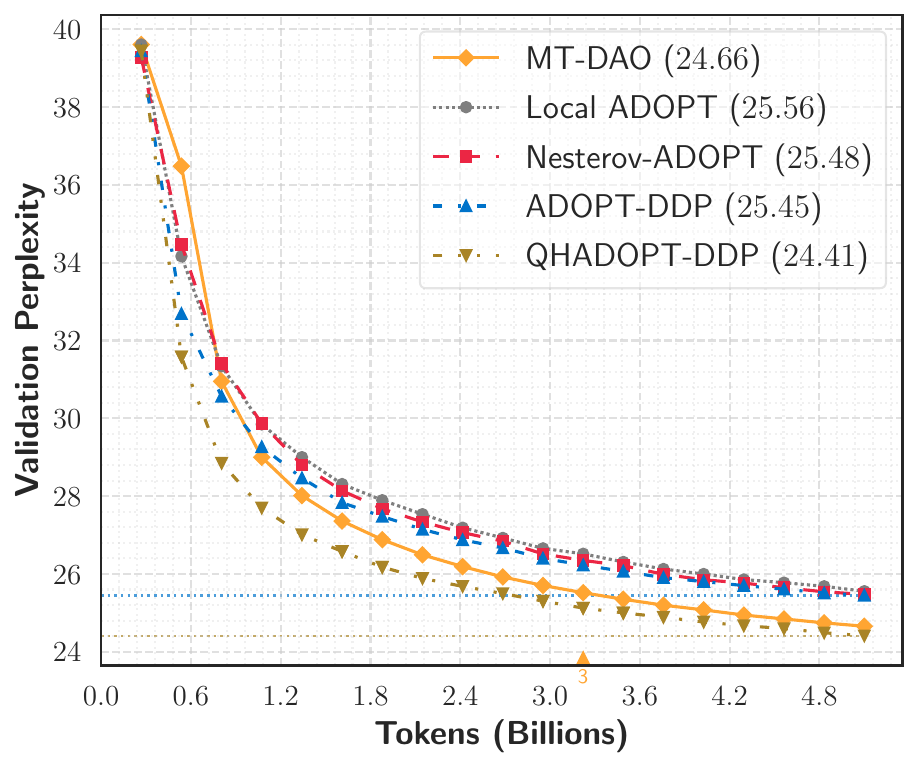}} \hfill
    \subfloat[$125$M]{\includegraphics[width=0.32\columnwidth]{{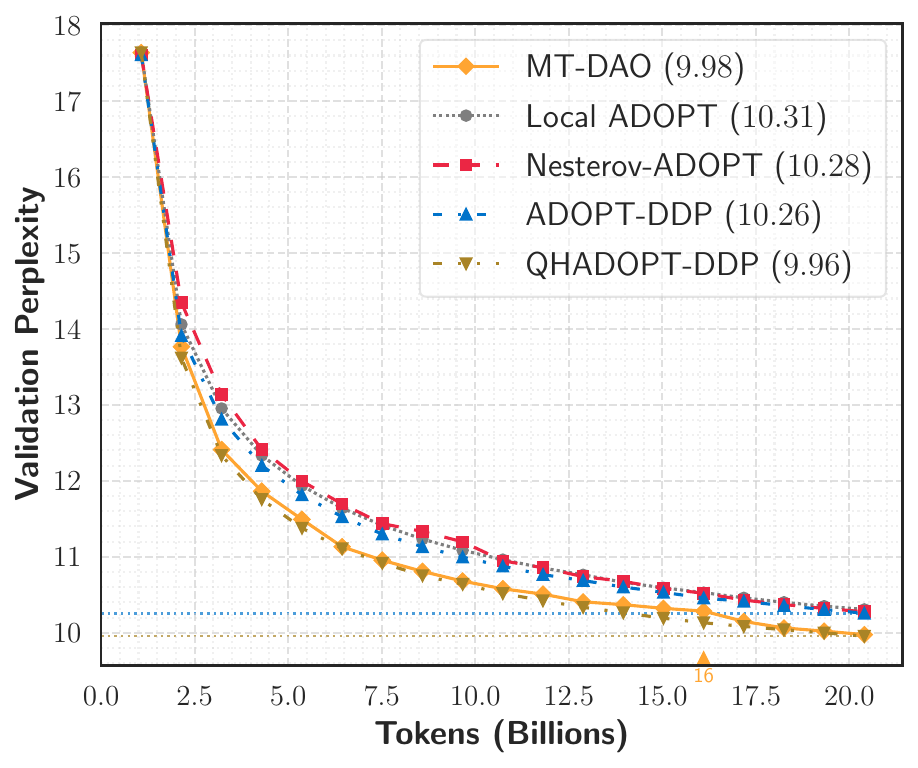}}} \hfill
   \subfloat[$720$M] {\includegraphics[width=0.32\columnwidth]{{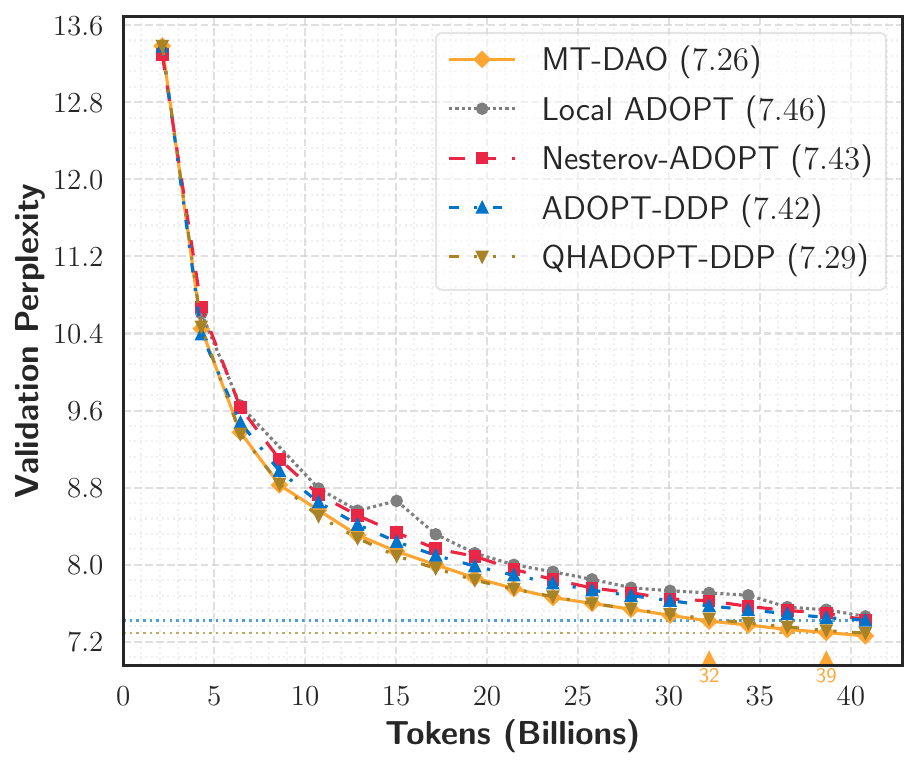}}} 
      \caption{Validation perplexity versus wall-clock time and training tokens for \method and baselines on models of size (a)~16M, (b)~125M, and (c)~720M models. Horizontal lines denote the two \ddp baselines (\texttt{ADOPT-DDP} and \texttt{QHADOPT-DDP}). For each non-\ddp method, a colored marker on the x-axis marks the earliest point at which its curve attains a lower/equal perplexity to a \ddp variant.}

    \label{fig:val_perplexity_time_16M_125M_720M}
\end{figure}

Across all scales, \method consistently improves over \texttt{ADOPT-DDP} and \texttt{Local ADOPT} in both tokens and time, closing the gap to synchronous training and reducing end-to-end wall clock by $\mathbf{6\text{--}27\%}$. At $720$M, relative to single-momentum \ddp, \method reaches the same perplexity in $\mathbf{24\%}$ fewer tokens and $\mathbf{35\%}$ less time. Relative to \texttt{QHADOPT-DDP}, \method trails at $16$M, matches at $125$M, and at $720$M reaches the \texttt{QHADOPT-DDP} target perplexity about $8\%$ faster in wall-clock and $\approx 5\%$ fewer tokens. The additional improvements in time are due to \method communicating $\mathbf{10 \times}$ less than \ddp. The outer \texttt{Nesterov} baseline performs better than \texttt{Local ADOPT} in our setting yet remains below \method and \ddp; matching the findings of \citet[][Table~4]{DiLoCoScalingLaws}, which reported underperformance relative to \ddp by $0.2\%$ to $1.7$\% at the $550$M to $1.3$B scale. Mechanistically, \nesterov coalesces per-round gradients and preserves them according to the half-life of the \emph{outer} momentum, whereas \method implements a finer-grained \emph{inner} multi-timescale modification. We examine how these choices shape worker update trajectories in the next section, however, we do note that \method~($N=1$) does not require the additional momentum buffer of Nesterov and has only one additional hyperparameter to tune instead of two.

\takeawaybox[Improved Performance and Efficiency at Scale:]{%
\method improves performance w.r.t all baselines across model scales, \textbf{closing the performance gap to \ddp.}
}

% \subsection{\method Improves Communication-efficiency~(\textbf{RQ4})}

\subsection{\method Aligns Worker Update Trajectories(\textbf{RQ4})}\label{sec:eval:alignment}
Having established the performance benefits of \method, we now investigate the underlying mechanism. We hypothesize that the slow momentum reduces worker drift by keeping the optimization trajectories of individual workers aligned with the global optimization direction. To validate this, we measure the cosine similarity between key optimization vectors. We define the per-round local update as the "pseudo-gradient" ($\Delta^m = x^m_{t+K} - x^m_{t}$), and the global pseudo-gradient as the average of local ones~\citep{FedOPT}. To provide a comprehensive comparison, we define the "global momentum" for each method: for \method and \texttt{Local ADOPT}, it is the average of worker momenta at the end of a round, while for the \texttt{Nesterov} variant, it is the state of the outer \texttt{Nesterov} momentum.

\begin{figure}[]
    \centering
    \includegraphics[width=\linewidth]{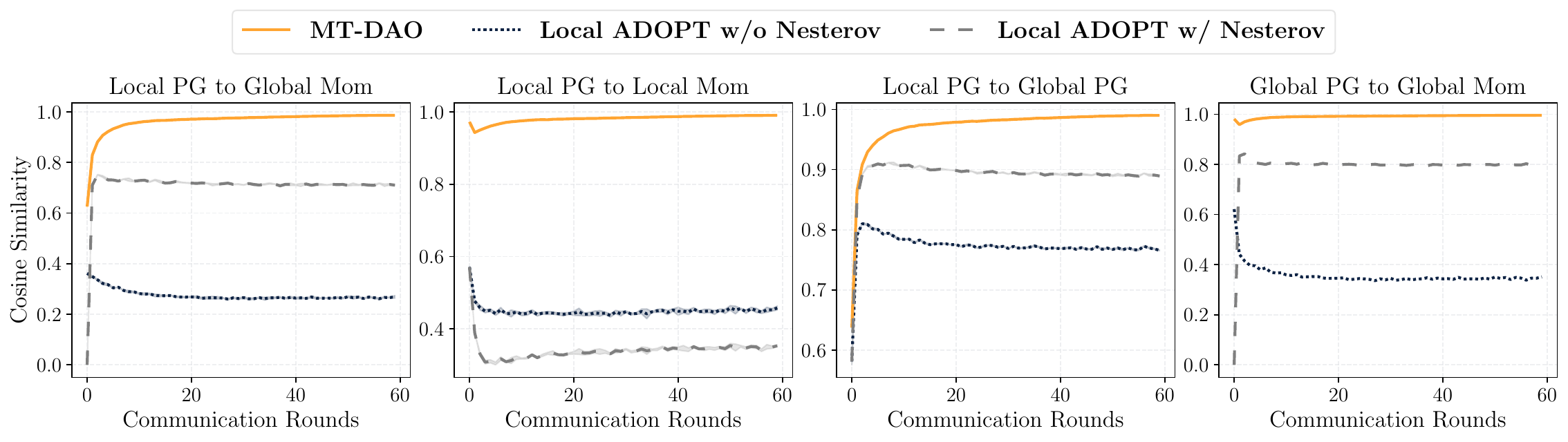}
    \caption{A comparison of update vector alignments for \method ($\beta_1=0.999, \omega_1=0.98$) versus \texttt{Local ADOPT} ($\beta_1=0.95$), and \texttt{Local ADOPT} ($\beta_1=0.95$) with \nesterov. Cosine similarity is measured between: (1) the local pseudo-gradient and global momentum, (2) the local pseudo-gradient and the local momentum, (3) the local and global pseudo-gradients,  (4) the local and global momentum. %\method maintains significantly higher alignment across all metrics, demonstrating that its workers follow a trajectory aligned with the global objective. While \nesterov improves alignment across $3/4$ metrics, it does so to a lesser extent than \method. 
    Pseudo-gradient and momentum have been abbreviated as \textit{PG} and \textit{Mom}.}
    \label{fig:update_trajectories}
\end{figure}

The results in \cref{fig:update_trajectories} show that \method achieves near-perfect alignment (cosine similarity $>0.95$) across all four metrics. This indicates that: (1) each worker's update is consistent with its own momentum history (Local PG to Local Mom), (2) workers are in strong agreement with each other (Local PG to Global PG), and (3, 4) both local and global updates are aligned with the long-term global trajectory (Local/Global PG to Global Mom). This demonstrates that the slow momentum acts as a regularizer, ensuring all workers maintain a stable and shared optimization path.

In contrast, the \nesterov outer optimizer presents mixed results. As an \ema of global pseudo-gradients, it is better aligned to the global pseudo-gradient than the \localadopt momentum and it improves the alignment between the local and global pseudo-gradients compared to standard \texttt{Local ADOPT}. However, it never reaches the degree of alignment of \method in any metric. 

% However, it simultaneously degrades the alignment between a worker's pseudo-gradient and its local momentum (from $\approx0.4$ to $\approx0.3$). The cause of this is unclear beyond potential \nesterov-induced changes in the loss landscape itself.

\takeawaybox[Slow Momentum as Regularizer:]{%
\method's slow momentum acts as a regularizer for each worker, ensuring that \textbf{local updates remain aligned with their history and the global trajectory.} %This contrasts with the external correction of an outer optimizer like Nesterov, which enforces global consensus but can disrupt local optimization consistency, or standard local methods that lack a strong directional signal and suffer from low alignment overall.
}
\section{Related Work}
\label{sec:related_work}

Standard Distributed Data Parallelism's (\ddp) per-step synchronization creates a communication bottleneck~\citep{Horovod}. This is mitigated by two orthogonal strategies: payload compression and infrequent synchronization. Compression shrinks transmissions via quantization~\citep{QSGD_Alistarh_2017}, sparsification~\citep{DeepGradientCompression_Lin_2017}, mixes thereof~\citep{CocktailSGD}, low-rank updates~\citep{LDAdam_Robert_2023}, or communicating select momentum components~\citep{peng2024decoupled}. Our work advances infrequent synchronization~\citep{LocalSGD,fedavg} which allows local updates between communications and is complementary to compression.

Adapting stateful optimizers like \adam to infrequent synchronization is not straightforward. \localadam~\citep{LocalAdam} provided the first convergence proofs at the cost of synchronizing all optimizer states. \citet{DiLoCo,DiLoCoScalingLaws} showed that a Nesterov-based outer optimizer improves performance. Recently \citet{DES-LOC} improved the communication efficiency of \localadam by decoupling parameter and momentum sync frequencies. However, these methods use single-timescale optimizers with small $\beta_1$ values ill-suited to low communication frequencies due to momentum decay. While naively increasing momentum often harms task performance, recent optimizers that track gradients across multiple timescales have shown significant benefits. \qhm~\citep{QHM} decouples momentum decay from gradient weight, while \aggmo~\citep{AggMo} averages multiple velocity vectors for stability. Building on this, \ademamix~\citep{AdEMAMix} mixes fast and slow momenta to accelerate convergence, demonstrating that slow momentum acts as memory, reducing forgetting in LLMs. Fur further related work see \cref{app:sec:extended_related_work}.

% \section{Limitations}
% In this section, we briefly discuss some of our work's limitations. 

\section{Conclusion}
A persistent challenge in distributed training has been the performance gap between fully-synchronous and communication-efficient optimizers. We identify one potential cause for this gap: the rapid decay of momentum in standard optimizers is temporally mismatched with the long intervals inherent to infrequent communication, leading to unstable update directions. We address this with \method, a multi-timescale optimizer that maintains a stable, long-term (high-$\beta$) update direction that persists across communication rounds. Our theory shows that momenta with higher $\beta$ are less sensitive to synchronization frequency. Furthermore, our experiments on large language models demonstrate that this approach closes the performance gap with \ddp and outperforms prior communication-efficient methods. This is achieved by using the slow momentum to maintain a stable, shared optimization trajectory across workers. These findings establish that managing momentum timescales is a critical factor for performant distributed training, opening new avenues for research into dynamic timescale modulation and integration with compression. Ultimately, this work provides a robust and practical path forward for scaling foundation model training in communication-constrained environments, for cross-datacenter training, or across wide geographic areas.
% \section{Reproducibility}

% We provide the complete source code for our \texttt{mt\_dao} framework, accompanied by detailed setup instructions, to ensure the reproducibility of our results. \paragraph{System \& Dependencies:} The required environment, including specific versions of Ubuntu, CUDA, and Python, can be installed using the provided \texttt{system\_setup.sh} and \texttt{install\_env.sh} scripts. These scripts handle all necessary dependencies. \paragraph{Data:} We provide scripts to download and prepare the datasets. The distribution of data across clients for both IID and non-IID settings is managed through declarative YAML configuration files found in \texttt{mt\_dao/conf/dataset/streams/}. \paragraph{Execution \& Hyperparameters:} Example scripts are available for launching federated, centralized, and evaluation runs (e.g., \texttt{fed\_125m\_example.sh}, \texttt{cen\_125m\_example.sh}). To fully reproduce our paper's experiments, users can utilize the provided base launcher scripts and set the specific hyperparameters detailed in the paper. A concrete example of this process is available in \texttt{scripts/iclr\_mt\_dao}.

\bibliography{iclr2026_conference}
\bibliographystyle{iclr2026_conference}

\clearpage
\newpage

% ─────────────────────────────────────────────────────────────────────────
% Appendix with working TOC
% ─────────────────────────────────────────────────────────────────────────
\appendix
\setcounter{parttocdepth}{2}
\mtcsettitle{parttoc}{Table of Contents}
% \addtocounter{section}{1}

\addcontentsline{toc}{section}{Appendix} %

\renewcommand \thepart{} %
\renewcommand \partname{}
\part{\Large{\centerline{Appendix}}}

\parttoc

\clearpage

\section{Limitations}\label{app:sec:limitations}
\textbf{Limitations.} First, our empirical validation is limited to models up to $720$M parameters.  Second, our experiments use \adopt instead of \adam, this avoids the need to tune the $\beta_2$ of the second momentum, simplifying experimental design. Third, we preferred a detailed investigation of the training dynamics of the highly memory and communication-efficient \method~($N=1$) over increasing the number of momenta, which brings diminishing returns~\citep{AggMo,QHM}.

\section{Experimental Details}~\label{app:exp_details}

Here we provide additional experimental details complementing those in \cref{sec:exp_setup}, including: a) model architecture details and the model parameterization (\cref{app:subsec:architecture_details}), b) our hyperparameter sweep procedure to select optimizer-specific settings (\cref{app:subsec:hyperparameter_sweeping_procedure}), and c) the results of our tuning sweeps for \method.
\subsection{Architecture Details and Parametrization}\label{app:subsec:architecture_details}

\begin{table}[H]
\caption{Model architecture and training hyperparameters. Architectural parameters include the number of transformer blocks (\#Blocks), attention heads (\#Heads), embedding dimension ($d_\mathrm{model}$), vocabulary size ($|\mathcal{V}|$), and feedforward expansion ratio (Exp.~Ratio). Key training parameters are the global batch size ($|\mathcal{B}_{\mathrm{G}}|$) and the total number of training steps ($T$). All models use RoPE positional embeddings~\citep{RopeEmbeddings}, the \texttt{SiLU} activation function, norm-based gradient clipping with a bound of $\rho$, and are initialized with a typical~\citep{BenchmarkingOptimizersLLM,CompleteP} $\sigma=0.02$. Sequence length is standard for models at these scales.}\label{tab:model_architectures} 
\centering
\resizebox{\textwidth}{!}{%
\begin{tabular}{@{}lcccccccccccc@{}}
\toprule
\textbf{Model Size} &
  \textbf{Blocks} &
  \textbf{$\boldsymbol{d_{\mathrm{model}}}$} &
  \textbf{$\mathbf{|\mathcal{V}|}$} &
  \textbf{\#Heads} &
  \textbf{Exp.$\sim$Ratio} &
  \textbf{ROPE $\theta$} &
  \textbf{ACT} &
  \textbf{Init $\sigma$} &
  \textbf{$\rho$} &
  \textbf{Seq Len} &
  \textbf{$\mathbf{|\mathcal{B}_{\mathrm{G}}|}$} &
  \textbf{$\mathbf{T}$} \\ \midrule
$16$M  & $4$  & $256$  & $50$K & $4$  & $4$ & $10000$ & \texttt{SiLU} & $0.02$ & $1.0$ & $2048$ & $64$   & $4608,38912$ \\
$125$M & $12$ & $768$  & $50$K & $12$ & $4$ & $10000$ & \texttt{SiLU}  & $0.02$ & $1.0$ & $2048$ & $256$  & $38912$      \\
$720$M & $12$ & $2048$ & $50$K & $16$ & $4$ & $10000$ & \texttt{SiLU}  & $0.02$ & $1.0$ & $2048$ & $512$ & $38912$      \\ \bottomrule
\end{tabular}%
}
\end{table}
\Cref{tab:model_architectures} summarizes the architectural details of our models, which follow established practices for large language models at their respective scales. To improve training stability and final performance, we adopt two key modifications. First, following the recommendations of \citet{PeriLayerNorm}, we use a Peri-LayerNorm transformer structure instead of pre-norm. Second, we use the \texttt{CompleteP}~\citep{CompleteP} parametrization with $\alpha=1.0$, which enables the effective transfer of optimizer hyperparameters from a small model to its larger-scale counterparts in a one-shot manner. This property allows us to perform comprehensive hyperparameter sweeps on our smallest model size and reserve computationally expensive scaling experiments for direct comparisons against baselines. 

We set batch sizes and training durations following recent best practices~\citep{HowDoesBatchSizeScaleInPreTraining}. For the smallest model size, the initial batch size is determined using the noise-scale estimator for the critical batch size~\citep{LargeBatchTraining} and then doubled until the efficiency deviates from a linear trend by 20\%. For our $125$M and $720$M models we follow the batch size recommendations from \citet{BenchmarkingOptimizersLLM}.  Training durations are set as multiples of the compute-optimal token budget~\citep{TrainingComputeOptimalLLMs}: for the $16$M model, we tune using $\approx 2\times$ this budget and run baseline comparisons at $\approx 16\times$; for the $125$M model, we use $\approx 8\times$; and for the $720$M model, we use $\approx 2.83\times$. We chose the $720$M model size as a good balance between scale and computational efficiency following \citet{BenchmarkingOptimizersLLM}.

All models are trained using the warmup-stable-decay (\texttt{WSD}) learning rate schedule~\citep{BeyondFixedTrainingDuration}, with warmup and decay periods selected based on established recommendations~\citep{HowDoesBatchSizeScaleInPreTraining,BeyondFixedTrainingDuration,SmolLM2,BenchmarkingOptimizersLLM}. For the $16$M model tuning runs, which last $4608$ steps, the warmup period is set to $T_{\mathrm{WARM}} = 512$ steps. For all longer training runs and baseline comparisons, we use the industry-standard warmup of $T_{\mathrm{WARM}} = 2048$ steps. We use a cooldown period equal to the warmup period in all cases, using 1-sqrt cooldown~\citep{BeyondFixedTrainingDuration}.

\subsection{Optimizer Hyperparameter Sweeping Procedure}\label{app:subsec:hyperparameter_sweeping_procedure}
Our tuning procedure is designed to ensure that both our method and the baselines are evaluated under their optimal DDP configurations, providing a fair comparison. Given that previous work has shown that the learning rate (LR) tends to transfer effectively between DDP and distributed settings~\citep{DES-LOC}, we first tune all parameters to achieve the best possible performance under DDP and then transfer these settings to \method.
Unlike methods such as \ademamix that use schedulers for optimizer parameters, we employ a simple switch from a base optimizer (e.g., \adopt) to its multi-timescale variant at the end of the warmup period. This necessitates a two-phase LR tuning process to ensure identical starting conditions for both optimizers:
\begin{enumerate}
\item \textbf{Phase 1: Base Optimizer Tuning.} We first tune the learning rate for the base optimizer over the entire training run to achieve the lowest final perplexity. This ensures the baseline itself is as strong as possible.
\item \textbf{Phase 2: \method /Quasi-hyperbolic Tuning.} Using the model state from the end of the base optimizer's warmup, we then tune the learning rate for the post-switch phase of \method and of its \ddp analogue. With a \texttt{WSD} scheduler, this corresponds to tuning the LR for the constant "stable" portion of training and for the cooldown.
\end{enumerate}
While more complex scheduling manipulations might yield further gains for \method, this two-phase approach provides the cleanest methodology for comparison. For every combination of momentum decay rates ($\beta$'s) and convex coefficients ($\omega$'s) used by \method, we independently perform this tuning procedure. For \adopt the LR sweeps in both phases search over values between $2^{-10}$ and $2^{-6}$ using powers of two, with the search grid refined by manually adding half-power steps (e.g., $2^{-8.5}, 2^{-7.5}, 2^{-6.5}$) around the optimal value.

\subsection{Optimizer Tuning Results}

First, our tuning of \adopt for \ddp revealed an optimal lr $\eta^\ast = 2^{-8}$. We now present the results of our tuning for the post-warmup lr for \methodadopt with $N=1$ first momenta in \cref{fig:qhm_tuning_results}. 

\begin{figure}[htb]
    \centering
    % Placeholder for the figure content
    \includegraphics[width=\linewidth]{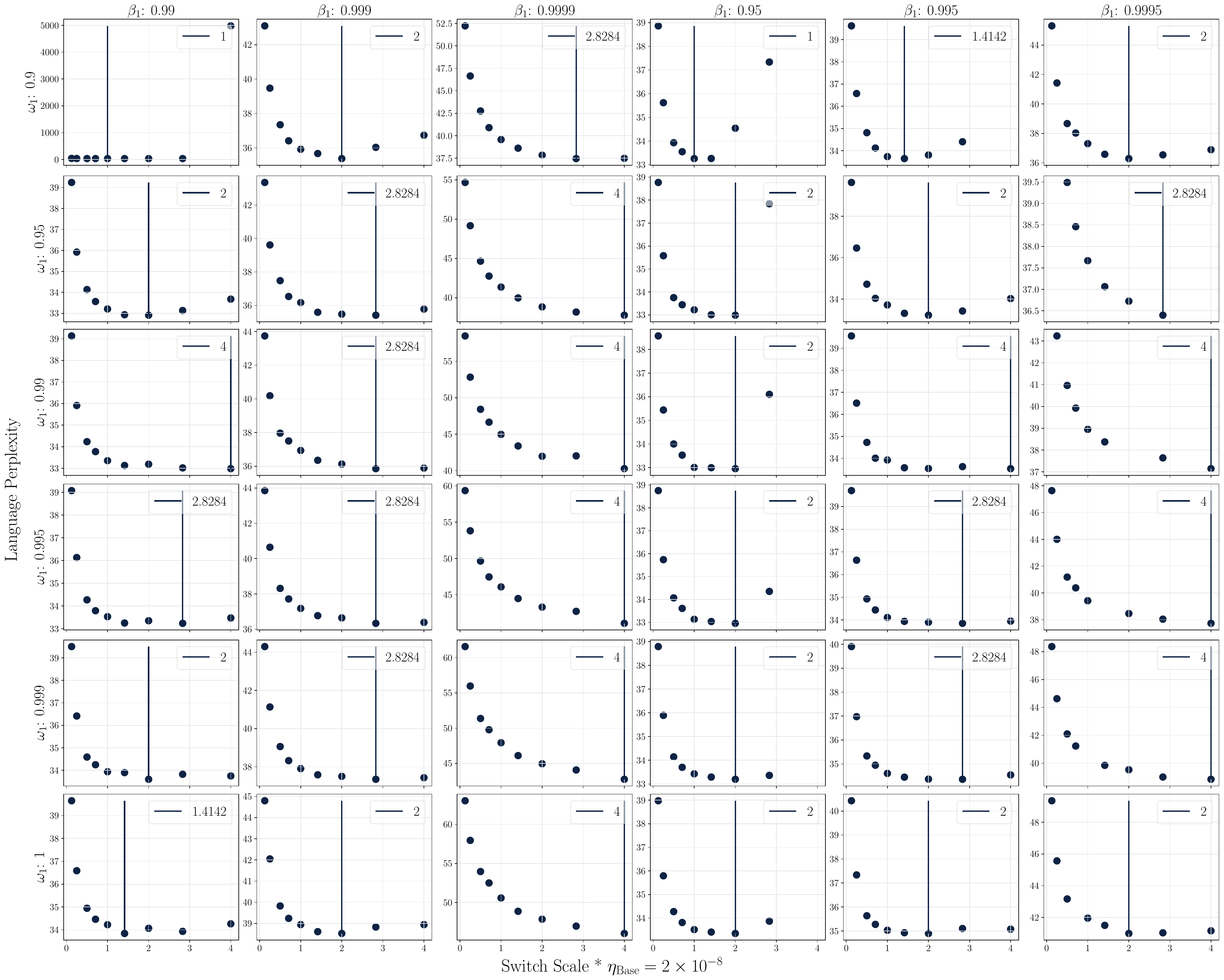}
    \caption{Visualizing the learning rate sweeps for different \method configurations. Each subplot shows the final perplexity for a given convex coefficient ($\omega$) and momentum decay ($\beta_1$), where $\beta_2=0.999$ was kept constant. The sweep demonstrates that the optimal learning rate and final performance are highly dependent on the choice of these internal hyperparameters, with $\beta_1 \in [0.995,0.999]$ and $\omega \in [0.9, 0.99]$ performing best for these short tuning experiments. The vertical line in each subplot marks the best-performing lr for that configuration.Switch scale referes to the multiple of the base learning rate that we select, the chosen learning rate can be computed via multiplication with $\eta_{\mathrm{BASE}}.$}
    \label{fig:qhm_tuning_results}
\end{figure}

A clear trend emerges from our results: methods with higher momentum decay rates ($\beta$s) or higher weights ($\omega$s) ascribed to the slow-moving momenta can tolerate significantly higher learning rates than standard momentum methods. This finding is in strong agreement with the previous findings of \citet{AggMo}, who similarly found that \aggmo can effectively utilize learning rates that are orders of magnitude higher than those suitable for classical momentum. 

\takeawaybox[Takeaway:]{%
Multi-timescale optimizers that emphasize slow-moving momenta (via high $\beta$ or $\omega$ values) are not only more stable but can also leverage much higher learning rates, enabling faster convergence than their single-timescale counterparts.
}

% % \input{files/appendices/notation-table}
\newpage
\section{Deterministic Optimizer-specific Variants of \method}
\addcontentsline{toc}{subsection}{\methodadopt}
\begin{algorithm}[h]
\caption{\methodadopt}
\label{alg:method_adopt}
\footnotesize
\begin{onehalfspace}
\begin{algorithmic}[1]
 \Require \textbf{Model tensors, hyper-parameters} \\
      \quad $x_0 \in \mathbb{R}^{d}$, $\{\bar{u}^{j}_{-1}\}_{j=1}^{N} \in (\mathbb{R}^d)^N$, $\bar{v}_{-1} \in \mathbb{R}^d$ — initial params, $N$ first momenta, second momentum\\
      \quad $\{\beta_{1,j}\}_{j=1}^N, \beta_2 \in [0,1)$ — decay rates for each momentum state \\
      \quad \textcolor{purple}{$\{\omega_j\}_{j=1}^N \in [0,1]$} — convex combination coefficients for first momenta, $\sum_{j=1}^N \omega_j \leq 1.0$ \\
      \quad $\{c_t\}_{t=0}^{T-1}, \{\eta_t\}_{t=0}^{T-1}$ — clipping and learning-rate schedules\\
      \quad $T,M \in \mathbb{N}_{+}$ — total optimization steps and number of workers\\
      \quad $K_x, \{K_j\}_{j=1}^{N}, K_v \in (\mathbb{N}_{+})^{N+2}$ — communication periods for parameters and states \\
      \quad $\texttt{OuterOpt}:\mathbb{R}^d\to\mathbb{R}^d$ — update params using an outer optimizer, averaging by default 
 \Ensure $x_T,\;\{u_{T-1}^{j}\}_{j=1}^{N},\;v_{T-1}$

 \State \textbf{for each worker} $m$: initialize $x_0^m, \{u_{-1}^{j,m}\}, v_{-1}^m$
 \For{$t = 0,\dots,T-1$}
     \ForAll{workers $m=0,\dots,M-1$ \textbf{in parallel}}
         \State $g_t^m \gets \nabla F(x_t^m;\xi_t^m)$ \hfill\textcolor{gray}{\scriptsize stochastic gradient}
         \State $v^m_t \gets \beta_2\bar{v}_{t-1} + (1-\beta_2)(g^m_t)^2$ \hfill\textcolor{gray}{\scriptsize update second momentum with raw gradient}
         \State $\bar{v}_{t} \gets$  \textbf{if} $(t \bmod K_v = 0)$ \textbf{then} $\mathbb{E}_m[v_{t}^{m}]$ \textbf{else} $v_{t}^{m}$ \hfill\textcolor{gray}{\scriptsize sync $v$ every $K_v$ steps}
         
         \State $\hat{g}_t^m \gets \frac{g_t^m}{\sqrt{v_t^m}+\epsilon}$ \hfill\textcolor{gray}{\scriptsize normalize gradient (ADOPT core step)}
         \State $\tilde{g}_t^m \gets \clip(\hat{g}_t^m, c_t)$ \hfill\textcolor{gray}{\scriptsize clip the normalized gradient}

         \For{$j = 1$ \textbf{to} $N$} \hfill\textcolor{gray}{\scriptsize update N first momenta}
             \State \textcolor{purple}{$u_t^{j,m} \gets \beta_{1,j} \bar{u}_{t-1}^{j} + (1-\beta_{1,j}) \tilde{g}_t^m$} \hfill\textcolor{purple}{\scriptsize use clipped, normalized gradient}
             \State \textcolor{purple}{$\bar{u}_{t}^{j} \gets$ \textbf{if} $(t \bmod K_j = 0)$ \textbf{then} $\mathbb{E}_m[u_{t}^{j,m}]$ \textbf{else} $u_{t}^{j,m}$ } \hfill\textcolor{purple}{\scriptsize sync $u^j$ every $K_j$ steps}
         \EndFor

         \State \textcolor{purple}{$\Delta_t^m \gets (1 - \sum_{j=1}^N \omega_j) \tilde{g}_t^m + \sum_{j=1}^N \omega_j u_t^{j,m}$} \hfill\textcolor{purple}{\scriptsize form combined update direction}
         
         \State $x_{t+1}^m \gets \bar{x}_t - \eta_t \Delta_t^m$ \hfill\textcolor{gray}{\scriptsize apply combined update}
         \State $\bar{x}_{t+1} \gets$  \textbf{if} $((t+1) \bmod K_x = 0) $ \textbf{then} \texttt{OuterOpt}($\mathbb{E}_m[x_{t+1}^{m}]$) \textbf{else} $x_{t+1}^{m}$ \hfill\textcolor{gray}{\scriptsize sync $x$ every $K_x$ steps}
     \EndFor
 \EndFor
\end{algorithmic}
\end{onehalfspace}
\end{algorithm}
\addcontentsline{toc}{subsection}{\methodsgdm}
\begin{algorithm}[h]
\caption{\methodsgdm}
\label{alg:method_sgdm_non_prob}
\footnotesize
\begin{onehalfspace}
\begin{algorithmic}[1]
 \Require \textbf{Model tensors, hyper-parameters} \\
      \quad $x_0 \in \mathbb{R}^{d}$, $\{\bar{u}^{j}_{-1}\}_{j=1}^{N} \in (\mathbb{R}^d)^N$ — initial params, $N$ first momenta\\
      \quad $\{\beta_{1,j}\}_{j=1}^N \in [0,1)$ — decay rates for each momentum state \\
      \quad \textcolor{purple}{$\{\omega_j\}_{j=1}^N \in [0,1]$} — convex combination coefficients for first momenta, $\sum_{j=1}^N \omega_j \leq 1.0$ \\
      \quad $\rho \in \mathbb{R}_{+}$, $\{\eta_t\}_{t=0}^{T-1}$ — clipping radius, learning-rate schedule\\
      \quad $T,M \in \mathbb{N}_{+}$ — total optimization steps and number of workers\\
      \quad $K_x, \{K_j\}_{j=1}^{N} \in (\mathbb{N}_{+})^{N+1}$ — communication periods for parameters and states \\
      \quad $\texttt{OuterOpt}:\mathbb{R}^d\to\mathbb{R}^d$ — update params using an outer optimizer, averaging by default 

 \Ensure $x_T,\;\{u_{T-1}^{j}\}_{j=1}^{N}$

 \State \textbf{for each worker} $m$: initialize $x_0^m, \{u_{-1}^{j,m}\}$
 \For{$t = 0,\dots,T-1$}
     \ForAll{workers $m=0,\dots,M-1$ \textbf{in parallel}}
         \State $\hat{g}_t^m \gets \clip(\nabla F(x_t^m;\xi_t^m),\rho)$ \hfill\textcolor{gray}{\scriptsize clipped stochastic gradient}

         \For{$j = 1$ \textbf{to} $N$} \hfill\textcolor{gray}{\scriptsize update N first momenta}
             \State \textcolor{purple}{$u_t^{j,m} \gets \beta_{1,j} \bar{u}_{t-1}^{j} + (1-\beta_{1,j}) \hat{g}_t^m$}
             \State \textcolor{purple}{$\bar{u}_{t}^{j} \gets$ \textbf{if} $(t \bmod K_j = 0)$ \textbf{then} $\mathbb{E}_m[u_{t}^{j,m}]$ \textbf{else} $u_{t}^{j,m}$ } \hfill\textcolor{purple}{\scriptsize sync $u^j$ every $K_j$ steps} 
         \EndFor

         \State \textcolor{purple}{$\Delta_t^m \gets (1 - \sum_{j=1}^N \omega_j) \hat{g}_t^m + \sum_{j=1}^N \omega_j u_t^{j,m}$} \hfill\textcolor{purple}{\scriptsize form combined update direction (unnormalized)}
         
         \State $x_{t+1}^m \gets \bar{x}_t - \eta_t \Delta_t^m$
         \State $\bar{x}_{t+1} \gets$  \textbf{if} $((t+1) \bmod K_x = 0) $ \textbf{then} \texttt{OuterOpt}($\mathbb{E}_m[x_{t+1}^{m}]$) \textbf{else} $x_{t+1}^{m}$ \hfill\textcolor{gray}{\scriptsize sync $x$ every $K_x$ steps}
     \EndFor
 \EndFor
\end{algorithmic}
\end{onehalfspace}
\end{algorithm}

\FloatBarrier
% \clearpage
\section{Convergence Analysis of \methodsgdm}\label{app:proof-sgdm}

\begin{algorithm}[htb]
\caption{\methodsgdm, probabilistic variant}
\label{alg:method_sgdm_prob}
\footnotesize
\begin{onehalfspace}
\begin{algorithmic}[1]
 \Require \textbf{Model tensors, hyper-parameters} \\
        \quad $x_0 \in \mathbb{R}^{d}$, $\{u^{j}_{-1}\}_{j=1}^{N} \in (\mathbb{R}^d)^N$ — initial parameters, $N$ first momenta\\
        \quad $\{\beta_{j}\}_{j=1}^N \in [0,1)$ — decay rates for each momentum state \\
        \quad \textcolor{purple}{$\{\omega_j\}_{j=1}^N \in [0,1]$} — convex combination of non-negative coefficients for first momenta, $\sum_{j=1}^N w_j = 1$ \\
        \quad $\{\eta_t\}_{t=0}^{T-1}$ — learning-rate schedule\\
        \quad $T,M \in \mathbb{N}_{+}$ — total optimization steps and number of workers\\
        \quad \textcolor{purple}{$p_x = \frac{1}{K_x}, \{p_j = \frac{1}{K_j}\}_{j=1}^{N} \in [0,1]^{N+1}$} — communication periods/probabilities for parameters and states

 \Ensure $x_T,\;\{u_{T-1}^{j}\}_{j=1}^{N}$

 \State \textbf{for each worker} $m$: initialize $x_0^m, \{u_{-1}^{j,m}\}$
 \For{$t = 0,\dots,T-1$}
    \ForAll{workers $m=0,\dots,M-1$ \textbf{in parallel}}
        \State $g_t^m \gets \nabla F_m(x_t^m;\xi_t^m)$
           \hfill\textcolor{gray}{\scriptsize stochastic gradient}

        \For{$j = 1$ \textbf{to} $N$} \hfill\textcolor{gray}{\scriptsize update N first momenta}
            \State $u_t^{j,m} \gets
                \begin{cases}
                \E_m[\beta_j u_{t-1}^{j,m} + (1-\beta_j)g_{t}^m], & \textcolor{purple}{\text{with probability } p_j} \\ 
                \beta_j u_{t-1}^{j,m} + (1-\beta_j)g_{t}^m, & \text{with probability } 1-p_j 
                \end{cases}$ \hfill\textcolor{purple}{\scriptsize sync $u$}
        \EndFor

        % \State $\omega_g \gets 1 - \sum_{j=1}^N \omega_j$ \hfill\textcolor{purple}{\scriptsize compute gradient weight (QHM structure)}
        \State $\Delta_t^m \gets \sum_{j=1}^N \omega_j u_t^{j,m}$ \hfill\textcolor{purple}{\scriptsize form combined update direction}
        \State $x_{t+1}^{m} \gets
            \begin{cases}
            \E_m[x_{t}^m - \eta_t \Delta_{t}^m], & \textcolor{purple}{\text{with probability } p_x} \\ 
            x_{t}^m - \eta_t \Delta_{t}^m, & \text{with probability } 1-p_x 
            \end{cases}$ \hfill\textcolor{gray}{\scriptsize sync $x$}
        \EndFor
 \EndFor
\end{algorithmic}
\end{onehalfspace}
\end{algorithm}

In order to facilitate the technical presentation, we model synchronization frequencies by assigning probabilities to each averaging event. For example, the parameters $x_t^m$ are synchronized with the probability $p_x = \frac{1}{K_x}$, which is statistically equivalent to performing the averaging in every $\frac{1}{p_x} = K_x$ iteration. Similarly, momentum $u_t^{j,m}$ synchronization happens with probability $p_j = \frac{1}{K_j}$, which can differ from $p_x$. Note that QHM structure is included since we can choose $\beta_1=0$ and get $u_t^{1,m} = g_t^m$.

\underline{Auxiliary notation.} Let $\E_m$ and $\E_j$ be the averaging operators with weights $\frac{1}{M}$ across $M$ workers and $\omega_j$ across $N$ momenta.
\begin{eqnarray*}
u_t^j &\eqdef& \E_m[u_t^{j,m}] = \beta_j u_{t-1}^j + (1-\beta_j)g_t, \text{where } g_t = \E_m[g_t^m] \\
x_{t+1}^{j,m} &\eqdef&
            \begin{cases}
            \E_m[x_{t}^{j,m} - \eta u_{t}^{j,m}], & \text{with probability } p_x \\ 
            x_{t}^{j,m} - \eta u_{t}^{j,m}, & \text{with probability } 1-p_x
            \end{cases} \\
x_{t+1}^j &\eqdef& \E_m[x_{t+1}^{j,m}] = x_t^j - \eta u_t^j, \quad x_{t+1}^m = \E_j [x_{t+1}^{j,m}] = \text{ (line 14) }.
\end{eqnarray*}
For the sake of notation, we also let $u_t^m = \Delta_t^m = \E_j[u_t^{j,m}],\, u_t = \E_m[u_t^m],\, x_t = \E_m[x_t^m]$ in the upcoming derivations.

\underline{Step 1 (virtual iterates).} Letting $x_{-1}^j = x_0^j = x_0$, define the global virtual iterations as follows
$$
z_t^j \eqdef \frac{1}{1-\beta_j} x_t^j - \frac{\beta_j}{1-\beta_j} x_{t-1}^j, \quad \text{and} \quad z_t \eqdef \E_j [z_t^j] \quad \text{for } t\ge0.
$$
The key property of this virtual iterates we are going to exploit in the next steps is that they follow averaged gradients, namely for any $t\ge0$ we have
\begin{eqnarray*}
z_{t+1} - z_t
&=& \E_j [z_{t+1}^j - z_t^j] \\
&=& \E_j  \left[\left(\frac{1}{1-\beta_j} x_{t+1}^j - \frac{\beta_j}{1-\beta_j} x_{t}^j\right) - \left(\frac{1}{1-\beta_j} x_t^j - \frac{\beta_j}{1-\beta_j} x_{t-1}^j\right) \right] \\
&=& \E_j \left[\frac{1}{1-\beta_j} (x_{t+1}^j - x_{t}^j) - \frac{\beta_j}{1-\beta_j} (x_{t}^j - x_{t-1}^j) \right] \\
&=& \E_j \left[\frac{1}{1-\beta_j}(-\eta u_t^j) - \frac{\beta_j}{1-\beta_j} (-\eta u_{t-1}^j) \right] \\
&=& \E_j \left[\frac{-\eta}{1-\beta_j}(u_t^j - \beta_j u_{t-1}^j) \right] 
= \E_j[-\eta g_t] = -\eta g_t.\\
\end{eqnarray*}

\underline{Step 2 (smoothness over virtual iterates).} Then we apply smoothness of the global loss function $f$ over these global virtual iterates.
\begin{eqnarray*}
f(z_{t+1})
&\le& f(z_t) + \langle \nabla f(z_t), z_{t+1} - z_t \rangle + \frac{L}{2}\|z_{t+1} - z_t\|^2 \\
&=& f(z_t)
+ \underbrace{\langle \nabla f(x_t), z_{t+1} - z_t \rangle}_{I}
+ \underbrace{\langle \nabla f(z_t) - \nabla f(x_t), z_{t+1} - z_t \rangle}_{II}
+ \underbrace{\frac{L}{2}\|z_{t+1} - z_t\|^2}_{III}.
\end{eqnarray*}

In the next step, we separately bound each term appearing in the above bound.

\underline{Step 3a (one step progress).} Bounding term I.
\begin{eqnarray*}
&& \E{\langle \nabla f(x_t), z_{t+1} - z_t \rangle} \\
&=& -\eta \E{\left\langle \nabla f(x_t), \frac{1}{M}\sum_{m=1}^M g_t^m \right\rangle}
= -\eta \E{\left\langle \nabla f(x_t), \frac{1}{M}\sum_{m=1}^M \nabla f_m(x_t^m) \right\rangle} \\
&=& -\frac{\eta}{2}\E{\|\nabla f(x_t)\|^2} - \frac{\eta}{2}\E{\left\|\frac{1}{M}\sum_{m=1}^M \nabla f_m(x_t^m) \right\|^2} + \frac{\eta}{2}\E{\left\| \nabla f(x_t) - \frac{1}{M}\sum_{m=1}^M \nabla f_m(x_t^m) \right\|^2} \\
&=& -\frac{\eta}{2}\E{\|\nabla f(x_t)\|^2} - \frac{\eta}{2}\E{\left\|\frac{1}{M}\sum_{m=1}^M \nabla f_m(x_t^m) \right\|^2} + \frac{\eta}{2}\E{\left\| \frac{1}{M}\sum_{m=1}^M \nabla f_m(x_t) - \nabla f_m(x_t^m) \right\|^2} \\
&\le& -\frac{\eta}{2}\E{\|\nabla f(x_t)\|^2} - \frac{\eta}{2}\E{\left\|\frac{1}{M}\sum_{m=1}^M \nabla f_m(x_t^m) \right\|^2} + \frac{\eta}{2M}\sum_{m=1}^M\E{\left\| \nabla f_m(x_t) - \nabla f_m(x_t^m) \right\|^2} \\
&\le& -\frac{\eta}{2}\E{\|\nabla f(x_t)\|^2} - \frac{\eta}{2}\E{\left\|\frac{1}{M}\sum_{m=1}^M \nabla f_m(x_t^m) \right\|^2} + \frac{\eta L^2}{2M}\sum_{m=1}^M \underbrace{\E{\left\| x_t - x_t^m \right\|^2}}_{\rm Lemma\;\ref{lem:x-xm}}.
\end{eqnarray*}

\underline{Step 3b (one step progress).} Bounding term II.
\begin{eqnarray*}
\E{\langle \nabla f(z_t) - \nabla f(x_t), z_{t+1} - z_t \rangle}
&=& -\eta \E{\left\langle \nabla f(z_t) - \nabla f(x_t), \frac{1}{M}\sum_{m=1}^M \nabla f_m(x_t^m) \right\rangle} \\
&\le& \frac{\eta\rho}{2} \E{\| \nabla f(z_t) - \nabla f(x_t)\|^2} + \frac{\eta}{2\rho} \E{\left\|\frac{1}{M}\sum_{m=1}^M \nabla f_m(x_t^m) \right\|^2} \\
&\le& \frac{\eta\rho L^2}{2} \underbrace{\E{\| z_t - x_t \|^2}}_{\rm Lemma\;\ref{lem:z-x}} + \frac{\eta}{2\rho} \E{\left\|\frac{1}{M}\sum_{m=1}^M \nabla f_m(x_t^m) \right\|^2}.
\end{eqnarray*}

\underline{Step 3c (one step progress).} Bounding term III.
\begin{eqnarray*}
\frac{L}{2}\E{\|z_{t+1} - z_t\|^2}
&=& \frac{\eta^2L}{2}\E{\left\| \frac{1}{M}\sum_{m=1}^M g_t^m \right\|^2} \\
&=& \frac{\eta^2L}{2}\E{\left\| \frac{1}{M}\sum_{m=1}^M g_t^m - \nabla f_m(x_t^m) \right\|^2} + \frac{\eta^2L}{2}\E{\left\| \frac{1}{M}\sum_{m=1}^M \nabla f_m(x_t^m) \right\|^2} \\
&=& \frac{\eta^2L}{2M^2}\sum_{m=1}^M\E{\left\| g_t^m - \nabla f_m(x_t^m) \right\|^2} + \frac{\eta^2L}{2}\E{\left\| \frac{1}{M}\sum_{m=1}^M \nabla f_m(x_t^m) \right\|^2} \\
&\le& \frac{\eta^2L}{2M}\sigma^2 + \frac{\eta^2L}{2}\E{\left\| \frac{1}{M}\sum_{m=1}^M \nabla f_m(x_t^m) \right\|^2}.
\end{eqnarray*}

\underline{Step 3abc (one step progress).} Combining previous bounds.
\begin{eqnarray*}
\E{f(z_{t+1})} - \E{f(z_t)}
&\le& \E \underbrace{\langle \nabla f(x_t), z_{t+1} - z_t \rangle}_{I}
+ \E \underbrace{\langle \nabla f(z_t) - \nabla f(x_t), z_{t+1} - z_t \rangle}_{II}
+ \E \underbrace{\frac{L}{2}\|z_{t+1} - z_t\|^2}_{III} \\
&\le& -\frac{\eta}{2}\E{\|\nabla f(x_t)\|^2} - \frac{\eta}{2}\E{\left\|\frac{1}{M}\sum_{m=1}^M \nabla f_m(x_t^m) \right\|^2} + \frac{\eta L^2}{2M}\sum_{m=1}^M \underbrace{\E{\left\| x_t - x_t^m \right\|^2}}_{\rm Lemma\;\ref{lem:x-xm}} \\
&& +\; \frac{\eta\rho L^2}{2} \underbrace{\E{\| z_t - x_t \|^2}}_{\rm Lemma\;\ref{lem:z-x}} + \frac{\eta}{2\rho} \E{\left\|\frac{1}{M}\sum_{m=1}^M \nabla f_m(x_t^m) \right\|^2} \\
&& +\; \frac{\eta^2L}{2K}\sigma^2 + \frac{\eta^2L}{2}\E{\left\| \frac{1}{M}\sum_{m=1}^M \nabla f_m(x_t^m) \right\|^2} \\
&\le& -\frac{\eta}{2}\E{\|\nabla f(x_t)\|^2}
      - \frac{\eta}{2} \left( 1 - \frac{1}{\rho} - \eta L \right)\E{\left\|\frac{1}{M}\sum_{m=1}^M \nabla f_m(x_t^m) \right\|^2} \\
&& +\; \frac{\eta\rho L^2}{2} \underbrace{\E{\| z_t - x_t \|^2}}_{\rm Lemma\;\ref{lem:z-x}} + \frac{\eta L^2}{2M}\sum_{m=1}^M \underbrace{\E{\left\| x_t - x_t^m \right\|^2}}_{\rm Lemma\;\ref{lem:x-xm}} + \frac{\eta^2L}{2M}\sigma^2.
\end{eqnarray*}

\underline{Step 4 (final).} Now we average over the iterates and apply the bounds derived in Lemmas \ref{lem:z-x} and \ref{lem:x-xm}.
\begin{eqnarray*}
\frac{\E[f(z_T) - f(z_0)]}{T}
&=& \frac{1}{T}\sum_{t=0}^{T-1} \E[f(z_{t+1}) - f(z_t)] \\
&\le& -\frac{\eta}{2T}\sum_{t=0}^{T-1}\E{\|\nabla f(x_t)\|^2}
      - \frac{\eta}{2} \left( 1 - \frac{1}{\rho} - \eta L \right) \frac{1}{T}\sum_{t=0}^{T-1}\E{\left\|\frac{1}{M}\sum_{m=1}^M \nabla f_m(x_t^m) \right\|^2} \\
&& +\; \frac{\eta\rho L^2}{2} \underbrace{\frac{1}{T}\sum_{t=0}^{T-1}\E{\| z_t - x_t \|^2}}_{\rm Lemma\;1}
+ \frac{\eta L^2}{2} \underbrace{\frac{1}{TM}\sum_{t=0}^{T-1}\sum_{m=1}^M\E{\left\| x_t - x_t^m \right\|^2}}_{\rm Lemma\;2} + \frac{\eta^2L}{2M}\sigma^2 \\
&\le& -\frac{\eta}{2T}\sum_{t=0}^{T-1}\E{\|\nabla f(x_t)\|^2}
      - \frac{\eta}{2} \left( 1 - \frac{1}{\rho} - \eta L \right) \frac{1}{T}\sum_{t=0}^{T-1}\E{\left\|\frac{1}{M}\sum_{m=1}^M \nabla f_m(x_t^m) \right\|^2}
       + \frac{\eta^2L}{2M}\sigma^2 \\
&& +\; \frac{\eta\rho L^2}{2} \left( \frac{\eta^2\beta_{\omega}^2}{M}\sigma^2 + \eta^2\beta_{\omega}^2 \frac{1}{T}\sum_{\tau=0}^{T-1} \E\left\| \frac{1}{M}\sum_{m=1}^M \nabla f_m(x_{\tau}^m) \right\|^2 \right) \\
&& +\;   \frac{\eta L^2}{2} \left( 12\eta^2 (B^2-1) \psi \cdot \frac{1}{T}\sum_{t=0}^{T-1}\E\|\nabla f(\theta^{t})\|^2
+ 4\eta^2\psi(\sigma^2 + 3G^2) \right) \\
&\le& -\frac{\eta}{2}\left(1 - 12\eta^2L^2(B^2-1)\psi\right) \frac{1}{T}\sum_{t=0}^{T-1}\E{\|\nabla f(x_t)\|^2} \\
&& -\; \frac{\eta}{2} \left( 1 - \frac{1}{\rho} - \eta L - \eta^2\beta_{\omega}^2 \rho L^2 \right) \frac{1}{T}\sum_{t=0}^{T-1}\E{\left\|\frac{1}{M}\sum_{m=1}^M \nabla f_m(x_t^m) \right\|^2} \\
&& +\; \frac{\eta^2L}{2M}\sigma^2 
+ \frac{\eta^3\rho L^2 \beta_{\omega}^2}{2M}\sigma^2 
+ 2\eta^3 L^2\psi(\sigma^2 + 3G^2).
\end{eqnarray*}
Next, we choose $\rho=2$ and step size $\eta$ such that
\begin{eqnarray*}
12\eta^2L^2(B^2-1)\psi \le \frac{1}{2} &\iff& \text{to bound the first term} \\
\eta L + 2\eta^2\beta_{\omega}^2 L^2 \le \frac{1}{2} &\iff& \text{to bound the second term} \\
12\eta^2 L^2 \psi \le \frac{1}{2} &\iff& \text{from Lemma~\ref{lem:x-xm}}
\end{eqnarray*}
Notice that
$$
\eta_0 \eqdef \left( 4L \max\left(\beta_{\omega}, 6\sqrt{\psi\max(1,B^2-1)} \right) \right)^{-1}
$$
satisfies all three bounds. Then, with any $\eta\le\eta_0$ we get
\begin{eqnarray*}
\frac{\E[f(z_T) - f(z_0)]}{T}
&\le& -\frac{\eta}{4T}\sum_{t=0}^{T-1}\E{\|\nabla f(x_t)\|^2} \\
&& +\; \frac{\eta^2L}{2M}\sigma^2 
+ \frac{\eta^3\rho L^2 \beta_{\omega}^2}{2M}\sigma^2 
+ 2\eta^3 L^2\psi(\sigma^2 + 3G^2).
\end{eqnarray*}
Noticing that $z_0=x_0$ and $f^* \le f(z_T)$, we have
\begin{eqnarray*}
\frac{1}{T}\sum_{t=0}^{T-1}\E{\|\nabla f(x_t)\|^2}
\le \frac{4(f(x_0) - f^*)}{\eta T}
+ \frac{2\eta L}{M}\sigma^2 
+ \frac{4\eta^2 L^2 \beta_{\omega}^2}{M}\sigma^2 
+ 8\eta^2 L^2\psi(\sigma^2 + 3G^2).
\end{eqnarray*}

Furthermore, choosing $\eta = \min(\eta_0, \frac{1}{\sqrt{T}})$, we get the following rate:
\begin{eqnarray*}
&& \frac{1}{T}\sum_{t=0}^{T-1}\E{\|\nabla f(x_t)\|^2} \\
&\le& \max\left(1,\frac{1}{\eta_0\sqrt T}\right) \frac{4(f(x_0) - f^*)}{\sqrt T}
+ \frac{2L\sigma^2 }{M\sqrt{T}}
+ \frac{4 L^2 \beta_{\omega}^2 \sigma^2 }{MT}
+ \frac{8 L^2\psi(\sigma^2 + 3G^2)}{T} \\
&\le& \frac{4(f(x_0) - f^*)}{\sqrt T}
+ \frac{2L\sigma^2 }{M\sqrt{T}}
+ \frac{4(f(x_0) - f^*)}{\eta_0 T}
+ \frac{4 L^2 \beta_{\omega}^2 \sigma^2 }{MT}
+ \frac{8 L^2\psi(\sigma^2 + 3G^2)}{T} \\
&=& \frac{4}{\sqrt{T}}\left(f(x_0) - f^*
+ \frac{L\sigma^2 }{2M} \right)
+ \mathcal{O}\left(\frac{1+\beta_{\omega}^2 + \psi}{T}\right).
\end{eqnarray*}

% ======================================================
% ======================================================
\subsection{Key Lemmas}

\begin{lemma}\label{lem:z-x}
  For all $T\geq 1$, we have
  \begin{align}
    \sum_{t=0}^{T-1} \norm{z_t - x_t}^2
    \leq \frac{\eta^2\beta_{\omega}^2}{M} T\sigma^2 + \eta^2\beta_{\omega}^2 \sum_{t=0}^{T-1} \E\left\| \frac{1}{M}\sum_{m=1}^M \nabla f_m(x_{t}^m) \right\|^2,
  \end{align}
  where
  \begin{equation*}
    \beta_{\omega} \eqdef \sum_{j=1}^N \frac{\omega_j \beta_j}{1-\beta_j}. 
  \end{equation*}
\end{lemma}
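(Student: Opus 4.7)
The plan is to express $z_t - x_t$ explicitly in terms of the momenta $\{u_{t-1}^j\}$, apply Jensen's inequality to isolate a factor of $\beta_{\omega}$, unroll each $u_t^j$ as an exponential moving average of the batch-averaged stochastic gradients $g_s = \E_m[g_s^m]$, and finally use the worker-noise bound to separate out the $\sigma^2/M$ term.

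\textbf{Step 1 (explicit form).} Using $x_{t+1}^j - x_t^j = -\eta u_t^j$ together with $x_t = \E_j[x_t^j]$ and the definition of $z_t^j$, one gets
\begin{equation*}
z_t^j - x_t^j = \tfrac{\beta_j}{1-\beta_j}\,(x_t^j - x_{t-1}^j) = -\tfrac{\eta\beta_j}{1-\beta_j}\, u_{t-1}^j,
\end{equation*}
so averaging over $j$ yields $z_t - x_t = -\eta\sum_{j=1}^N \alpha_j u_{t-1}^j$ with $\alpha_j \eqdef \omega_j\beta_j/(1-\beta_j)$ and $\sum_j \alpha_j = \beta_{\omega}$. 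Note $z_0 = x_0$.

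\textbf{Step 2 (Jensen).} Writing $\sum_j \alpha_j u_{t-1}^j = \beta_{\omega}\sum_j (\alpha_j/\beta_{\omega}) u_{t-1}^j$ and applying Jensen's inequality to the convex combination gives
\begin{equation*}
\|z_t - x_t\|^2 \le \eta^2\,\beta_{\omega}\sum_{j=1}^N \alpha_j\,\|u_{t-1}^j\|^2.
\end{equation*}

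\textbf{Step 3 (unroll and Cauchy--Schwarz).} With $u_{-1}^j=0$, unrolling the EMA gives $u_t^j = (1-\beta_j)\sum_{s=0}^t \beta_j^{t-s} g_s$. Cauchy--Schwarz together with $\sum_{s=0}^t \beta_j^{t-s}\le 1/(1-\beta_j)$ yields
\begin{equation*}
\|u_t^j\|^2 \le (1-\beta_j)\sum_{s=0}^t \beta_j^{t-s}\,\|g_s\|^2,
\end{equation*}
and summing in $t$ while swapping the order of summation (the inner geometric series is again bounded by $1/(1-\beta_j)$) telescopes the $(1-\beta_j)$ factor to give $\sum_{t=0}^{T-1} \E\|u_{t-1}^j\|^2 \le \sum_{s=0}^{T-1}\E\|g_s\|^2$.

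\textbf{Step 4 (noise decomposition and combination).} Since $g_s - \E_m[\nabla f_m(x_s^m)] = \tfrac{1}{M}\sum_m(g_s^m - \nabla f_m(x_s^m))$ is a mean-zero average of independent noises with per-worker variance bounded by $\sigma^2$ (Assumption~\ref{ass:boundgrad}), the bias--variance split gives $\E\|g_s\|^2 \le \sigma^2/M + \E\|\E_m[\nabla f_m(x_s^m)]\|^2$. Plugging this into the chain above and using $\sum_j \alpha_j = \beta_{\omega}$ produces the claimed bound. The main piece of bookkeeping is the Step~3 double-sum swap and keeping track of which terms generate the extra $\beta_{\omega}$ (Jensen) versus the additional factor absorbed into $\beta_{\omega}^2$ (the $\sum_j \alpha_j$ closing step); beyond that the argument is routine.
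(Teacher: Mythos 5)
Your proof is correct. It arrives at the same bound as the paper but organizes the Jensen/convexity step differently: the paper substitutes the unrolled form $u_t^j = (1-\beta_j)\sum_{\tau\le t}\beta_j^{t-\tau}g_\tau$ into $z_t-x_t$ \emph{before} taking any norms, obtaining $z_t - x_t = -\eta\sum_{\tau=0}^{t}\beta_\omega^{(t-\tau+1)}g_\tau$ with $\beta_\omega^{(\tau)} = \sum_j\omega_j\beta_j^\tau$, and then applies a single Jensen step over the temporal weights (whose sum $s_t$ is bounded by $\beta_\omega$). You instead keep $u_{t-1}^j$ intact, apply Jensen over the $j$-index first — pulling out one factor of $\beta_\omega$ from $\sum_j\alpha_j$ — and then apply a second Jensen over the EMA weights inside each $\|u^j\|^2$, with the $1/(1-\beta_j)$ geometric-sum bound appearing twice and telescoping. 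Both routes deliver the $\beta_\omega^2$ factor by splitting it across two inequalities, so the argument is two-stage in your version and one-stage in the paper's; yours is slightly more modular, the paper's slightly more compact. Two small remarks. First, you correctly use $x_t^j - x_{t-1}^j = -\eta u_{t-1}^j$; the paper's displayed chain writes $u_t^j$ there, which is an index typo that does not affect the final bound. Second, what you invoke in Step~3 is really the weighted Jensen (power-mean) inequality rather than Cauchy--Schwarz, though the bound $\|u_t^j\|^2 \le (1-\beta_j)\sum_{s\le t}\beta_j^{t-s}\|g_s\|^2$ is correct either way.
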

\begin{proof}
  Since $u_{-1} = 0$, unrolling the update rule of momentum, for any $t\geq 0$ we get
  \begin{align*}
    u_t^j
    % &= \beta u_{t-1} + (1-\beta)g_t = (1-\beta)\sum_{\tau=0}^{t} \beta^{t-\tau} g^{\tau} \\
    = \beta_j u^j_{t-1} + (1-\beta_j)g_t
    = (1-\beta_j)\sum_{\tau=0}^{t} \beta_j^{t-\tau} g^{\tau}.
  \end{align*}
Using this and the definition of the average iterates, we have
\begin{align*}
  z_t^j - x_t^j
  &= \frac{\beta_j}{1-\beta_j} (x_t^j - x_{t-1}^j) 
  = -\frac{\eta\beta_j}{1-\beta_j} u^j_t
  = -\eta\beta_j \sum_{\tau=0}^{t} \beta_j^{t-\tau} g_{\tau} \\
  z_t - x_t
  &= \E_j[z_t^j - x_t^j]
  = \E_j\left[ -\eta\beta_j \sum_{\tau=0}^{t} \beta_j^{t-\tau} g_{\tau} \right]
  = -\eta \sum_{\tau=0}^{t} \E_j\left[ \beta_j^{t-\tau+1} \right] g_{\tau} \\
  &= -\eta \sum_{\tau=0}^{t} \beta_{\omega}^{(t-\tau+1)} g_{\tau}, \quad \text{where } \beta_{\omega}^{(\tau)} = \E_j\left[ \beta_j^{\tau} \right] = \sum_{j=1}^N \omega_j\beta_j^{\tau}.
\end{align*}

Let us make another notation for the sum of weights in the above sum and bound it as follows:

\begin{eqnarray*}
s_t
&\eqdef& \sum_{\tau=0}^{t} \beta_{\omega}^{(t-\tau+1)}
= \sum_{\tau=0}^{t} \sum_{j=1}^N \omega_j\beta_j^{t-\tau+1} \\
&=& \sum_{j=1}^N \omega_j \sum_{\tau=0}^{t} \beta_j^{t-\tau+1}
= \sum_{j=1}^N \omega_j \frac{\beta_j-\beta_j^{t+2}}{1-\beta_j}
\le \sum_{j=1}^N \frac{\omega_j \beta_j}{1-\beta_j} \eqdef \beta_{\omega}.
\end{eqnarray*}

Using convexity of squared norm, we have
\begin{eqnarray*}
  \norm{z_t - x_t}^2
  &=& \eta^2 s_t^2 \left\| \sum_{\tau=0}^{t} \frac{\beta_{\omega}^{(t-\tau+1)}}{s_t} g_{\tau} \right\|^2
  \le \eta^2 s_t^2 \sum_{\tau=0}^{t} \frac{\beta_{\omega}^{(t-\tau+1)}}{s_t} \|g_{\tau}\|^2
  \le \eta^2 \beta_{\omega} \sum_{\tau=0}^{t} \beta_{\omega}^{(t-\tau+1)} \|g_{\tau}\|^2 \\
\end{eqnarray*}
Summing over the iterates yields
\begin{eqnarray*}
  \sum_{t=0}^{T-1} \E\norm{z_t - x_t}^2
  &\le& \eta^2\beta_{\omega} \sum_{t=0}^{T-1}\sum_{\tau=0}^{t} \beta_{\omega}^{(t-\tau+1)} \E\|g_{\tau}\|^2 \\
  &=& \eta^2\beta_{\omega} \sum_{\tau=0}^{T-1} \sum_{t=\tau}^{T-1} \beta_{\omega}^{(t-\tau+1)} \E\|g_{\tau}\|^2
  = \eta^2\beta_{\omega} \sum_{\tau=0}^{T-1} \left( \sum_{t=1}^{T-\tau} \beta_{\omega}^{(t)} \right) \E\|g_{\tau}\|^2 \\
  &=& \eta^2\beta_{\omega} \sum_{\tau=0}^{T-1} \left(\sum_{j=1}^N \omega_j\frac{\beta_j-\beta_j^{T-\tau+1}}{1-\beta_j}\right) \E\|g_{\tau}\|^2 \\
  &\le& \eta^2\beta_{\omega}^2 \sum_{\tau=0}^{T-1} \E\|g_{\tau}\|^2 \\
  &=& \eta^2\beta_{\omega}^2 \sum_{\tau=0}^{T-1} \E\left\| \frac{1}{M}\sum_{m=1}^M g_{\tau}^m - \nabla f_m(x_{\tau}^m) \right\|^2
  + \eta^2\beta_{\omega}^2 \sum_{\tau=0}^{T-1} \E\left\| \frac{1}{M}\sum_{m=1}^M \nabla f_m(x_{\tau}^m) \right\|^2 \\
  &=& \frac{\eta^2\beta_{\omega}^2}{M^2} \sum_{\tau=0}^{T-1} \sum_{m=1}^M \E\left\| g_{\tau}^m - \nabla f_m(x_{\tau}^m) \right\|^2
  + \eta^2\beta_{\omega}^2 \sum_{\tau=0}^{T-1} \E\left\| \frac{1}{M}\sum_{m=1}^M \nabla f_m(x_{\tau}^m) \right\|^2 \\
  &=& \frac{\eta^2\beta_{\omega}^2}{M} T\sigma^2
  + \eta^2\beta_{\omega}^2 \sum_{\tau=0}^{T-1} \E\left\| \frac{1}{M}\sum_{m=1}^M \nabla f_m(x_{\tau}^m) \right\|^2.
\end{eqnarray*}
\end{proof}

\begin{lemma} \label{lem:x-xm}
If $24\eta^2 L^2 \psi \le 1$, then
\begin{align*}
\frac{1}{MT} \sum_{t=0}^{T-1} \sum_{m=1}^{M} \E\norm{x_t - x_t^m}^2
\leq 12\eta^2 (B^2-1) \psi \cdot \frac{1}{T}\sum_{t=0}^{T-1}\E\|\nabla f(x_t)\|^2
+ 4\eta^2\psi(\sigma^2 + 3G^2),
\end{align*}
where
$$\psi = \frac{4(1-p_x)}{p_x^2} \cdot \sum_{j=1}^N \omega_j\frac{(1-\beta_j)(1-p_j)}{1-(1-p_j)\beta_j}$$
\end{lemma}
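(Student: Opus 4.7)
The plan is to track two coupled drift quantities: the parameter drift $D^x_t \eqdef \frac{1}{M}\sum_m \E\|x_t^m - x_t\|^2$ and, for each momentum index $j$, the momentum drift $D^{u,j}_t \eqdef \frac{1}{M}\sum_m \E\|u_t^{j,m} - u_t^j\|^2$. Because $\Delta_t^m = \sum_j \omega_j u_t^{j,m}$ is a convex combination with $\sum_j \omega_j = 1$, Jensen's inequality yields $\frac{1}{M}\sum_m \E\|\Delta_t^m - \E_m[\Delta_t^m]\|^2 \le \sum_j \omega_j D^{u,j}_t$, so the parameter drift is ultimately controlled by an $\omega$-weighted sum of momentum drifts. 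This is the structural reason $\psi$ factors as $(4(1-p_x)/p_x^2)\cdot\sum_j \omega_j(\cdots)$.

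First I would derive the momentum recursion. Conditioning on the independent sync event at step $t$ (probability $p_j$) collapses $D^{u,j}_t$ to zero on sync and otherwise propagates the deviation as $u_t^{j,m} - u_t^j = \beta_j(u_{t-1}^{j,m} - u_{t-1}^j) + (1-\beta_j)(g_t^m - g_t)$. Since $\beta_j + (1-\beta_j) = 1$, convexity of $\|\cdot\|^2$ gives directly $D^{u,j}_t \le (1-p_j)\beta_j D^{u,j}_{t-1} + (1-p_j)(1-\beta_j) V_t$, where $V_t \eqdef \frac{1}{M}\sum_m \E\|g_t^m - g_t\|^2$. Summing the geometric series (rate $(1-p_j)\beta_j$) yields $\sum_t D^{u,j}_t \le \frac{(1-p_j)(1-\beta_j)}{1-(1-p_j)\beta_j}\sum_t V_t$. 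The same sync argument applied to the parameter update gives $D^x_{t+1} \le (1-p_x)\E\|(x_t^m - x_t) - \eta(\Delta_t^m - \E_m[\Delta_t^m])\|^2$; a Young's inequality with parameter $\alpha = p_x/(2(1-p_x))$ then produces $D^x_{t+1} \le (1-p_x/2) D^x_t + \frac{2(1-p_x)\eta^2}{p_x}\sum_j \omega_j D^{u,j}_t$. Summing in $t$ and invoking the momentum bound above combines to $\sum_t D^x_t \le \eta^2 \psi \sum_t V_t$.

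The final step bounds $V_t$ via Assumptions~\ref{ass:smooth},~\ref{ass:boundgrad},~\ref{ass:het}. Writing $g_t^m = \nabla f_m(x_t^m) + \xi_t^m$ with independent noise, the cross-worker gradient variance splits into a noise piece bounded by $\sigma^2$ and a deterministic piece. Using the identity $\frac{1}{M}\sum_m \|a_m - \bar a\|^2 \le \frac{1}{M}\sum_m\|a_m - c\|^2$ with $c = \nabla f(x_t)$, then expanding $\nabla f_m(x_t^m) - \nabla f(x_t)$ around $\nabla f_m(x_t)$ via $L$-smoothness, introduces an $L^2 D^x_t$ term; the remaining $\frac{1}{M}\sum_m\|\nabla f_m(x_t) - \nabla f(x_t)\|^2 = \frac{1}{M}\sum_m\|\nabla f_m(x_t)\|^2 - \|\nabla f(x_t)\|^2$ is controlled by Assumption~\ref{ass:het}, which produces the tighter coefficient $(B^2-1)$ rather than the naive $B^2$. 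Substituting this into $\sum_t D^x_t \le \eta^2\psi \sum_t V_t$ creates a self-referential inequality containing $\eta^2\psi L^2 \sum_t D^x_t$ on the right, which is absorbed into the left-hand side using the hypothesis $24\eta^2 L^2 \psi \le 1$; division by $T$ gives the claim.

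The main obstacle is the two-level coupling: the Young parameters must be chosen so that the $D^x$ recursion's contraction stays at $1 - p_x/2$ and its expansion coefficient at $O(1/p_x)$, so that the composition with the momentum bound yields the tight constant $4(1-p_x)/p_x^2$ in $\psi$ rather than a weaker polynomial. A secondary subtlety is obtaining $(B^2-1)$ rather than $B^2$, which requires centering the heterogeneity bound at $\nabla f(x_t)$ (via variance-versus-second-moment and then subtracting $\|\nabla f(x_t)\|^2$) instead of directly applying Assumption~\ref{ass:het} to $\frac{1}{M}\sum_m \|\nabla f_m(x_t^m)\|^2$.
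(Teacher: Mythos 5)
Your proof is correct and follows the same skeleton as the paper's: the same probabilistic-sync conditioning, convexity (not Young) for the momentum recursion using $\beta_j+(1-\beta_j)=1$, Young's inequality for the parameter recursion, Jensen over $\omega_j$, the centered heterogeneity bound (the paper's Lemma~\ref{lem:het-var}) to get $(B^2-1)$ rather than $B^2$, and absorbing the self-referential drift term via the smallness condition $24\eta^2L^2\psi\le1$. The one technical difference is in how the two coupled recursions are solved: you sum the one-step inequalities over $t$ and solve directly for $\sum_t D^x_t$ (a telescoping argument), while the paper unrolls both recursions into a nested sum $\sum_{\tau}\sum_{\nu} q_x^{t-\tau}q_j^{\tau-\nu-1}(\cdots)$, swaps summation order, and collapses it via the identity $\frac{q_x^{t-\nu}-q_j^{t-\nu}}{q_x-q_j}$ — your version is cleaner bookkeeping that reaches the same $\psi$. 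Similarly, your fixed Young parameter $\alpha=p_x/(2(1-p_x))$ yields the coefficient $2(1-p_x)(2-p_x)/p_x^2\le 4(1-p_x)/p_x^2$ directly, whereas the paper optimizes $s$ to $s^*=\nicefrac{1}{\sqrt{1-p_x}}-1$ and then bounds $(1+\sqrt{1-p_x})^2\le4$; both land on the same constant in $\psi$.
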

\begin{proof}
Let us expand the term $\E{\|x_{t+1} - x_{t+1}^m\|^2}$ using $x_{t+1}^m$'s probabilistic update rule:
\begin{eqnarray*}
\E{\|x_{t+1} - x_{t+1}^m\|^2}
&=& p_x\cdot 0 + (1-p_x)\cdot \E{\|x_{t} - \eta u_{t} - (x_{t}^m - \eta u_{t}^m) \|^2}\\
&=& (1-p_x)\cdot \E{\|x_{t} - x_{t}^m - \eta(u_t - u_t^m)\|^2}\\
&\le& (1-p_x)(1+s) \E{\|x_{t} - x_{t}^m\|^2} + \eta^2(1-p_x)(1+\nicefrac{1}{s})\E{\|u_t - u_t^m\|^2}\\
&\le& \eta^2 (1-p_x)(1+\nicefrac{1}{s}) \sum_{\tau=1}^t ((1-p_x)(1+s))^{t-\tau} \E{\|u_{\tau} - u_{\tau}^m\|^2}.
\end{eqnarray*}
where $s>0$ will be chosen later. Next we expand the term $\E{\|u_t^j - u_t^{j,m}\|^2}$ using $u_t^{j,m}$'s probabilistic update rule:
\begin{eqnarray*}
\E{\|u_t^j - u_t^{j,m}\|^2}
&=& p_j \cdot 0 + (1-p_j)\cdot \E{\left\|\frac{1}{M}\sum_{m=1}^M (\beta_j u_{t-1}^{j,m} + (1-\beta_j)g_{t-1}^m) - (\beta_j u_{t-1}^{j,m} + (1-\beta_j)g_{t-1}^m) \right\|^2} \\
&=& (1-p_j)\E{\left\|\beta_j (u_{t-1}^j - u_{t-1}^{j,m}) + (1-\beta_j)(g_{t-1} - g_{t-1}^m) \right\|^2} \\
&\le& (1-p_j)\beta_j\E{\|(u_{t-1}^j - u_{t-1}^{j,m}) \|^2}
+ (1-p_j)(1-\beta_j)\E{\| g_{t-1} - g_{t-1}^m \|^2} \\
&\le& (1-p_j)(1-\beta_j)\sum_{\tau=0}^{t-1} ((1-p_j)\beta_j)^{t-\tau-1} \E{\| g_{\tau} - g_{\tau}^m \|^2} \\
% &\le'& \frac{1-\beta_j}{\beta_j}\sum_{\tau=0}^{t-1} ((1-p_j)\beta_j)^{t-\tau} \E{\| g_{\tau} - g_{\tau}^m \|^2} \\
\E{\|u_t - u_t^m\|^2}
&\le& \sum_{j=1}^N \omega_j \E{\|u_t^j - u_t^{j,m}\|^2} \\
&\le& \sum_{j=1}^N \omega_j (1-p_j)(1-\beta_j) \sum_{\tau=0}^{t-1} ((1-p_j)\beta_j)^{t-\tau-1} \E{\| g_{\tau} - g_{\tau}^m \|^2} \\
&\le& \sum_{\tau=0}^{t-1} \left(\sum_{j=1}^N \omega_j (1-p_j)(1-\beta_j) ((1-p_j)\beta_j)^{t-\tau-1} \right) \E{\| g_{\tau} - g_{\tau}^m \|^2} \\
&\le& \sum_{\tau=0}^{t-1} \left(\sum_{j=1}^N \omega_j (1-p_j)(1-\beta_j) q_j^{t-\tau-1} \right) \E{\| g_{\tau} - g_{\tau}^m \|^2}.
\end{eqnarray*}

% \begin{eqnarray*}
% \E{\|u_t - u_t^m\|^2}
% &=& p_u \cdot 0 + (1-p_u)\cdot \E{\left\|\frac{1}{M}\sum_{m=1}^M (\beta u_{t-1}^m + (1-\beta)g_{t-1}^m) - (\beta u_{t-1}^m + (1-\beta)g_{t-1}^m) \right\|^2} \\
% &=& (1-p_u)\E{\left\|\beta (u_{t-1}-u_{t-1}^m) + (1-\beta)(g_{t-1} - g_{t-1}^m) \right\|^2} \\
% &\le& (1-p_u)\beta\E{\|(u_{t-1}-u_{t-1}^m) \|^2}
% + (1-p_u)(1-\beta)\E{\| g_{t-1} - g_{t-1}^m \|^2} \\
% &\le& (1-p_u)(1-\beta)\sum_{\tau=0}^{t-1} ((1-p_u)\beta)^{t-1-\tau} \E{\| g_{\tau} - g_{\tau}^m \|^2} \\
% &\le& \frac{1-\beta}{\beta}\sum_{\tau=0}^{t-1} ((1-p_u)\beta)^{t-\tau} \E{\| g_{\tau} - g_{\tau}^m \|^2}
% \end{eqnarray*}
Denote $q_x = (1-p_x)(1+s),\;q'_x = (1-p_x)(1+\nicefrac{1}{s})$ and $q_j = (1-p_j)\beta_j$. Combining the previous two bounds, we get
\begin{eqnarray}
&& \frac{1}{M}\sum_{m=1}^M\E{\|x_t - x_t^m\|^2} \nonumber \\
&\le& \eta^2 q'_x \sum_{\tau=1}^t q_x^{t-\tau} \frac{1}{M}\sum_{m=1}^M\E{\|u_{\tau} - u_{\tau}^m\|^2} \label{double-geom-sum} \\
&\le& \eta^2 q'_x \sum_{\tau=1}^t q_x^{t-\tau} \frac{1}{M}\sum_{m=1}^M \sum_{\nu=0}^{\tau-1} \left(\sum_{j=1}^N \omega_j (1-p_j)(1-\beta_j) q_j^{\tau-\nu-1} \right) \E{\| g_{\nu} - g_{\nu}^m \|^2} \nonumber \\
&=& \eta^2 q'_x \sum_{j=1}^N \omega_j (1-p_j)(1-\beta_j) \sum_{\tau=1}^t \sum_{\nu=0}^{\tau-1} q_x^{t-\tau-1} q_j^{\tau-\nu} \left[\frac{1}{M}\sum_{m=1}^M\E{\| g_{\nu} - g_{\nu}^m \|^2}\right] \nonumber \\
&=& \eta^2 q'_x \sum_{j=1}^N \omega_j (1-p_j)(1-\beta_j) \sum_{\nu=0}^{t-1} \sum_{\tau=\nu+1}^{t} q_x^{t-\tau} q_j^{\tau-\nu-1} \left[\frac{1}{M}\sum_{m=1}^M\E{\| g_{\nu} - g_{\nu}^m \|^2}\right] \nonumber \\
&=& \eta^2 q'_x \sum_{j=1}^N \omega_j (1-p_j)(1-\beta_j) \sum_{\nu=0}^{t-1}
\frac{q_x^{t-\nu} - q_j^{t-\nu}}{q_x-q_j} \left[\frac{1}{M}\sum_{m=1}^M\E{\| g_{\nu} - g_{\nu}^m \|^2}\right], \nonumber \\
&=& \eta^2 q'_x \sum_{j=1}^N \omega_j (1-\beta_j)(1-p_j) \sum_{\nu=0}^{t-1}
\frac{q_x^{t-\nu} - q_j^{t-\nu}}{q_x-q_j} \left[\frac{1}{M}\sum_{m=1}^M\E{\| g_{\nu} - g_{\nu}^m \|^2}\right], \nonumber.
% &=& \eta^2 \underbrace{(1-p_x)(1+\nicefrac{1}{s}) (1-\beta)(1-p_u)}_{\eqdef\phi} \sum_{\nu=0}^{t-1}
% \frac{q_1^{t-\nu} - q_2^{t-\nu}}{q_1-q_2} \left[\frac{1}{M}\sum_{m=1}^M\E{\| g_{\nu} - g_{\nu}^m \|^2}\right]. \nonumber
\end{eqnarray}

Next, we bound the gradient term above.
\begin{eqnarray*}
\frac{1}{M}\sum_{m=1}^M \E{\| g_t^m - g_t \|^2}
&=& \frac{1}{M}\sum_{m=1}^M \E{\left\| g_t^m - \frac{1}{M}\sum_{i=1}^K g_t^{i} \right\|^2} \\
&\le& \frac{2}{K}\sum_{m=1}^M \E{\left\| g_t^m - \nabla f_m(x_t^m) - \frac{1}{M}\sum_{i=1}^M (g_{t}^{i} - \nabla f_{i}(x_t^{i})) \right\|^2} \\
&&\quad +\; \frac{2}{M}\sum_{m=1}^M\E{\left\|\nabla f_m(x_t^m) - \frac{1}{M}\sum_{i=1}^M \nabla f_{i}(x_t^{i}) \right\|^2} \\
\textrm{(Lemma \ref{lem:het-var})}&\le& \frac{2}{M}\sum_{m=1}^M \E{\left\| g_t^m - \nabla f_m(x_t^m) \right\|^2}
- 2 \E{\left\| \frac{1}{M}\sum_{m=1}^M (g_{t}^m - \nabla f_m(x_t^m)) \right\|^2} \\
&&\quad +\; \frac{12L^2}{M}\sum_{m=1}^{M} \E\norm{x_t - x_t^m}^2 + 6(B^2-1) \E\|\nabla f(x_t)\|^2 + 6G^2 \\
&\le& 2\sigma^2 + \frac{12L^2}{M}\sum_{m=1}^{M} \E\norm{x_t - x_t^m}^2 + 6(B^2-1) \E\|\nabla f(x_t)\|^2 + 6 G^2.
\end{eqnarray*}

Averaging over the iterates and plugging this bound to the previous one, we get
\begin{eqnarray*}
&& \frac{1}{MT}\sum_{t=0}^{T-1}\sum_{m=1}^M\E{\|x_t - x_t^m\|^2} \\
&\le& \frac{1}{MT}\sum_{t=1}^T\sum_{m=1}^M\E{\|x_t - x_t^m\|^2} \\
% &\le'& \frac{\eta^2\phi}{T} \sum_{t=1}^T\sum_{\tau=0}^{t-1} \frac{q_1^{t-\tau} - q_2^{t-\tau}}{q_1-q_2} \left[\frac{1}{M}\sum_{m=1}^M \E{\| g_{\tau} - g_{\tau}^m \|^2}\right] \\
&\le& \frac{\eta^2 q'_x}{T} \sum_{j=1}^N \omega_j (1-\beta_j)(1-p_j) \sum_{t=1}^T\sum_{\tau=0}^{t-1}
\frac{q_x^{t-\tau} - q_j^{t-\tau}}{q_x-q_j} \left[\frac{1}{M}\sum_{m=1}^M\E{\| g_{\tau} - g_{\tau}^m \|^2}\right] \\
% &='& \frac{\eta^2\phi}{T} \sum_{\tau=0}^{T-1} \sum_{t=\tau+1}^T \frac{q_1^{t-\tau} - q_2^{t-\tau}}{q_1-q_2} \left[\frac{1}{M}\sum_{m=1}^M \E{\| g_{\tau} - g_{\tau}^m \|^2}\right] \\
&=& \frac{\eta^2 q'_x}{T} \sum_{j=1}^N \omega_j (1-\beta_j)(1-p_j) \sum_{\tau=0}^{T-1}\sum_{t=\tau+1}^{T}
\frac{q_x^{t-\tau} - q_j^{t-\tau}}{q_x-q_j} \left[\frac{1}{M}\sum_{m=1}^M\E{\| g_{\tau} - g_{\tau}^m \|^2}\right] \\
% &='& \frac{\eta^2\phi}{T} \sum_{\tau=0}^{T-1} \frac{1}{q_1-q_2}\left( \frac{q_1(1-q_1^{T-\tau})}{1-q_1} - \frac{q_2(1-q_2^{T-\tau})}{1-q_2} \right) \left[\frac{1}{M}\sum_{m=1}^M \E{\| g_{\tau} - g_{\tau}^m \|^2}\right] \\
&=& \frac{\eta^2 q'_x}{T} \sum_{j=1}^N \frac{\omega_j (1-\beta_j)(1-p_j)
}{q_x-q_j} \sum_{\tau=0}^{T-1} \left( \frac{q_x(1-q_x^{T-\tau})}{1-q_x} - \frac{q_j(1-q_j^{T-\tau})}{1-q_j} \right) \left[\frac{1}{M}\sum_{m=1}^M\E{\| g_{\tau} - g_{\tau}^m \|^2}\right] \\
% &\le'& \frac{\eta^2\phi}{T} \sum_{\tau=0}^{T-1} \frac{1}{q_1-q_2}\left( \frac{q_1}{1-q_1} - \frac{q_2}{1-q_2} \right) \left[\frac{1}{M}\sum_{m=1}^M \E{\| g_{\tau} - g_{\tau}^m \|^2}\right] \\
&\le& \frac{\eta^2 q'_x}{T} \sum_{j=1}^N \frac{\omega_j (1-\beta_j)(1-p_j)
}{q_x-q_j} \sum_{\tau=0}^{T-1} \left( \frac{q_x}{1-q_x} - \frac{q_j}{1-q_j} \right) \left[\frac{1}{M}\sum_{m=1}^M\E{\| g_{\tau} - g_{\tau}^m \|^2}\right] \\
% &='& \frac{\eta^2\phi}{(1-q_1)(1-q_2)} \frac{1}{T}\sum_{\tau=0}^{T-1} \left[\frac{1}{M}\sum_{m=1}^M \E{\| g_{\tau} - g_{\tau}^m \|^2}\right] \\
&=& \frac{\eta^2 q'_x}{T} \sum_{j=1}^N \frac{\omega_j (1-\beta_j)(1-p_j)
}{(1-q_x)(1-q_j)} \sum_{\tau=0}^{T-1} \left[\frac{1}{M}\sum_{m=1}^M\E{\| g_{\tau} - g_{\tau}^m \|^2}\right] \\
\end{eqnarray*}

Now, let us optimize the factor
$$
\frac{q'_x}{1-q_x}
= \frac{(1-p_x)(1+\nicefrac{1}{s})}{1-(1-p_x)(1+s)}
% \cdot \frac{(1-\beta)(1-p_u)}{1-(1-p_u)\beta}
$$
by choosing optimal value for $s$ introduced earlier. By the first order optimality condition, we find that the optimal value is $s^* = \frac{1}{\sqrt{1-p_x}}-1$. Hence, the minimal value of the factor is
\begin{eqnarray*}
\frac{q'_x}{1-q_x}
&=& \frac{1-p_x}{(1-\sqrt{1-p_x})^2} \\
&=& \frac{(1-p_x)(1-\sqrt{1-p_x})^2}{(1-\sqrt{1-p_x})^2(1+\sqrt{1-p_x})^2}
= \frac{(1-p_x)(1+\sqrt{1-p_x})^2}{p_x^2}
\le \frac{4(1-p_x)}{p_x^2}.
\end{eqnarray*}

Letting
$$
\psi = \frac{4(1-p_x)}{p_x^2} \sum_{j=1}^N \omega_j\frac{(1-\beta_j)(1-p_j)}{1-q_j}
= \frac{4(1-p_x)}{p_x^2} \sum_{j=1}^N \omega_j\frac{(1-\beta_j)(1-p_j)}{1-(1-p_j)\beta_j}
$$
and continuing the chain of bounds, we get
\begin{eqnarray*}
&& \frac{1}{MT}\sum_{t=0}^{T-1}\sum_{m=1}^M\E{\|x_t - x_t^m\|^2} \\
&\le& \eta^2\psi \cdot \frac{1}{T}\sum_{t=0}^{T-1} \left[\frac{1}{K}\sum_{m=1}^M \E{\| g_t - g_t^m \|^2}\right] \\
&\le& \eta^2\psi \cdot \frac{1}{T}\sum_{t=0}^{T-1} \left[ \frac{12L^2}{M}\sum_{m=1}^{M} \E\norm{x_t - x_{t}^m}^2 + 6(B^2-1) \E\|\nabla f(x_t)\|^2 + 2\sigma^2 + 6 G^2 \right] \\
&\le& 12\eta^2 L^2 \psi \cdot \frac{1}{TM}\sum_{t=0}^{T-1} \sum_{m=1}^{M} \E\norm{x_t - x_{t}^m}^2 \\
&& +\; 6\eta^2 (B^2-1) \psi \cdot \frac{1}{T}\sum_{t=0}^{T-1}\E\|\nabla f(x_t)\|^2
+ 2\eta^2\psi(\sigma^2 + 3G^2).
\end{eqnarray*}
Assuming $12\eta^2 L^2 \psi \le \nicefrac{1}{2}$ and reordering the first term in the bound, we arrive
$$
\frac{1}{MT}\sum_{t=0}^{T-1}\sum_{m=1}^M\E{\|x_t - x_t^m\|^2}
\le 12\eta^2 (B^2-1) \psi \cdot \frac{1}{T}\sum_{t=0}^{T-1}\E\|\nabla f(x_t)\|^2
+ 4\eta^2\psi(\sigma^2 + 3G^2).
$$
\end{proof}

\begin{lemma}\label{lem:het-var} Under smoothness and bounded heterogeneity assumptions \ref{ass:smooth} and \ref{ass:het}, we have
\begin{equation*}
\frac{1}{M}\sum_{m=1}^M \left\|\nabla f_m(y^m) - \frac{1}{K}\sum_{i=1}^K \nabla f_i(y^i)\right\|^2
\leq \frac{6L^2}{M}\sum_{m=1}^{M}\norm{y - y^m}^2 + 3(B^2-1)\|\nabla f(y)\|^2 + 3 G^2,
\end{equation*} 
for any $y^1,\dots,y^m\in\R^d$ and $y = \E_m[y^m]$.
\end{lemma}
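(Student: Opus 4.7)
The plan is to add and subtract $\nabla f_m(y)$ and $\nabla f(y)=\tfrac{1}{M}\sum_i\nabla f_i(y)$ inside the squared norm, obtaining the three-term decomposition
$$
\nabla f_m(y^m) - \tfrac{1}{M}\sum_{i=1}^M\nabla f_i(y^i)
= \underbrace{[\nabla f_m(y^m)-\nabla f_m(y)]}_{A_m}
+ \underbrace{[\nabla f_m(y)-\nabla f(y)]}_{B_m}
- \underbrace{\tfrac{1}{M}\sum_{i=1}^M[\nabla f_i(y^i)-\nabla f_i(y)]}_{C}.
$$
Applying Young's inequality $\|A_m+B_m-C\|^2\le 3\|A_m\|^2+3\|B_m\|^2+3\|C\|^2$ and averaging over $m$, I would bound the three resulting averages separately using the three given assumptions.

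For the $A_m$ average, $L$-smoothness of each $f_m$ (Assumption \ref{ass:smooth}) immediately yields $\tfrac{3}{M}\sum_m\|A_m\|^2\le \tfrac{3L^2}{M}\sum_m\|y-y^m\|^2$. For the $B_m$ average, I would apply the variance-decomposition identity $\tfrac{1}{M}\sum_m\|\nabla f_m(y)-\nabla f(y)\|^2 = \tfrac{1}{M}\sum_m\|\nabla f_m(y)\|^2 - \|\nabla f(y)\|^2$ and then use bounded heterogeneity (Assumption \ref{ass:het}) to get $3(B^2-1)\|\nabla f(y)\|^2+3G^2$. The $C$ term is independent of $m$, so the average over $m$ leaves it unchanged; Jensen's inequality followed by $L$-smoothness then gives $3\|C\|^2 \le \tfrac{3L^2}{M}\sum_i\|y-y^i\|^2$.

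Summing the three bounds yields exactly $\tfrac{6L^2}{M}\sum_m\|y-y^m\|^2+3(B^2-1)\|\nabla f(y)\|^2+3G^2$, matching the claim. There is no substantive obstacle here; the only mildly delicate step is the variance identity used to convert $\tfrac{1}{M}\sum_m\|\nabla f_m(y)-\nabla f(y)\|^2$ into a form to which Assumption \ref{ass:het} directly applies, and this is routine since $\nabla f(y)$ is precisely the mean of $\{\nabla f_m(y)\}$. The choice of the constant $3$ in Young's inequality (rather than general $1+s$) is what produces the specific constants $6L^2$, $3(B^2-1)$, $3G^2$ stated in the lemma.
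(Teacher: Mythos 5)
Your proof is correct and complete. The paper itself does not actually prove this lemma: it states only ``The proof follows from Lemma~5 of [DES-LOC] as the result does not depend on the optimizer,'' deferring to an external reference. Your argument --- decompose $\nabla f_m(y^m) - \frac{1}{M}\sum_i \nabla f_i(y^i)$ into the three pieces $A_m$, $B_m$, $-C$, apply Young's inequality with factor $3$, bound the $A_m$ and $C$ averages by $L$-smoothness (and Jensen for $C$), and convert the $B_m$ average via the variance identity $\frac{1}{M}\sum_m\|\nabla f_m(y)-\nabla f(y)\|^2 = \frac{1}{M}\sum_m\|\nabla f_m(y)\|^2 - \|\nabla f(y)\|^2$ so that Assumption~\ref{ass:het} applies directly --- is exactly the standard route and reproduces the stated constants $6L^2$, $3(B^2-1)$, $3G^2$. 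So you have supplied a self-contained derivation where the paper merely cites one, which is strictly more informative for the reader; it is almost certainly the same argument the cited lemma uses. One minor point worth flagging: the lemma statement as printed contains a typo, writing $\frac{1}{K}\sum_{i=1}^K$ and ``$y^1,\dots,y^m$'' where $K$ should be $M$ and $m$ should be $M$ (as is clear from how the lemma is invoked inside Lemma~\ref{lem:x-xm}); you implicitly and correctly read it that way.
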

\begin{proof}
    The proof follows from Lemma 5 of \citep{DES-LOC} as the result does not depend on the optimizer.
\end{proof}
\section{Wall-Clock Time Modeling}\label{app:sec:wall_time_model}

To assess the practical benefits of our proposal, we analyze its impact on total wall-clock time by modeling two distinct synchronization strategies: a simple unified frequency approach and a desynchronized approach based on optimizer state half-lives. We adopt the model from \texttt{DES-LOC}~\citep{DES-LOC} for estimating total training time.

\subsection{Wall-Clock Time Model}
The total wall-clock time is modeled as the sum of computational and communication time: $t_{\text{total}} = t_{\text{compute}} + t_{\text{comms}}$. The computation time, $t_{\text{compute}}$, is a function of model and dataset size, while the communication time, $t_{\text{comms}}$, depends on the number and size of synchronization events.

For a training process of $T$ total steps, the communication time for an AllReduce operation \citep{Horovod} depends on the payload size, number of workers $M$, bandwidth $B$, and latency $l$. The total time for different methods and strategies is:
\begin{itemize}
    \item[] \textbf{Unified Frequency Methods:} Parameters and all optimizer states are synchronized together every $K$ steps. The total payload is $3d$ (for parameters, first and second momenta). This applies to \localadam and a baseline version of our method, \method (Unified).
    \begin{equation}
    t_{\text{total}, \text{Unified}} = t_{\text{compute}} + \frac{T}{K} \cdot \bigg[ \frac{2(3d)}{B} \bigg(1 - \frac{1}{M}\bigg) + l \bigg]
    \end{equation}
    \item[] \textbf{Half-Life Based Methods:} Parameters ($K_x$), first momentum ($K_u$), and second momentum ($K_v$) are synchronized at different frequencies. This applies to \desloc and our proposed method, \method (Half-Life).
    \begin{equation} 
    t_{\text{total}, \text{Half-Life}} = t_{\text{compute}} + \bigg(\frac{T}{K_x} + \frac{T}{K_u} + \frac{T}{K_v}\bigg) \cdot \bigg[ \frac{2d}{B} \bigg(1 - \frac{1}{M}\bigg) + l \bigg]
    \end{equation}
\end{itemize}
\textbf{Limitation:} This model does not account for any potential overlap between computation and communication.

\subsection{Experimental Configuration}

We compare the two synchronization strategies. The \textbf{Unified Frequency} strategy serves as a baseline, where all states are synchronized together every $K_x=32$ steps. This includes \localadam, \localadopt, and a variant of our method, \method (Unified), which uses a high $\beta_1$ value but is forced to sync at the same frequent rate as its parameters. These methods have equivalent communication costs and will overlap for an iso-token budget, however, the results in \cref{sec:evaluation} show that \method achieves the same perplexity as \localadopt in many fewer optimization steps, outperforming on time-to-perplexity metrics.

The \textbf{Half-Life Based} strategy aims to improve efficiency by synchronizing states less frequently if they change slowly. The synchronization frequency is set based on the state's half-life, $\tau_{0.5}(\beta) = \ln(0.5)/\ln(\beta)$. This includes \desloc and \method (Half-Life). The quasi-hyperbolic (QH) configuration of \method allows it to use an extremely high $\beta_1=0.999$, leading to a very long half-life and thus a much lower communication frequency for its first momentum. We use $\beta_2=0.999$ for ADAM variants and $\beta_2=0.9999$ for ADOPT variants.

Table \ref{tab:hyperparam_configs} details the configurations for both strategies.

\begin{table}[h!]
\centering
\caption{Hyperparameter configurations and synchronization frequencies ($K$) for modeled methods, grouped by synchronization strategy. For the Half-Life strategy, momentum frequencies are set to the closest power of two to their half-life.}
\label{tab:hyperparam_configs}
\resizebox{\textwidth}{!}{%
\begin{tabular}{@{}llcccccc@{}}
\toprule
\textbf{Strategy} & \textbf{Method} & \textbf{$\omega$ Values} & \textbf{$\beta_1$ Values} & \textbf{$\beta_2$ Value} & \textbf{Sync Freq. $K_{u_1}$} & \textbf{Sync Freq. $K_v$} \\ \midrule
\multicolumn{7}{l}{\textit{Unified Frequency (All states sync every $K_x=32$ steps)}} \\ \midrule
Unified & \localadam             & N/A                      & \{0.95\}                 & 0.99                 & 32                         & 32                       \\
Unified & \localadopt            & N/A                      & \{0.95\}                 & 0.9999               & 32                         & 32                       \\
Unified & \methodadam (Unified)  & \{0.95\}                 & \{0.999\}                & 0.999                & 32                         & 32                       \\
Unified & \methodadopt (Unified) & \{0.95\}                 & \{0.999\}                & 0.9999               & 32                         & 32                       \\ \midrule
\multicolumn{7}{l}{\textit{Half-Life Based Frequency (States sync at different rates from $K_x=32$)}} \\ \midrule
Half-Life & \deslocadam            & N/A                      & \{0.95\}                 & 0.99                 & 32                         & 69                       \\
Half-Life & \deslocadopt           & N/A                      & \{0.95\}                 & 0.9999               & 32                         & 6931                     \\
Half-Life & \methodadam (Half-Life)  & \{0.95\}                 & \{0.999\}                & 0.999                & 693                        & 693                      \\
Half-Life & \methodadopt (Half-Life) & \{0.95\}                 & \{0.999\}                & 0.9999               & 693                        & 6931                     \\ \bottomrule
\end{tabular}%
}
\end{table}

\subsection{Modeling Results}
The following figures present the estimated wall-clock time and communication costs when training a 1B model on 4 H100 machines with a batch size of $2$M tokens and sequence length of $2048$. The results demonstrate that \method significantly reduces communication cost with both strategies, with the half-life one being generally more effective.

\begin{figure}[H]
    \centering
    \subfloat[\adam Variants]{\includegraphics[width=0.49\columnwidth]{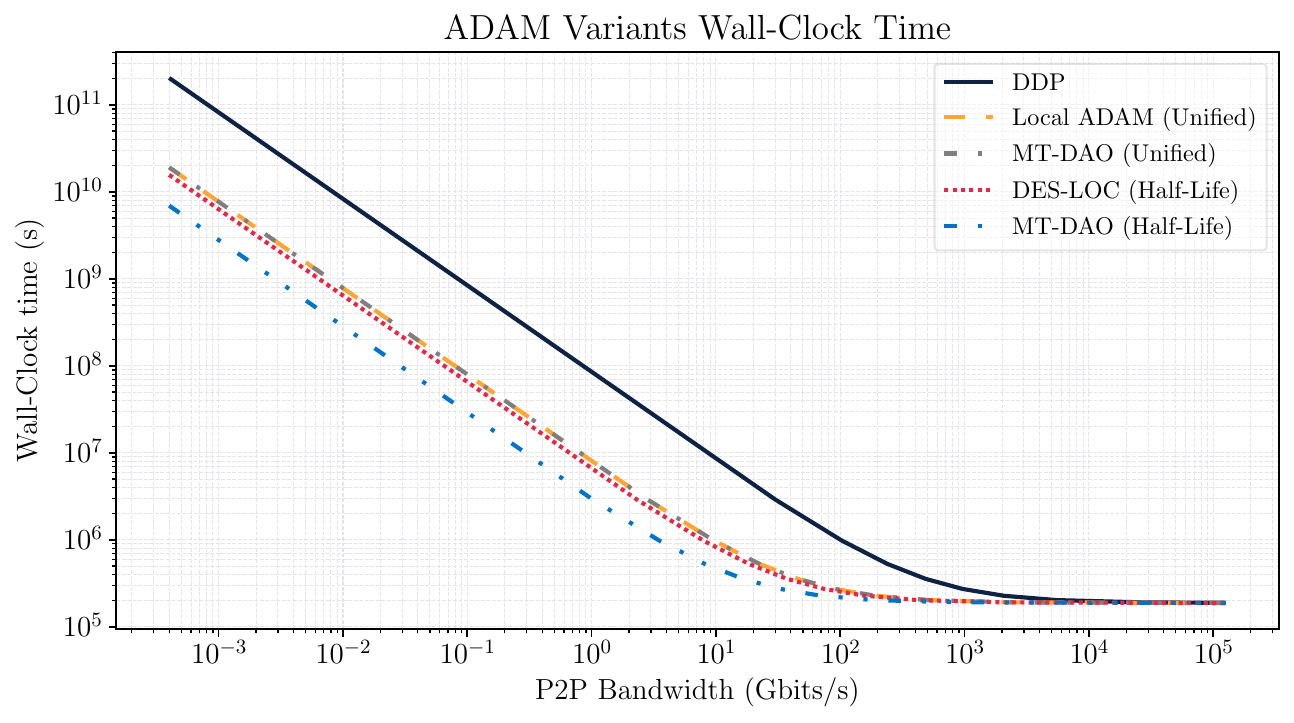}} \hfill
    \subfloat[\adopt Variants]{\includegraphics[width=0.49\columnwidth]{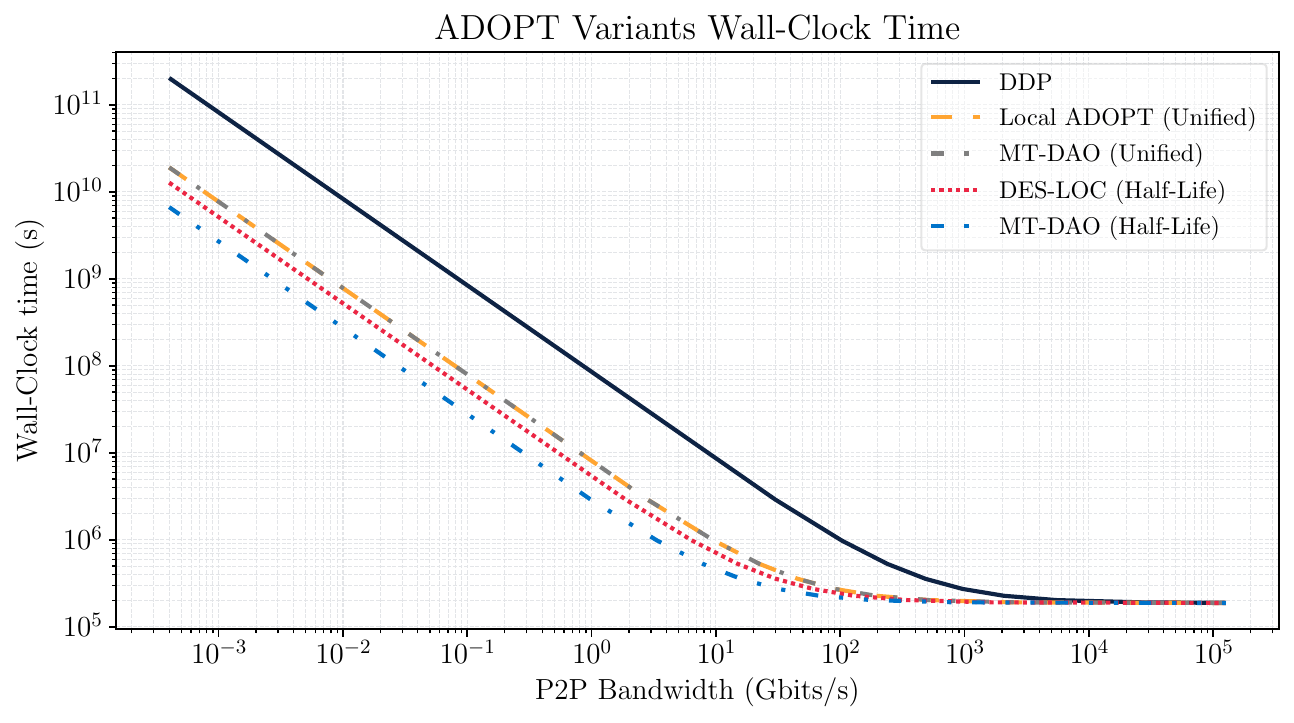}}
    \caption{Estimated total wall-clock time as a function of interconnect bandwidth. For both (a) \adam and (b) \adopt, methods using the Half-Life strategy outperform those using a Unified frequency.}
    \label{fig:wall_clock_time}
\end{figure}

\begin{figure}[H]
    \centering
    \subfloat[\adam Variants]{\includegraphics[width=0.49\columnwidth]{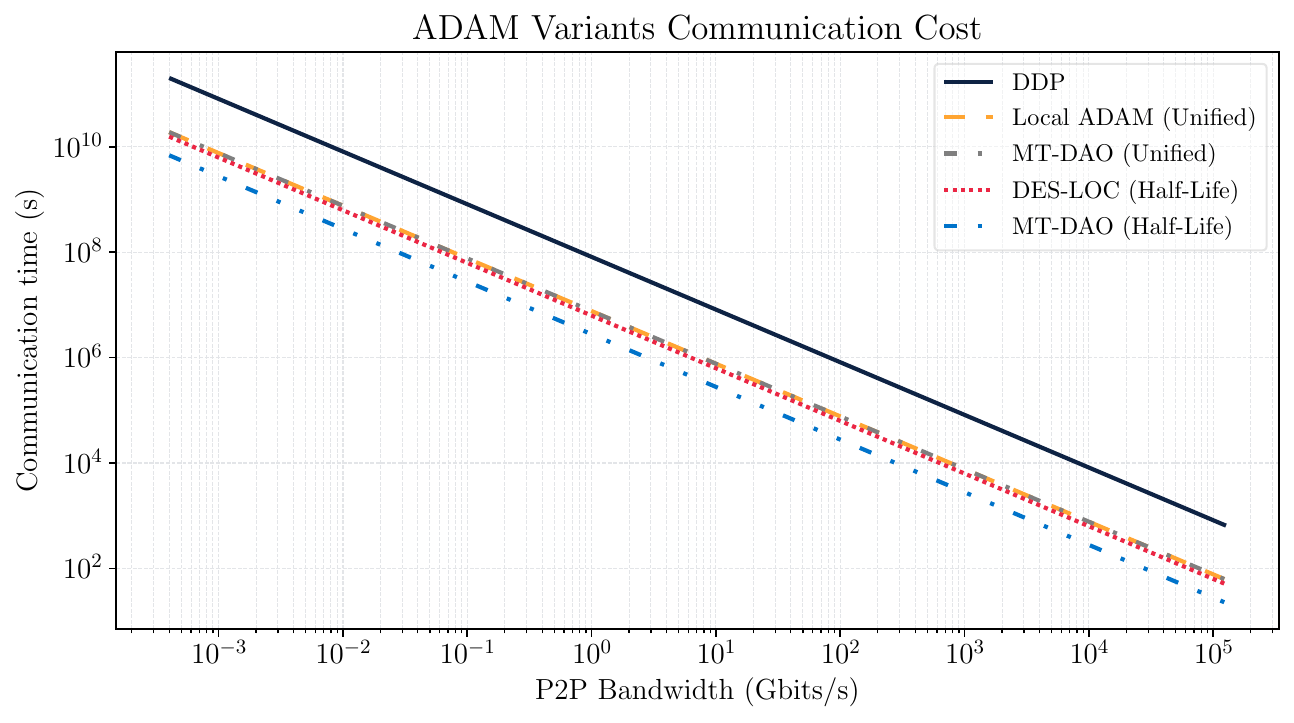}} \hfill
    \subfloat[\adopt Variants]{\includegraphics[width=0.49\columnwidth]{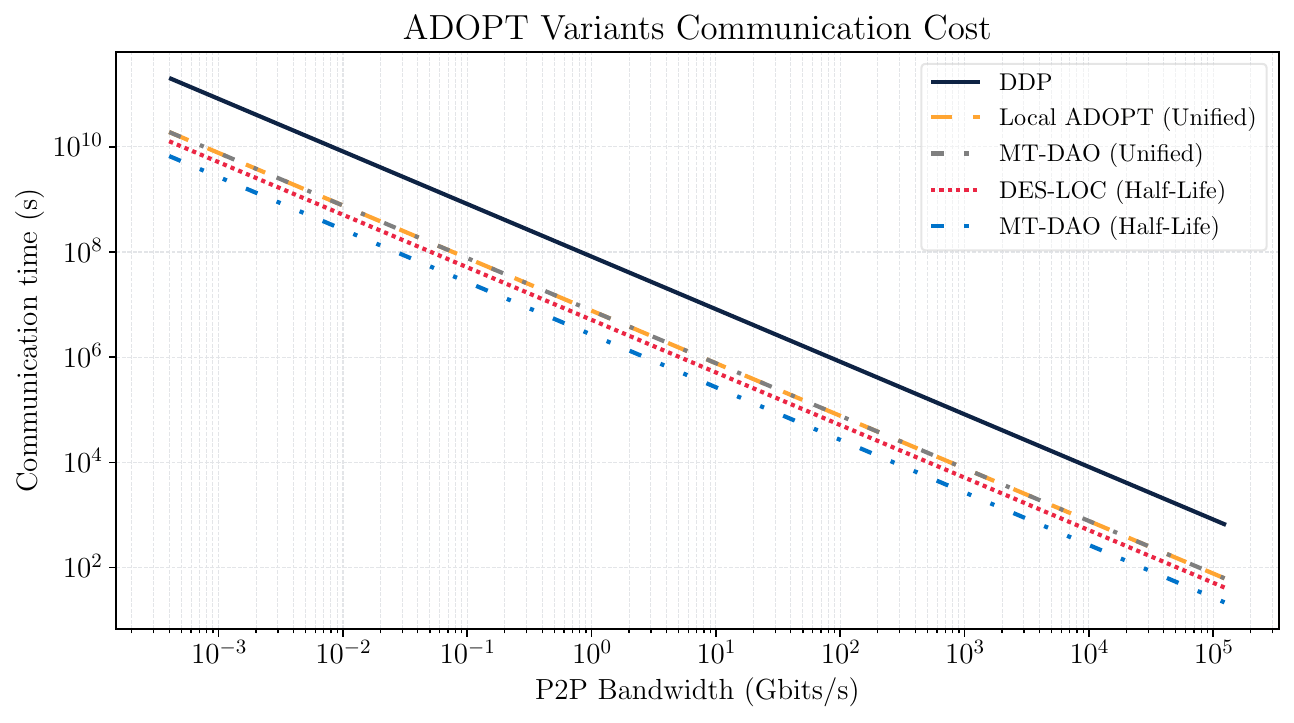}}
    \caption{Estimated total communication time. The plots clearly distinguish the two strategies. The Unified frequency methods (\localadam and \method (Unified)) have identical high costs. The Half-Life methods are more efficient, with \method (Half-Life) being the most efficient due to its ability to leverage a high-$\beta$ momentum that requires infrequent updates.}
    \label{fig:comms_cost}
\end{figure}

\takeawaybox[Takeaway:]{%
\method can significantly reduce communication costs across bandwidths.
}
\section{Derivations of Mutual Information and Variance}\label{app:sec:small_derivations}

This section provides the detailed derivations for the expressions referenced in the main text.

\subsection{Variance of Local Momentum}

The variance of the final local momentum, $\text{Var}(u_{t+K})$, is derived under the assumption that the stochastic gradients $g_t$ are independent and identically distributed random variables with variance $\sigma_m^2$.

The unrolled momentum update over $K$ local steps is given by:
$$u_{t+K} = \beta^K u_t + (1-\beta)\sum_{k=0}^{K-1} \beta^k g_{t+K-k}$$
The variance is calculated with respect to the randomness in the local gradients $\{g_{t+1}, \dots, g_{t+K}\}$. The initial momentum $u_t$ is treated as a constant, as it is a synchronized state before local updates begin.

Applying the variance operator:
$$\text{Var}(u_{t+K}) = \text{Var}\left(\beta^K u_t + (1-\beta)\sum_{k=0}^{K-1} \beta^k g_{t+K-k}\right)$$
Since $u_t$ is constant, $\text{Var}(\beta^K u_t) = 0$. Using the property $\text{Var}(aX) = a^2\text{Var}(X)$:
$$\text{Var}(u_{t+K}) = (1-\beta)^2 \text{Var}\left(\sum_{k=0}^{K-1} \beta^k g_{t+K-k}\right)$$
Given the assumption that the gradients $g_{t+i}$ are independent, the variance of their weighted sum is the weighted sum of their variances, where weights are squared:
$$\text{Var}\left(\sum_{k=0}^{K-1} \beta^k g_{t+K-k}\right) = \sum_{k=0}^{K-1} \text{Var}(\beta^k g_{t+K-k}) = \sum_{k=0}^{K-1} (\beta^k)^2 \text{Var}(g_{t+K-k})$$
Assuming each local gradient has variance $\sigma_m^2$:
$$\text{Var}\left(\sum_{k=0}^{K-1} \beta^k g_{t+K-k}\right) = \sum_{k=0}^{K-1} \beta^{2k} \sigma_m^2 = \sigma_m^2 \sum_{k=0}^{K-1} (\beta^2)^k$$
The summation is a finite geometric series, $\sum_{i=0}^{n-1} r^i = \frac{1-r^n}{1-r}$. With $r = \beta^2$ and $n=K$:
$$\sum_{k=0}^{K-1} (\beta^2)^k = \frac{1 - (\beta^2)^K}{1-\beta^2} = \frac{1 - \beta^{2K}}{1-\beta^2}$$
Substituting this back into the expression for $\text{Var}(u_{t+K})$:
$$\text{Var}(u_{t+K}) = (1-\beta)^2 \sigma_m^2 \frac{1 - \beta^{2K}}{1-\beta^2}$$
By factoring the denominator $1-\beta^2 = (1-\beta)(1+\beta)$, we can simplify the expression:
$$\text{Var}(u_{t+K}) = (1-\beta)^2 \sigma_m^2 \frac{1 - \beta^{2K}}{(1-\beta)(1+\beta)} = \frac{1-\beta}{1+\beta}(1-\beta^{2K})\sigma_m^2$$
This completes the derivation.

\subsection{Mutual Information}

The mutual information $I(U_{t+K}; U_t)$ is derived by modeling the momentum states as multivariate Gaussian random vectors. The model for the update process is:
$$U_{t+K} = \beta^K U_t + L$$
The following assumptions are made:
\begin{enumerate}
    \item The initial momentum $U_t$ is a Gaussian random vector with zero mean and covariance $\Sigma_{U_t}$, i.e., $U_t \sim \mathcal{N}(0, \Sigma_{U_t})$.
    \item The accumulated local gradient noise $L$ is a Gaussian random vector with zero mean and covariance $\Sigma_L$, i.e., $L \sim \mathcal{N}(0, \Sigma_L)$.
    \item $U_t$ and $L$ are statistically independent.
\end{enumerate}
The mutual information between two random vectors $X$ and $Y$ is defined as $I(X; Y) = h(Y) - h(Y|X)$, where $h(\cdot)$ is the differential entropy. For a $d$-dimensional Gaussian vector $Z \sim \mathcal{N}(\mu, \Sigma)$, the entropy is $h(Z) = \frac{1}{2} \log \det(2\pi e \Sigma)$.

First, we determine the distribution of $U_{t+K}$. As a linear combination of independent Gaussian vectors, it is also Gaussian.
\begin{itemize}
    \item \textbf{Mean}: $\mathbb{E}[U_{t+K}] = \mathbb{E}[\beta^K U_t + L] = \beta^K \mathbb{E}[U_t] + \mathbb{E}[L] = 0$.
    \item \textbf{Covariance}: $\text{Cov}(U_{t+K}) = \text{Cov}(\beta^K U_t + L)$. Due to the independence of $U_t$ and $L$:
    $$
    \Sigma_{U_{t+K}} = \text{Cov}(\beta^K U_t) + \text{Cov}(L) = \beta^{2K}\Sigma_{U_t} + \Sigma_L
    $$
\end{itemize}
Thus, $U_{t+K} \sim \mathcal{N}(0, \beta^{2K}\Sigma_{U_t} + \Sigma_L)$.

The entropy of $U_{t+K}$ is:
$$h(U_{t+K}) = \frac{1}{2} \log \det\left(2\pi e (\beta^{2K}\Sigma_{U_t} + \Sigma_L)\right)$$
Next, we determine the conditional entropy $h(U_{t+K}|U_t)$. The distribution of $U_{t+K}$ conditioned on a specific value $U_t = u_t$ is:
$$U_{t+K} | U_t=u_t \sim \mathcal{N}(\beta^K u_t, \Sigma_L)$$
The entropy of this conditional distribution is:
$$h(U_{t+K} | U_t=u_t) = \frac{1}{2} \log \det(2\pi e \Sigma_L)$$
Since this expression does not depend on the specific value $u_t$, the conditional entropy $h(U_{t+K}|U_t)$ is the same.

Now, we compute the mutual information:
$$I(U_{t+K}; U_t) = h(U_{t+K}) - h(U_{t+K}|U_t)$$
$$I(U_{t+K}; U_t) = \frac{1}{2} \log \det\left(2\pi e (\beta^{2K}\Sigma_{U_t} + \Sigma_L)\right) - \frac{1}{2} \log \det(2\pi e \Sigma_L)$$
Using the logarithmic property $\log a - \log b = \log(a/b)$:
$$I(U_{t+K}; U_t) = \frac{1}{2} \log \left( \frac{\det(2\pi e (\beta^{2K}\Sigma_{U_t} + \Sigma_L))}{\det(2\pi e \Sigma_L)} \right)$$
The constant factors $(2\pi e)^d$ cancel out. Using the determinant property $\frac{\det(A)}{\det(B)} = \det(AB^{-1})$:
$$I(U_{t+K}; U_t) = \frac{1}{2} \log \det\left( (\beta^{2K}\Sigma_{U_t} + \Sigma_L) \Sigma_L^{-1} \right)$$
Distributing $\Sigma_L^{-1}$ inside the determinant:
$$I(U_{t+K}; U_t) = \frac{1}{2} \log \det\left( \beta^{2K}\Sigma_{U_t}\Sigma_L^{-1} + \Sigma_L\Sigma_L^{-1} \right)$$
$$I(U_{t+K}; U_t) = \frac{1}{2} \log \det\left( I + \beta^{2K}\Sigma_{U_t}\Sigma_L^{-1} \right)$$
This completes the derivation.

\section{Extended Related Work}\label{app:sec:extended_related_work}

\paragraph{Strategies for Communication-Efficient Distributed Training.}
A substantial body of research aims to curtail communication overhead in distributed training, primarily by either reducing the frequency of synchronizations or compressing the data transmitted per round. The first approach, often termed periodic or local SGD, involves performing multiple local optimization steps between global aggregations. This strategy has been extensively analyzed in both IID and non-IID contexts (see \citet{kairouz2021advancesopenproblemsfederated} for a survey and \citet{lin2018don}). In the realm of foundation-model pre-training, methods like \textbf{DiLoCo} \citep{DiLoCoScalingLaws} have shown that infrequent synchronization can, with careful tuning, achieve performance comparable to or better than standard data parallelism, with scaling laws characterizing its behavior across model sizes \citep{DiLoCoScalingLaws}. This paradigm has also been adapted for federated-style pre-training \citep{Photon} and variants with overlapping or eager updates \citep{StreamingDiLoCo,kale2025eager}. The second strategy involves compressing communication payloads. Techniques range from randomized quantization (\textbf{QSGD}) \citep{QSGD_Alistarh_2017} and sparse updates tailored for non-IID data (\textbf{STC}, \textbf{ZeroFL}) \citep{sattler2019robust,zerofl} to one-bit aggregation (\textbf{signSGD-MV}) \citep{signSGD}. In practice, these two strategies are often combined; for instance, \textbf{FedPAQ} integrates local training with quantization and partial participation to provide strong theoretical guarantees \citep{reisizadeh2020fedpaq}.

\paragraph{Multi-Timescale Momentum for Temporal Mismatches.}
The temporal discrepancy between frequent local updates and infrequent global synchronizations creates a need for optimizers that can integrate information across different timescales. Standard momentum, while beneficial in low-curvature landscapes \citep{NesterovIlya}, imposes a compromise: low decay values are responsive but slow, whereas high decay values are fast but prone to oscillations \citep{AggMo}. A single exponential moving average (EMA) cannot effectively weight both recent and distant gradients \citep{AdEMAMix}. Multi-timescale optimizers address this limitation. \textbf{Quasi-Hyperbolic Momentum (QHM)} decouples the current gradient's weight from the momentum decay rate ($\beta$) \citep{QHM}, recovering methods like Nesterov and Triple Momentum \citep{Scoy2018TripleMomentum}. \textbf{Aggregated Momentum (AggMo)} maintains and averages multiple momentum buffers with distinct $\beta$ values, using faster-decaying terms to passively damp oscillations caused by slower, more aggressive terms \citep{AggMo}. Similarly, \textbf{AdEMAMix} mixes a fast EMA with an ultra-slow one (e.g., $\beta_3 = 0.9999$), demonstrating that long-term gradient memory significantly reduces catastrophic forgetting in language models \citep{AdEMAMix}. This principle of leveraging multiple timescales is also present in other contexts. Optimizers like \texttt{Grokfast} \citep{Lee2024Grokfast} and \texttt{AdMeta} \citep{Chen2023bAdMeta} employ nested EMAs for different purposes, providing orthogonal evidence for the value of long-term momentum. While these methods have shown promise in step-wise synchronous training, their potential to resolve the temporal mismatch in communication-efficient distributed optimization remains largely unexplored.

\paragraph{Perspectives from Federated Optimization.}
The field of Federated Learning (FL), particularly in the cross-device setting, offers a rich history of methods for managing statistical heterogeneity and communication constraints, which are central challenges. The foundational \textbf{FedAvg} algorithm \citep{fedavg} has inspired numerous successors (see survey by \citet{kairouz2021advancesopenproblemsfederated}). To counteract \emph{client drift} caused by non-IID data, \textbf{FedProx} introduces a proximal regularizer for stability \citep{FedProx}, \textbf{SCAFFOLD} employs control variates to reduce gradient variance \citep{pmlr-v119-karimireddy20a}, and \textbf{FedNova} normalizes local updates to correct for objective inconsistency \citep{Wang2020TacklingOptimization}. Server-side momentum (\textbf{FedAvgM}) has also been shown to stabilize aggregation under data skew \citep{fedavgm}. Adaptive methods have been extended to this setting in \textbf{Adaptive Federated Optimization} (\textsc{FedOpt}), which provides nonconvex guarantees for \emph{FedAdam}, \emph{FedYogi}, and \emph{FedAdagrad} \citep{Adam}. Furthermore, \textbf{Mime} adapts centralized algorithms to FL by marrying control variates with server statistics \citep{mime}. Personalization techniques, such as meta-learning-based \textbf{Per-FedAvg} \citep{fallah2020personalized} and \textbf{FedL2P} \citep{lee2023fedl2p} or the regularized \textbf{Ditto} \citep{li2021ditto}, complement these global models by improving per-client utility.

\paragraph{Orthogonal Approaches in Payload Compression and Optimizer Design.}
Orthogonal to reducing synchronization frequency, another line of work focuses on compressing the communication payload itself, often in combination with periodic training. Foundational methods include quantization, as in \textbf{QSGD} \citep{QSGD_Alistarh_2017}, and sparsification, as in \textbf{Deep Gradient Compression} \citep{DeepGradientCompression_Lin_2017}, with convergence analyses providing theoretical grounding \citep{alistarh2018convergence}. More recent work like \textbf{CocktailSGD} combines random and top-$k$ sparsification with quantization for aggressive compression during LLM fine-tuning \citep{wang2023cocktailsgd}. Beyond compressing gradients, some methods compress the optimizer \emph{states}. For instance, \textbf{LDAdam} performs adaptive updates using low-rank approximations of gradient statistics \citep{LDAdam_Robert_2023}, while \textbf{DeMo} decouples momentum across workers and communicates only selected components \citep{peng2024decoupled}. Other advanced optimizers aim for stability and efficiency through different mechanisms; for example, \texttt{Lion} uses a sign function with interpolated momentum \citep{Chen2023Lion}, and \texttt{Sophia} employs a Hessian-based pre-conditioner to temper step sizes in high-curvature directions \citep{Liu2023Sophia}. These approaches are generally compatible with and can be composed with infrequent synchronization strategies.

\section{Additional Results}\label{app:additional_results}
To investigate the stability of \method under varied momentum parameterizations, we now examine its performance in a fast $\beta$ regime. Figure \ref{fig:toy_example_method_fast} presents the results of this comparison, plotting both the convergence rate in terms of distance to the optimum and the optimization trajectories on the function's contour plot.

\begin{figure}[h] \centering \begin{subfigure}[b]{0.48\textwidth} \centering \includegraphics[width=\linewidth]{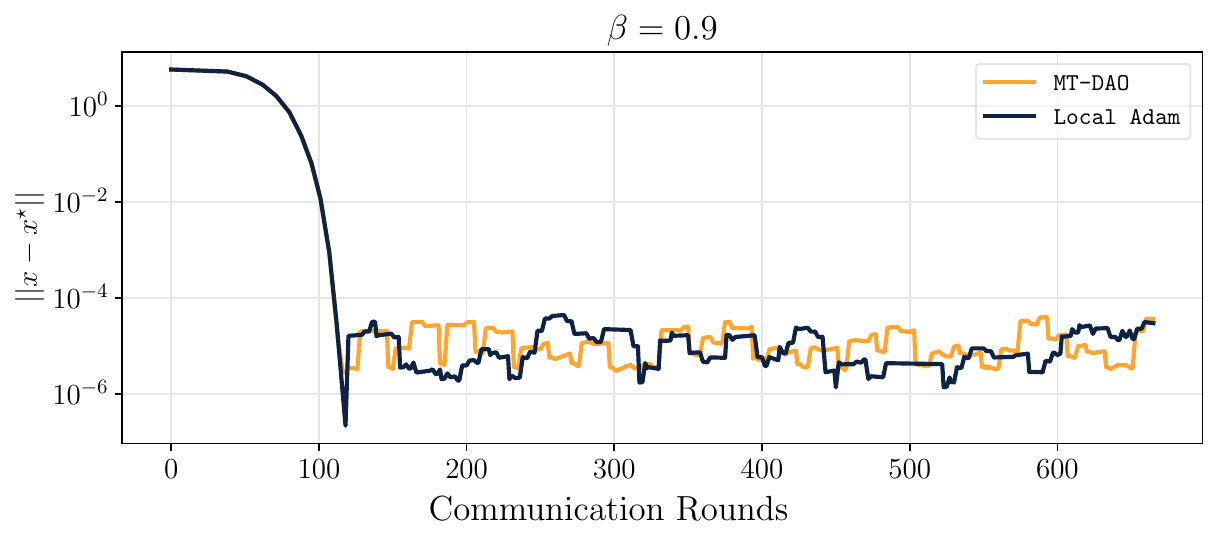} \caption{Distance to optimum vs. steps.} \label{fig:toy_dist_vs_steps_fast} \end{subfigure} \hfill %
\begin{subfigure}[b]{0.48\textwidth} \centering \includegraphics[width=\linewidth]{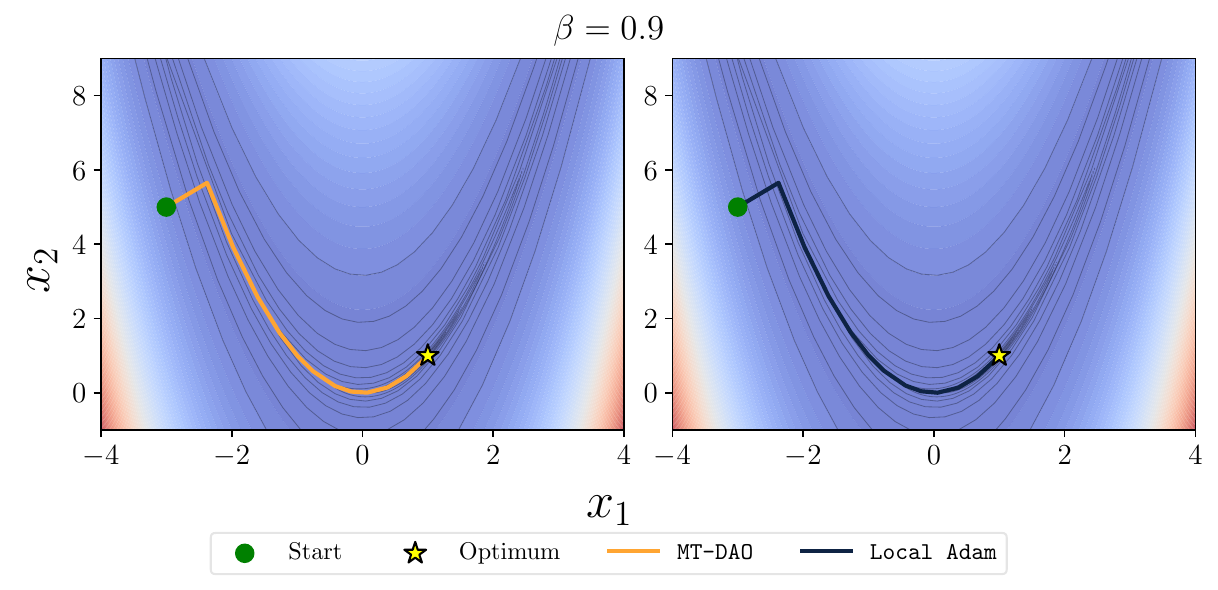} \caption{Contour plot of trajectories.} \label{fig:toy_contour_fast} \end{subfigure} \caption{\method remains stable in both fast $\beta$ regimes, as pictured here, and in slow $\beta$ regimes as in Figure \ref{fig:toy_example_method_slow}. This is unlike prior stateful methods like \texttt{Local Adam} which only offer stable convergence for fast $\beta$ values As before, we optimize the non-convex Rosenbrock function $f(x_1, x_2) = (1 - x_1)^2 + 100(x_2 - x_1^2)^2$ with $M=256$ workers and IID Gaussian noise ($\sigma=2$).} \label{fig:toy_example_method_fast} \end{figure}

\section{\texttt{LLM} Usage Declaration}
As declared in the submission form, \texttt{LLM}s were used in this work in order to aid or polish writing and for retrieval and discovery of related work. We used \texttt{GPT-5} and \texttt{Gemini 2.5 PRO} primarily to abbreviate or rephrase text or to evaluate the clarity of our writing and provide guidance on areas of improvement. We also used the deep research feature present in both models in order to discover, but not describe or interpret, additional papers for our extended literature review in \cref{app:sec:extended_related_work}. Finally, we used both models to generate plotting code and as general code assistants.

\end{document}